\pdfoutput=1
\documentclass[11pt]{article}

\usepackage[margin=1in]{geometry}

\usepackage[T1]{fontenc}
\usepackage[utf8]{inputenc}
\usepackage{microtype}

\usepackage{mathtools,amssymb,amsthm}
\usepackage{xcolor}
\usepackage{tikz}
\usepackage{enumitem}

\usepackage[numbers]{natbib}

\usepackage{hyperref}
\hypersetup{
    colorlinks,
    linkcolor={red!50!black},
    citecolor={blue!50!black},
    urlcolor={blue!80!black}
}
\usepackage[nameinlink,noabbrev]{cleveref}

\newcommand{\sasha}[1]{}
\newcommand{\comment}[1]{}

\theoremstyle{plain}
\newtheorem{theorem}{Theorem}[section]
\newtheorem{example}{Example}[section]
\newtheorem{maintheorem}{Main Theorem}
\newtheorem{lemma}[theorem]{Lemma}
\newtheorem{proposition}[theorem]{Proposition}
\newtheorem{corollary}[theorem]{Corollary}

\theoremstyle{definition}
\newtheorem{definition}[theorem]{Definition}

\theoremstyle{remark}
\newtheorem{remark}[theorem]{Remark}

\newenvironment{restatedtheorem}[1]
  {\par\noindent\textbf{Theorem~\ref{#1}.}\itshape}
  {\par}

\newenvironment{restatedlemma}[1]
  {\par\noindent\textbf{Lemma~\ref{#1}.}\itshape}
  {\par}

\newtheorem*{theorem*}{Theorem}
\newtheorem*{lemma*}{Lemma}
\newtheorem*{proposition*}{Proposition}
\newtheorem*{corollary*}{Corollary}
\newtheorem*{claim*}{Claim}
\newtheorem*{definition*}{Definition}
\newtheorem*{assumption*}{Assumption}
\newtheorem*{question*}{Question}
\newtheorem*{remark*}{Remark}
\newtheorem*{example*}{Example}

\DeclareMathOperator{\VCdim}{\mathbb{VC}}

\newcommand{\R}{\mathbb{R}}
\newcommand{\N}{\mathbb{N}}
\newcommand{\Z}{\mathbb{Z}}

\newcommand{\cF}{\mathcal{F}}

\newcommand{\cH}{\mathcal{H}}

\newcommand{\cN}{\mathcal{N}}

\newcommand{\cS}{\mathcal{S}}

\newcommand{\erf}{\operatorname{erf}}
\newcommand{\Poly}{\operatorname{Poly}}
\newcommand{\rPfaff}{\operatorname{rPfaff}}
\newcommand{\rexp}{\operatorname{rexp}}

\title{Strategic PAC Learnability via Geometric Definability}

\author{
Yuval Filmus\textsuperscript{1,2}
\and
Shay Moran\textsuperscript{1,2,3,4}
\and
Elizaveta Nesterova\textsuperscript{1}
\and
Nir Rosenfeld\textsuperscript{2}
\and
Alexander Shlimovich\textsuperscript{1}
\\[1ex]
}

\date{}

\begin{document}
\maketitle

\footnotetext[1]{Faculty of Mathematics, Technion -- Israel Institute of Technology.}
\footnotetext[2]{Faculty of Computer Science, Technion -- Israel Institute of Technology.}
\footnotetext[3]{Faculty of Data and Decision Sciences, Technion -- Israel Institute of Technology.}
\footnotetext[4]{Google Research.}

\begingroup
\renewcommand\thefootnote{}
\footnotetext{
Emails:
\texttt{yuvalfi@cs.technion.ac.il},
\texttt{smoran@technion.ac.il},
\texttt{elizavetan@campus.technion.ac.il},\\
\texttt{nirr@cs.technion.ac.il},
\texttt{ashlimovich@campus.technion.ac.il}
}
\endgroup

\begin{abstract}
Strategic classification studies learning settings in which individuals can modify their features, at a cost, in order to influence the classifier’s decision. A central question is how the sample complexity of the induced (strategic) hypothesis class depends on the complexities of the underlying hypothesis class and the cost structure governing feasible manipulations. 
Prior work has shown that in several natural settings, such as linear classifiers with norm costs, the induced complexity can be controlled.
We begin by showing that such guarantees fail in general --- even in simple cases: there exist hypothesis classes of VC dimension $1$ on the real line such that, even under the simplest interval neighborhoods, the induced class has infinite VC dimension. Thus, strategic behavior can turn an easy learning problem into a non-learnable one.
To overcome this, we introduce structure via a geometric definability assumption: both the hypothesis class and the cost-induced neighborhood relation can be defined by first-order formulas over $\mathbb R_{\mathtt{exp}}$. Intuitively, this means that hypotheses and costs can be described using arithmetic operations, exponentiation, logarithms, and comparisons. This captures a broad range of natural classes and cost functions, including $\ell_p$ distances, Wasserstein distance, and information-theoretic divergences.
Under this assumption, we prove that learnability is preserved, with sample complexity controlled by the complexity of the defining formulas.
\end{abstract}

\section{Introduction}
The field of strategic classification studies learning in settings where individuals can modify their features in response to a classifier, at some cost, in order to obtain a favorable outcome \citep{bruckner2012static,hardt2016strategic}.  
For example, consider a university that designs a rule for admissions based on applicants’ features such as grades, test scores, or extracurricular activities. Once this rule is made public, prospective students may adapt their behavior to meet the requirements --- by studying harder, retaking exams, or investing in private tutoring. While such adaptations may improve their chances of admission, they also change the distribution of inputs seen by the classifier. The university’s true goal, however, is to admit students who will successfully graduate and contribute to society, rather than those who merely optimize for the admission rule.  

This illustrates an inherent tension between systems that aim to make accurate predictions and individuals who benefit from being classified as positive. Applications of this kind arise in many domains, including loan approval, hiring, university admissions, scholarships, social benefit programs, and medical eligibility. The central objective in strategic classification is therefore to learn a classifier that is accurate under such strategic behavior, i.e., achieves strategic robustness.

A fundamental question in learning theory is which hypothesis classes are learnable, and 
{how many samples are needed.}
Despite over a decade of sustained research,
the implications of strategic behavior on learnability and sample complexity
remain poorly understood.
This gap persists even in simple and well-studied settings,
where the cost function is uniform across individuals,
feature manipulation is unconstrained by causal relationships,
and the classifier is fully known to the agents.

A natural intuition is that if a hypothesis class is learnable and the cost function is ``simple,''
then the induced strategic class should also be learnable,
since local input perturbations are unlikely to cause significant statistical degradation.
Indeed, the literature provides several positive results supporting this view:
linear classes with semi-norm costs \citep{zhang2021incentive,sundaram2023pac};
polytope classes, piecewise-linear classes, and classes closed under input scaling with $\ell_p$ costs \citep{trachtenberg2025nonlinear};
and general classes with separable costs \citep{hardt2016strategic}
or discrete manipulation graphs \citep{cohen2024learnability}.

However, these results stop short of resolving the basic general question:
\emph{does learnability of a hypothesis class, together with a well-behaved cost structure, imply learnability under strategic behavior?}

\paragraph{Combinatorial simplicity is not enough.}
Our first result shows that, in full generality, the answer to the above question is negative.
We consider perhaps the simplest possible setting:
a hypothesis class with VC dimension $1$, defined over the real line,
together with a uniform and highly regular cost structure in which each point $x$ can move within distance at most $1$ (i.e., its neighborhood is an interval of radius $1$ around $x$).
Despite this extreme simplicity,
we construct a class for which the induced strategic class is not learnable. 

\emph{This result demonstrates that learnability is not preserved under strategic behavior,
even under minimal combinatorial complexity and very simple cost structures.}

\paragraph{Geometric structure restores learnability.}
The negative result above suggests that strategic learnability requires structure beyond
combinatorial simplicity.
We therefore turn to a broad and well-studied family of learning problems
with \emph{geometric structure}.

By geometric, we mean that the instance space is $\mathbb{R}^d$,
and that both the hypothesis class~$\mathcal{H}$ and the cost function
are described using a fixed collection of basic operations,
such as the arithmetic operations, comparison operators,
and functions such as $\exp$ and $\log$.
This setting captures many standard classes in machine learning,
including linear classifiers, polynomial threshold functions,
intersections of halfspaces,
decision trees with polynomial decision nodes, neural networks with common activation functions,
as well as cost functions based on distances such as $\ell_p$ norms or KL divergence.
\looseness=-1

Within this framework, we focus on a basic and canonical strategic setting:
each point $x$ is associated with a cost-induced neighborhood $N_x$ to which it can move freely,
but cannot move outside of it---i.e., $N_x$ is the set of points to which moving is potentially cost-effective.
Our goal is to understand when the induced strategic class is learnable,
and to characterize its sample complexity.
Despite its simplicity, this setting is already non-trivial,
and serves as a natural platform for developing our approach.
Moreover, we believe the techniques we introduce extend beyond this setting
to more general models of strategic behavior.

A key feature of strategic classification is that it naturally introduces \emph{existential quantification}.
Indeed, for a hypothesis $h$, let $h'$ denote the induced strategic hypothesis.
An input $x$ is labeled positively after adaptation
if and only if there exists a point $x'$ in its neighborhood
that is labeled positively by $h$:
\begin{equation}\label{eq:exists}
 h'(x)=+1 \iff \exists x' \text{ s.t. } (x' \in N_x) \wedge (h(x')=+1).
\end{equation}
Thus, the induced strategic hypotheses are defined by formulas
that include existential quantifiers.
This observation plays a central role in our approach.

To formalize our geometric setting, we model hypotheses and neighborhoods
as being \emph{definable} by first-order formulas over the reals.
As a first intuition, this can be viewed as specifying a procedure that,
given $h \in \mathcal{H}$ and points $x,x'$,
determines whether $h(x)=1$ and whether $x'$ lies in the neighborhood of $x$
using only the allowed operations (i.e., arithmetic operations, comparisons, $\exp$, and $\log$).
However, this intuition is not entirely accurate:
allowing quantifiers extends this framework beyond purely algorithmic descriptions.
For example, a formula of the form $\psi(x)=\exists x'\,\varphi(x,x')$
can express properties that require searching for a witness in an infinite set,
and are therefore not \emph{a priori} constructive---i.e., it is not clear how to compute them.
\looseness=-1

We note that a closely related line of work in learning theory has studied (non-strategic) hypothesis classes
defined using arithmetic operations alone, without $\exp$ and $\log$.
In this more restrictive setting, terms correspond to polynomials,
and tools from real algebraic geometry have been used to analyze their complexity; a partial list of papers includes
\citep{ben1993localization,Goldberg1995,bartlett98,Ben-DavidES02,AlonMY16,bartlett19,Alon23}.
Much of this work focuses on quantifier-free definitions,
a simplification closely tied to the availability of quantifier elimination
in those settings~\cite{tarski1951decision}.
Extending this framework to include exponentiation substantially increases expressiveness,
and requires different tools, which we draw from geometry and model theory.

\subsection{Overview of Main Results}
We now summarize our main contributions.

\begin{itemize}[leftmargin=1em,topsep=0em,itemsep=0.2em]

\item \textbf{Combinatorial simplicity does not suffice.}
Our first result shows that strategic learnability can fail even in the simplest possible settings.
We construct a hypothesis class over $\mathbb{R}$ with VC dimension $1$,
together with a very simple cost structure in which each point $x$ can move within a fixed radius such that the induced strategic class is not PAC-learnable.
In Appendix~\ref{app:learnability_breaks} we present stronger variants of this phenomenon: one in which non-learnability holds for all radii simultaneously, and another in which the neighborhood system itself has VC dimension $1$, and in Appendix~\ref{app:sec:role-of-definability} we give a different example whose logical complexity is low. These demonstrate that low combinatorial complexity and simple costs
are not sufficient to guarantee strategic learnability.

\item \textbf{Qualitative learnability via definability in $\mathbb{R}_{\exp}$.}
Our first positive result shows that learnability is restored under a geometric definability assumption.
Specifically, if both the hypothesis class and the neighborhood relation
are definable using arithmetic operations together with $\exp$ and $\log$
(i.e., within the structure $\mathbb{R}_{\exp}$),
then the induced strategic class is PAC-learnable.
Moreover, the sample complexity depends on the number of parameters
used in the defining formulas.
Our analysis yields refined bounds for \emph{Empirical Risk Minimization} (ERM) learners
that can be sharper than standard VC-based guarantees,
as they are derived directly from growth function bounds
rather than via Sauer--Shelah--Perles Lemma.

\item \textbf{From qualitative to quantitative bounds.}
The above result is inherently qualitative:
it guarantees the existence of finite sample complexity bounds,
and even yields asymptotic ERM learning rates that improve over VC-based guarantees.
However, it does not provide explicit bounds on the constants appearing in the bounds on the sample complexity and VC~dimension.
This limitation stems from the inherent non-constructive nature of the underlying model-theoretic tools,
and motivates the search for settings in which explicit, quantitative bounds can be obtained.

\item \textbf{Quantitative bounds via restricted definability.}
Our next results address this limitation by providing explicit and constructive sample complexity bounds
for large families of strategically defined geometric classes.
We focus on settings in which the defining formulas lie in $\Sigma_1$,
i.e., use only existential quantifiers.
The form in~\Cref{eq:exists} shows that existential formulas naturally define
the strategic transformation.
Existential formulas can model many natural examples,
including neighborhoods based on $\ell_p$ norms, earth mover distance, KL divergence, and related distances.
We provide detailed examples in Appendix~\ref{sec:examples}.

As a warm-up, we first consider the case without exponentiation,
corresponding to semialgebraic definitions.
In this setting, definable sets reduce to Boolean combinations of polynomial inequalities,
and a single application of quantifier elimination suffices
to eliminate the existential quantifier in~\Cref{eq:exists}.
Combining this with classical tools from real algebraic geometry,
we obtain explicit bounds on the sample complexity and VC~dimension.

We then extend this approach to the setting with exponentiation.
Here, polynomials are replaced by a richer class of real-valued functions for which analogous complexity bounds are known.
Using these tools, we derive explicit upper bounds on the VC dimension
and sample complexity of the induced strategic class.

\end{itemize}

\subsection{Organization}
In \Cref{sec:prel} we provide basic preliminaries from PAC learning and strategic classification.
In \Cref{sec:main-results} we state and explain our main results in detail.
We do not assume prior expertise in logic, and aim to present the relevant concepts
in a way that is accessible to the broad learning theory community.
In \Cref{sec:relatedwork} we survey related work. In Appendix~\ref{sec:examples} we give detailed examples of natural hypothesis classes and neighborhood systems,
together with their first-order definitions,
and illustrate how our theorems apply to these settings. 
The remainder of the paper is devoted to proofs.

\section{Preliminaries}
\label{sec:prel}

This section collects a few notions needed for stating our main results.
We focus on sample complexity in the PAC model, properties of empirical
risk minimization (ERM), and basic definitions from strategic PAC learning.

\paragraph{PAC learning and ERM.}
We work in the standard PAC model of binary classification. A hypothesis
class is a set $\mathcal H\subseteq\{0,1\}^{\mathcal X}$. In the
realizable setting, examples of input-label pairs are drawn i.i.d.\ from a distribution~$D$ over
$\mathcal X\times\{0,1\}$ for which there exists $h^\star\in\mathcal H$
with zero error. In the agnostic setting no such assumption is made,
and labels can admit arbitrary conditional distributions.

For a hypothesis $h\in\mathcal H$, write
\(
L_D(h)=\Pr_{(x,y)\sim D}[h(x)\neq y]
\)
for its population error, and for a sample set
$S=((x_1,y_1),\ldots,(x_m,y_m))$, write
\(
L_S(h)=\frac1m\sum_{i=1}^m \mathbf 1\{h(x_i)\neq y_i\}
\)
for its empirical error.

The central combinatorial parameter governing PAC learnability is the VC
dimension. In particular, a class is PAC learnable if and only if it has
finite VC dimension, and the optimal sample complexity is determined, up to
constant factors, by $\VCdim(\mathcal H)$. The prevailing algorithmic
principle in PAC learning is empirical risk minimization: given a sample,
output a hypothesis in $\mathcal H$ minimizing the empirical error.

We distinguish between optimal PAC sample complexity and ERM sample
complexity. Let $\varepsilon,\delta>0$ be the error and confidence parameters. 
The realizable PAC sample complexity of $\mathcal H$ is the
smallest number $m_{\mathrm{PAC}}^{\mathrm{real}}(\varepsilon,\delta)$
such that there exists a learning algorithm which, given at least this many
realizable examples, outputs with probability at least $1-\delta$ over the sample set a
hypothesis of error at most $\varepsilon$. The agnostic PAC sample
complexity $m_{\mathrm{PAC}}^{\mathrm{agn}}(\varepsilon,\delta)$ is defined
similarly, with the guarantee \(L_D(A(S))\le \inf_{h\in\mathcal H}L_D(h)+\varepsilon.\)

Our quantitative main theorem statements give explicit realizable ERM sample-complexity bounds together with bounds on \(\VCdim(\mathcal H)\), which by classical VC theory immediately imply optimal agnostic PAC and agnostic ERM guarantees up to universal constants.

The realizable ERM sample complexity of $\mathcal H$, denoted
$m_{\mathrm{ERM}}^{\mathrm{real}}(\varepsilon,\delta)$, is the smallest
number such that \emph{every} empirical risk minimizer over $\mathcal H$, when
run on at least $m$ realizable examples, returns with probability at least
$1-\delta$ a hypothesis of error at most $\varepsilon$.

The classical VC bounds imply that for $d=\VCdim(\mathcal H)$,
\[
m_{\mathrm{PAC}}^{\mathrm{real}}(\varepsilon,\delta)
=
\Theta\left(
\frac{d+\log(1/\delta)}{\varepsilon}
\right),
\]
and this rate is tight up to constant factors for every class~\cite{vapnik:74,blumer1989learnability,hanneke2016optimal}.
Moreover, every ERM satisfies
\[
m_{\mathrm{ERM}}^{\mathrm{real}}(\varepsilon,\delta)
=
O\left(
\frac{d\log(1/\varepsilon)+\log(1/\delta)}{\varepsilon}
\right),
\]
and this dependence on $\log(1/\varepsilon)$ is unavoidable for \underline{some}
classes~\cite{vapnik:74,Hanneke16a,BousquetHMZ20}.

In the agnostic setting, ERM is optimal up to universal constants and is equal to \(\Theta\left(\frac{d+\log(1/\delta)}{\varepsilon^2}\right),\)
and this rate is tight for every class (\cite{talagrand:94}; see \cite[Theorems 2.14.1 and 2.6.7]{van-der-Vaart:96}).

\paragraph{Growth function.}
In contrast to the VC-based bounds above, our results provide refined
upper bounds on the realizable ERM sample complexity that can be
substantially smaller than the worst-case VC guarantee. These bounds are
expressed in terms of the growth function.
\begin{definition}[Growth function]
Let $\mathcal H\subseteq\{0,1\}^{\mathcal X}$. The growth function of
$\mathcal H$ is
\[
\Pi_{\mathcal H}(m)
=
\max_{x_1,\ldots,x_m\in\mathcal X}
\left|
\left\{
(h(x_1),\ldots,h(x_m)) : h\in\mathcal H
\right\}
\right|.
\]
Equivalently, $\Pi_{\mathcal H}(m)$ is the maximum number of distinct
labelings induced by $\mathcal H$ on a set of $m$ points.
\end{definition}

\begin{lemma}[ERM bound from the growth function]
\label{lem:erm-from-growth}
Let $\mathcal H\subseteq\{0,1\}^{\mathcal X}$, and suppose that
\(\Pi_{\mathcal H}(m)\le C m^k\)
for every $m\ge 1$, where $C\ge 1$ and $k\ge 1$. Then,
\[
m_{\mathrm{ERM}}^{\mathrm{real}}(\varepsilon,\delta)
=
O\left(
\frac{
k\log(k/\varepsilon)+\log C+\log(1/\delta)
}{\varepsilon}
\right).
\]
\end{lemma}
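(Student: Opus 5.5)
The plan is to use the classical double-sample (ghost sample) argument of Vapnik--Chervonenkis for the realizable case, keeping the growth function explicit rather than replacing it by the Sauer--Shelah--Perles bound. Fix a realizable distribution $D$ and a target $h^\star\in\mathcal H$ with $L_D(h^\star)=0$. Every ERM outputs a hypothesis consistent with the sample $S$ of size $m$. So it suffices to bound the probability of the event
\[
B=\{\exists h\in\mathcal H:\ L_S(h)=0,\ L_D(h)>\varepsilon\}.
\]

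First, symmetrization. Draw an independent ghost sample $S'$ of size $m$ and let $B'$ be the event that some $h$ has $L_S(h)=0$ and $L_{S'}(h)\ge \varepsilon/2$. For $m\ge 8/\varepsilon$, a Chernoff (or Chebyshev) bound applied to the fixed witness $h$ gives $\Pr[B]\le 2\Pr[B']$.

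Second, random swaps. Conditioned on the multiset $S\cup S'$ of $2m$ points, the pair $(S,S')$ is obtained by swapping each coordinate pair independently with probability $1/2$. Only the restriction of $h$ to these $2m$ points matters, so the union bound runs over at most $\Pi_{\mathcal H}(2m)$ patterns. Fix one pattern that makes at least $\varepsilon m/2$ mistakes on $S\cup S'$. The probability that all of them land in $S'$ is at most $2^{-\varepsilon m/2}$: if any pair has both entries wrong, the probability is $0$; otherwise each of the $r\ge \varepsilon m/2$ wrong entries must fall on the $S'$ side. Hence
\[
\Pr[B]\le 2\,\Pi_{\mathcal H}(2m)\,2^{-\varepsilon m/2}\le 2C(2m)^k 2^{-\varepsilon m/2}.
\]

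Finally, solve for $m$. We need $\varepsilon m/2 \ge \log_2(2C/\delta)+k\log_2(2m)$, up to constants. This is the only step needing care, namely inverting the implicit inequality $m\gtrsim (k\log m)/\varepsilon$. I would use the standard fact that $m\ge 2a\log(2a)$ implies $m\ge a\log m$ for $a\ge 1$, with $a=\Theta(k/\varepsilon)$. Taking
\[
m=c\,\frac{k\log(k/\varepsilon)+\log C+\log(1/\delta)}{\varepsilon}
\]
for a large absolute constant $c$ gives both $k\log(2m)\le \varepsilon m/8$ and $\log(2C/\delta)\le \varepsilon m/8$. The term $\log m$ produces $\log\log C$ and $\log\log(1/\delta)$ contributions, but these are dominated by $\log C$ and $\log(1/\delta)$ divided by the constant factor. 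This also guarantees the earlier requirement $m\ge 8/\varepsilon$. Therefore $\Pr[B]\le\delta$, which gives the claimed bound on $m_{\mathrm{ERM}}^{\mathrm{real}}(\varepsilon,\delta)$.
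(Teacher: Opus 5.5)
Your proposal is correct and follows the same route as the paper. Both bound the probability that some consistent hypothesis has error above $\varepsilon$ by a constant times $\Pi_{\mathcal H}(2m)$ times a factor exponentially small in $\varepsilon m$, substitute $\Pi_{\mathcal H}(2m)\le C(2m)^k$, and invert the resulting implicit inequality in $m$. The paper just cites the double-sample bound as standard and does the inversion via Lemma~\ref{lem:log-self-bound}, while you derive the bound yourself (symmetrization plus random swaps) and invert via the $m\ge 2a\log(2a)$ fact.

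One small imprecision: the terms coming from $\log m$ are really $k\log\log C$ and $k\log\log(1/\delta)$, and these are not dominated by $\log C/c$ alone. This is harmless, because $\varepsilon m/8-k\log(2m)$ is increasing once $m$ exceeds a constant times $k/\varepsilon$. So once $k\log(2m)\le \varepsilon m/8$ holds at some $m$ of order $(k/\varepsilon)\log(k/\varepsilon)$, it holds for every larger $m$.
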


The proof of this lemma is standard. For completeness, we provide a complete proof in Appendix~\ref{app:technical-lemmas}.

The usual VC-based ERM bound follows by bounding the growth function using Sauer's lemma:
if $\VCdim(\mathcal H)=d$, then \(\Pi_{\mathcal H}(m)\le \left(\frac{em}{d}\right)^d\)
for all $m\ge d$, which yields the standard dependence \(O\left(
\frac{d\log(1/\varepsilon)+\log(1/\delta)}{\varepsilon}
\right)\).
However, in several classes considered in this work, Sauer's lemma gives a
loose upper bound on the growth function. The direct growth-based bound
above can therefore yield significantly sharper guarantees.

In particular, when
\(
\Pi_{\mathcal H}(m)\le C m^k,
\)
the dominant term as $\varepsilon\to 0$ is
\[
\frac{k\log(1/\varepsilon)}{\varepsilon},
\]
so the exponent $k$ of the growth function governs the leading asymptotic
dependence of the ERM sample complexity.

\paragraph{Strategic classification.}
We briefly recall some basic definitions from strategic classification that we will use throughout the paper.

Let $h\colon \mathcal X\to\{0,1\}$ be a hypothesis, and let
$\mathcal N = \{N_x \subseteq \mathcal X : x\in\mathcal X\}$ be a
neighborhood system, where $N_x$ represents the set of points to which an
individual located at $x$ may move.
Note we permit non-uniform neighborhoods, i.e., $N_x$ and $N_{x'}$ can be structurally distinct for different $x$ and~$x'$.
We assume by default that
$x\in N_x$ for all $x$, although our results apply even without this assumption.

The induced strategic hypothesis $h^{\mathcal N}$ is defined by
\[
h^{\mathcal N}(x) = 1
\quad \text{iff} \quad
\exists x'\in N_x \text{ such that } h(x')=1.
\]

For a hypothesis class $\mathcal H$, the induced strategic class is
\(
\mathcal H^{\mathcal N}
=
\{\, h^{\mathcal N} : h\in \mathcal H \,\}.
\)

\section{Main results}
\label{sec:main-results}

\subsection{Strategic non-learnability}

Our first result shows that the VC dimension of a strategically induced hypothesis class $\mathcal H^\mathcal N$ can be arbitrarily larger than the VC dimension of $\mathcal H$ -- in fact, unbounded -- even if $\mathcal N$ is natural and simple.

\begin{maintheorem}[Strategic response can destroy learnability]
\label{main:vc-blowup}
For $r > 0$, let $\mathcal{N}_r$ be the neighborhood system given by $x' \in  N_{x} \iff |x - x'| \leq r$.

There exists a hypothesis class $\mathcal{H}$ over $\mathbb R$ such that $\VCdim(\mathcal H) = 1$ but for every $r > 0$, $\VCdim(\mathcal H^{\mathcal N_r}) = \infty$.
\end{maintheorem}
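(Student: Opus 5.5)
We will build $\mathcal H$ as a family of indicator functions of \emph{pairwise disjoint} finite subsets of $\mathbb R$, together with the empty set. Such a class automatically has $\VCdim(\mathcal H)\le 1$. To shatter a pair $\{a,b\}$ we would need a hypothesis labeling $(1,1)$ and another labeling $(1,0)$. Their positive sets would both contain $a$, which contradicts disjointness. Since some hypothesis is nonempty, $\VCdim(\mathcal H)=1$. All the freedom therefore goes into choosing where the disjoint sets sit, so that their $r$-fattenings realize every labeling of large point sets. The point is that $h^{\mathcal N_r}$ is the indicator of the $r$-neighborhood of the positive set of $h$. Disjoint sets can have heavily overlapping $r$-neighborhoods, and this is where the blow-up will come from.

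\textbf{Construction.} For every pair $(n,m)$ of positive integers we place a block far away from all other blocks, for instance at base point $c_{n,m}$ with the $c_{n,m}$ spaced by much more than $nm+1$. The block contains the points $x^{n,m}_i=c_{n,m}+im$ for $i=1,\dots,n$, so consecutive points are at distance $m$. Next, for every nonempty $T\subseteq[n]$ we choose an offset $\delta^{n,m}_T\in(0,1/m)$, with all offsets in the block distinct. We put
\[
A^{n,m}_T=\{x^{n,m}_i+\delta^{n,m}_T : i\in T\},
\]
and let $\mathcal H$ consist of the indicators of all the sets $A^{n,m}_T$, together with the zero function. Within a block these sets are disjoint because the offsets are distinct. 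Across blocks they are disjoint because the blocks are far apart. Hence the first paragraph gives $\VCdim(\mathcal H)=1$.

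\textbf{Shattering for every $r$.} Fix $r>0$ and $n$. Choose $m$ large enough that $1/m\le r<m/2$; this is possible for every $r>0$, which is exactly why we used a countable family of scales $m$ and not a single radius. We claim that $\{x^{n,m}_1,\dots,x^{n,m}_n\}$ is shattered by $\mathcal H^{\mathcal N_r}$. Let $T\subseteq[n]$ be nonempty and take $h=\mathbf 1_{A^{n,m}_T}$.
\begin{itemize}
\item If $i\in T$, the point $x_i+\delta_T$ lies within $1/m\le r$ of $x_i$, so $h^{\mathcal N_r}(x_i)=1$.
\item If $j\notin T$, every element of $A_T$ is at distance at least $m-1/m>r$ from $x_j$. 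This uses $r<m/2$ and $m\ge 2$. Elements of other blocks are even farther away. So $h^{\mathcal N_r}(x_j)=0$.
\end{itemize}
The empty labeling is realized by the zero hypothesis. Since $n$ was arbitrary, $\VCdim(\mathcal H^{\mathcal N_r})=\infty$.

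The only delicate step is making a single class work for \emph{all} $r$ at once. The two-sided condition $1/m\le r<m/2$, which controls both the offset size and the spacing, handles this. With a single fixed $r$ it would suffice to take spacing $3r$ and offsets below $r$.
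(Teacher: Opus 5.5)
Your proof is correct and follows essentially the same route as the paper. Both build disjoint finite sets that encode subsets by proximity to anchor points; this is exactly the offset trick behind the paper's sketch $f_S$. Both then glue countably many disjoint blocks indexed by $(n,\text{scale})$ so that every radius is handled. The differences are cosmetic: the paper covers radii by dyadic windows $[2^{-m-1},2^{-m}]$ via Lemma~\ref{thm:pathology-1d} and realizes the empty labeling with a nonempty set, whereas your blocks work on the expanding windows $[1/m,m/2)$ and you add the zero hypothesis.
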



This parallels earlier results showing increased VC dimension due to strategic behavior in related settings,
including
discrete neighborhoods \cite{zhang2021incentive},
non-uniform neighborhoods \cite{sundaram2023pac},
alternative loss functions \cite{zhang2021incentive},
and label improving responses \cite{attias25a}.
We discuss these connections in more detail in Section~\ref{sec:relatedwork}.



The proof uses the same coding idea as \cite[Lemma~2]{montasser19}. 
In fact, Lemma~\ref{thm:pathology-1d} essentially recovers their construction in the setting of interval neighborhoods on \(\mathbb R\), while Theorem~\ref{main:vc-blowup} adapts it to obtain a single class whose strategic VC dimension is infinite for every radius \(r>0\).

We illustrate the main idea of the construction by describing a simpler variant: for any \emph{fixed} $n \in \mathbb{N}$ we construct a hypothesis class $\mathcal{H}$ over $\mathbb{R}$ such that $\VCdim(\mathcal H) = 1$ while $\VCdim(\mathcal H^{\mathcal N_{1/3}}) = n$. 

For each $S \subseteq [n]$, the hypothesis class includes the hypothesis $f_S = \mathbf{1}\!\left[ x \in \{ i + x_S : i \in S \} \right]$, where $x_S = \sum_{j \in S} 10^{-j}$. By construction, the sets $\{x: f_S(x)=1\}$ are disjoint, and so $\VCdim(\mathcal H) = 1$. Conversely, $\mathcal H^{\mathcal N_{1/3}}$ shatters the points $1+\frac{1}{3},\dots,n+\frac{1}{3}$. 
{The choice of radius \(1/3\) is only for illustrative purposes; an arbitrary fixed radius \(r>0\) is obtained by rescaling the real line.} 
See Appendix~\ref{app:learnability_breaks} and Appendix~\ref{app:sec:role-of-definability} for the full proof and more sophisticated constructions.

\medskip

\subsection{Qualitative bounds}
The phenomenon described in Main~Theorem~\ref{main:vc-blowup} disappears if we restrict both $\mathcal{H}$ and $\mathcal{N}$ to be \emph{definable in $\mathbb{R}_{\exp}$}.
Loosely, this means that hypotheses and neighborhood relations can be described
using (parametric) first-order logic formulas in the \emph{language of $\mathbb R_{\exp}$}.

A formula in the language of $\mathbb R_{\exp}$ is a formula in first-order logic which can make use of arbitrary real constants as well as the functions $+,\cdot,\exp$ and the relations $<,=$, in addition to the usual logical connectives and quantifiers. If the formula uses no quantifiers, we say that it is \emph{quantifier-free}. If it uses only existential quantifiers, and these appear only in the beginning, we say that it is \emph{existential} (in logic, such formulas are called $\Sigma_1$~formulas). If the formula doesn't use $\exp$, then it is also a formula in the language of $\mathbb R$.

\begin{definition}[Definable hypothesis class]
A hypothesis class $\mathcal{H}$ over $\mathbb R^l$ is \emph{definable in $\mathbb{R}_{\exp}$ with $k$ parameters} if there is a formula $\Phi_\mathcal{H}(x,a)$ in the language of $\mathbb{R}_{\exp}$,
where $x=(x_1,\dots,x_l)$ and $a=(a_1,\dots,a_k)$, such that
\[
\mathcal{H}
=
\{ h_a : a\in\mathbb R^k\}, \text{ where } 
h_a
=
\mathbf{1}{\!\left\{ x \in \R^l: \Phi_\mathcal{H}(x,a)\right\}}.
\]

\end{definition}
That is, $\mathcal{H}$ includes all hypotheses $h_a$ corresponding to some parameterization $a$,
where $h_a(x)=1$ iff the formula $\Phi_\mathcal{H}(x,a)$ is satisfied.

As a simple concrete example, consider the hypothesis class of all indicators of halfspaces in $\mathbb R^l$.
This hypothesis class is definable in $\mathbb{R}_{\exp}$ with $k = l + 1$ parameters, using the following formula:
\[
 \Phi_{\text{Halfspace}}(x,a) = a_1 x_1 + \cdots + a_l x_l \ge a_{l+1}.
\]
For any choice of the parameters $a_1,\dots,a_{l+1}$, 
the corresponding hypothesis
\(h_a=\mathbf 1{\!\left\{x\in\mathbb R^l : \Phi_{\text{Halfspace}}(x,a)\right\}}\)
is the indicator function of a closed halfspace. As we vary $a$ over all of~$\mathbb{R}^{l+1}$, we obtain all such hypotheses.
\begin{definition}[Definable neighborhood system]
A neighborhood system $\mathcal{N} = \{ N_x : x \in \mathbb R^l \}$ over $\mathbb R^l$ is \emph{definable in $\mathbb{R}_{\exp}$} if there is a \emph{neighborhood relation} formula $\Phi_\mathcal{N}(x,x')$ (where each of $x,x'$ is a block of $l$ variables) in the language of $\mathbb R_{\exp}$ such that
\[ N_x = \{ x' \in \mathbb R^l : \Phi_\mathcal{N}(x,x') \}. \]
\end{definition}

For example, let $\mathcal{N}_p$ (for $p>0$) be the neighborhood system in
which $N_x$ consists of all points in the $\ell_p$ unit ball around $x$.
This neighborhood system is definable in $\mathbb R_{\exp}$ using the formula
\begin{multline}
\label{eq:phi-np}
\Phi_{\mathcal{N}_p}(x,x')
\equiv
\exists y\in \mathbb{R}^l,z \in \mathbb{R}^l\;
\Bigl[
y_1+\cdots+y_l \le 1
\;\land \\
\bigwedge_{i=1}^l
\Bigl(
(x_i=x_i' \land y_i=0)
\;\lor\;
\bigl(
(\exp z_i=x_i-x_i' \lor \exp z_i=x_i'-x_i)
\land
y_i=\exp(pz_i)
\bigr)
\Bigr)
\Bigr].
\end{multline}
Here $z_i = \log |x_i - x'_i|$ and $y_i = |x_i-x'_i|^p = \exp(p \log |x_i-x'_i|)$.
When $p=2$, \Cref{eq:phi-np} reduces to a simple quadratic formula:
$\Phi_{\mathcal{N}_2}(x,x') \equiv \sum_{i=1}^l (x_i - x'_i)^2 \le 1$.

We can now state our second main theorem, which shows that strategic classes $\mathcal H^\mathcal N$ are learnable if both $\mathcal H$ and $\mathcal N$ are definable in $\mathbb{R}_{\exp}$.

\begin{maintheorem}[Strategic classes definable in $\mathbb{R}_{\exp}$ are learnable]
\label{main:R_exp}
Let $\mathcal H$ be a hypothesis class over~$\mathbb R^l$ definable in $\mathbb{R}_{\exp}$ with $k$ parameters, and let $\mathcal N$ be a neighborhood system definable in~$\mathbb{R}_{\exp}$. Then the strategic class $\mathcal H^\mathcal N$ is PAC learnable.

Moreover, the ERM sample complexity of $\mathcal H^\mathcal N$ satisfies
\[
m_{\mathrm{ERM}}^{\mathrm{real}}(\varepsilon,\delta)
=
O\Bigl(
\frac{k\log(1/\varepsilon) + \log(1/\delta) + K_{\mathcal{H},\mathcal{N}}}
{\varepsilon}
\Bigr),
\]
where $K_{\mathcal H,\mathcal N}$ is a constant depending on $\mathcal H$ and $\mathcal N$.
\end{maintheorem}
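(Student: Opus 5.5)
The plan is to show that the strategic class is itself definable in $\mathbb R_{\exp}$ with the same $k$ parameters, and then to invoke o-minimality of $\mathbb R_{\exp}$ (Wilkie's theorem) to control its growth function.

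\textbf{Step 1: definability of the strategic class.} Combining the defining formulas as in \Cref{eq:exists}, set
\[
\Psi(x,a) \equiv \exists x' \,\bigl(\Phi_{\mathcal N}(x,x') \land \Phi_{\mathcal H}(x',a)\bigr).
\]
Then $h_a^{\mathcal N}(x)=1$ if and only if $\Psi(x,a)$ holds. So $\mathcal H^{\mathcal N}=\{\mathbf 1\{x:\Psi(x,a)\}: a\in\R^k\}$, and $\Psi$ is again a first-order formula in the language of $\mathbb R_{\exp}$ with $k$ parameters. The existential quantifier adds no difficulty here, because first-order formulas are closed under quantification.

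\textbf{Step 2: polynomial growth function with exponent $k$.} By Wilkie's theorem, $\mathbb R_{\exp}$ is o-minimal. For a definable family $\{X_a\subseteq\R^l : a\in\R^k\}$ in an o-minimal structure, the number of distinct traces on any $m$ points is at most $C m^{k}$, where $C$ depends only on the formula. I would prove this in the standard way, via the dual family. Fix $x_1,\dots,x_m$ and consider the sets $Y_i=\{a\in\R^k:\Psi(x_i,a)\}$. These belong to the definable family $\{Y_x\}_{x\in\R^l}$ of subsets of $\R^k$. Labelings $(h_a(x_1),\dots,h_a(x_m))$ correspond to nonempty atoms of the Boolean algebra generated by $Y_1,\dots,Y_m$. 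Uniform cell decomposition then gives the bound. Concretely, by uniform finiteness and definable choice, each $Y_x$ is a union of at most $B$ cells, whose boundaries are given by finitely many definable functions drawn from a family whose combinatorial type is uniformly bounded. The number of sign conditions, that is, of connected components of the complement of the union of $m$ such "hypersurface" pieces in $\R^k$, is $O(m^k)$. This follows by induction on $k$ using projection and cell decomposition, in the spirit of the Warren/Milnor counting argument, but with uniform finiteness replacing Bézout. Alternatively I can cite the known result that definable families in o-minimal structures have VC density at most the parameter dimension (Johnson--Laskowski; Aschenbrenner--Dolich--Haskell--Macpherson--Starchenko), which gives $\Pi(m)\le C m^k$ directly.

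\textbf{Step 3: conclude.} Plugging $\Pi_{\mathcal H^{\mathcal N}}(m)\le Cm^k$ into \Cref{lem:erm-from-growth} yields
\[
m_{\mathrm{ERM}}^{\mathrm{real}}=O\bigl((k\log(k/\varepsilon)+\log C+\log(1/\delta))/\varepsilon\bigr).
\]
Absorbing $k\log k+\log C$ into $K_{\mathcal H,\mathcal N}$ gives the stated bound. Polynomial growth also implies finite VC dimension, and hence PAC learnability.

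\textbf{Main obstacle.} The hard step is Step 2. Finite VC dimension alone follows immediately from NIP of o-minimal structures, but obtaining the exponent exactly $k$, rather than the VC dimension, requires the sharper VC-density bound. I would rely on the cited o-minimal VC-density theorem, and only fall back on the cell-decomposition induction if needed. The constant $C$ is non-effective, which is exactly why $K_{\mathcal H,\mathcal N}$ is left unspecified.
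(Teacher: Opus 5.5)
Your proposal is correct and follows the paper's proof essentially step for step. The paper also writes $\mathcal H^{\mathcal N}$ as the existential formula $\exists y\,(\Phi_{\mathcal N}(x,y)\wedge\Phi_{\mathcal H}(y,a))$ with the same $k$ parameters. It then invokes Wilkie's o-minimality of $\mathbb R_{\exp}$ together with the Johnson--Laskowski polynomial growth theorem (their Corollary~4.6) to get $\Pi(m)\le C_\psi m^k$, and finishes with Lemma~\ref{lem:erm-from-growth}, absorbing $k\log k$ and $\log C_\psi$ into $K_{\mathcal H,\mathcal N}$.

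Your cell-decomposition fallback sketch in Step~2 is not rigorous as written, but it is not needed once you rely on that cited theorem.
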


Main~Theorem~\ref{main:R_exp} applies to a rich collection of natural strategic settings, including:
\begin{itemize}[leftmargin=1em,topsep=0em,itemsep=0.2em]
\item Polynomial classifiers with various neighborhood systems such as $\ell_p$ balls for $1 \leq p \leq \infty$, where the radius of the ball can depend on the inpu $x$.
\item Hypothesis classes for inputs that are categorical distributions (i.e., over a finite set), with neighborhood systems such as $\ell_p$ balls, KL divergence, and earth mover distance.
\end{itemize}
We elaborate on these examples and others in Appendix~\ref{sec:examples}. 

The proof relies on the observation that the strategic class $\mathcal H^\mathcal N$ is defined by the formula
\[
 \Phi_{\mathcal{H}^\mathcal{N}}(x,a) = \exists y \,\, \Phi_\mathcal N(x,y) \land \Phi_{\mathcal{H}}(y,a),
\]
where $\Phi_\mathcal H,\Phi_\mathcal N$ are the formulas defining $\mathcal H,\mathcal N$, respectively. Given this observation, Main~Theorem~\ref{main:R_exp} follows by applying a result of Johnson and Laskowski~\cite{LaskowskiCompression} which bounds the growth function of hypothesis classes defined in an o-minimal structure such as $\mathbb R_{\exp}$ (see Appendix~\ref{app:formal-definitions} for the definition of o-minimality), and applying Lemma~\ref{lem:erm-from-growth}. The details are given in Appendix~\ref{app:sec:proof-Rexp-without-bounds}.

\subsection{Quantitative bounds}
Notice that as $\varepsilon \to 0$, the constant $K_{\mathcal H, \mathcal N}$ in Main~Theorem~\ref{main:R_exp}
becomes a lower-order term in the sample complexity.
However, for constant $\varepsilon$, it becomes relevant, though it remains unspecified.
This limitation of Main~Theorem~\ref{main:R_exp}
originates in Wilkie's proof of the o-minimality of $\mathbb R_{\exp}$~\cite{Wilkie1996modelcompleteness}, a deep and inherently nonconstructive result. 

If instead of $\mathbb R_{\exp}$ we restrict ourselves to quantifier-free formulas in the language of $\mathbb R$ (that is, not using $\exp$), then classical results in real algebraic geometry, which have been used in several prior works in learning theory (for example~\cite{ben1993localization,Goldberg1995,Ben-DavidES02,AlonMY16,Alon23}), enable us to obtain completely explicit bounds. 

\begin{maintheorem}[Quantitative learnability of strategic classes definable in quantifier-free $\mathbb R$]
\label{main:bounds_R}
Let~$\mathcal{H}$ be a hypothesis class over $\mathbb R^l$ definable with $k$ parameters in the language of $\mathbb{R}$, and let $\mathcal N$ be a neighborhood system definable in the same language. Suppose that both $\mathcal H$ and $\mathcal N$ are defined by quantifier-free formulas involving polynomial equalities and inequalities whose \emph{total description complexity} is at most $D$ (that is, the formulas involve at most $s$ polynomials, each of degree at most~$D'$, with $D = sD'$). 
Then
\[
  m_{\mathrm{ERM}}^{\mathrm{real}}(\varepsilon,\delta)
  =
  O\Bigl(
    \frac{
      k\log(1/\varepsilon)
      + k^2 l \log D
      + \log(1/\delta)
    }{\varepsilon}
  \Bigr),
\]
and
\[
  \VCdim(\mathcal H^\mathcal N)
  =
  O\bigl(k^2 l \log D\bigr).
\]
\end{maintheorem}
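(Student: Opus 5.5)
The approach is to eliminate the existential quantifier in $\Phi_{\mathcal H^{\mathcal N}}(x,a)=\exists y\,\Phi_{\mathcal N}(x,y)\land\Phi_{\mathcal H}(y,a)$ using effective quantifier elimination over real closed fields. The result is a quantifier-free formula in $(x,a)$ with controlled numbers of polynomials and degrees. Then I apply the classical Warren / Milnor--Thom sign-pattern bound to control the growth function in terms of the $k$ parameters, and finish with Lemma~\ref{lem:erm-from-growth}.

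\textbf{Step 1 (quantifier elimination).} The matrix $\Phi_{\mathcal N}(x,y)\land\Phi_{\mathcal H}(y,a)$ is a Boolean combination of at most $2s$ polynomials of degree at most $D'$. Its free variables are $(x,a)$, of total dimension $l+k$, and the block $y$ to be eliminated has $l$ variables. I will use the single-block quantifier elimination of Basu--Pollack--Roy (Theorem~14.16 in their book). It produces an equivalent quantifier-free formula $\Psi(x,a)$ whose atoms are sign conditions on at most $(2s)^{(l+1)(l+k+1)}D'^{O(l(l+k))}$ polynomials, each of degree at most $D'^{O(l)}$. I expect this bookkeeping to be the main obstacle. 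The exponent of the polynomial count must come out as $O(l(l+k))$, which gives $\log(\#\text{polys})=O(l(l+k)\log D)$ with $D=sD'$, and the degree must be $D'^{O(l)}$. If the cited bounds are instead stated with $(l+k)^2$ in the exponents, I would fall back on that weaker form and check whether it still gives the claimed $k^2l$ after Step~2. One may also have to treat $x$ and $a$ asymmetrically, regarding only $a$ as parameters in the count.

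\textbf{Step 2 (growth function).} Fix points $x_1,\dots,x_m$. The labels $(h_a(x_i))_i$ are determined by the sign vector, at $a$, of the $mP$ polynomials $p_j(x_i,\cdot)$ in $k$ variables, each of degree at most $\Delta=D'^{O(l)}$. Here $P$ is the polynomial count from Step~1. By Warren's theorem (in the zero-inclusive Milnor--Thom form), the number of sign patterns is at most $(8e\,mP\Delta/k)^k$. Hence $\Pi_{\mathcal H^{\mathcal N}}(m)\le C m^k$ with
\[
\log C = k(\log P+\log\Delta)+O(k)=O\bigl(k\cdot l(l+k)\log D\bigr).
\]
For $l\le k$ this is $O(k^2 l\log D)$. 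The regime $l>k$ has to be handled separately, either by a sharper elimination that eliminates $y$ first and counts only the dependence on $a$, or by noting that the bound is stated up to the form given.

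\textbf{Step 3 (conclusion).} Lemma~\ref{lem:erm-from-growth} applied with this $C$ and exponent $k$ yields the stated ERM sample complexity. For the VC dimension, shattering $d$ points requires $2^d\le C d^k$. This forces $d=O(\log C+k\log k)=O(k^2l\log D)$, where the $k\log k$ term is absorbed once $D\ge2$, or treated trivially otherwise.
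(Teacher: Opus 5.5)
Your architecture is the paper's: one-block elimination via Basu--Pollack--Roy (Theorem~14.16), then Warren's sign-pattern bound in the $k$ parameter variables, then Lemma~\ref{lem:erm-from-growth} and the $2^d\le Cd^k$ argument. As written, however, it does not prove the stated bound. You eliminate $y$ with $(x,a)$ both free, so the free block has dimension $l+k$. The polynomial count is then $D^{O(l(l+k))}$, and you end with $\log C=O(kl(l+k)\log D)=O(k^2l\log D+kl^2\log D)$. When $l>k$ the second term dominates; for $k=1$ you get $O(l^2\log D)$ instead of $O(l\log D)$. Your second suggested escape (that the bound is ``stated up to the form given'') is not an argument. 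Your first one (``count only the dependence on $a$'') names the right goal, but you do not supply the mechanism.

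The missing step is to specialize before eliminating, which is what the paper does. The growth function only concerns a fixed sample $x_1,\dots,x_m$, and the language allows arbitrary real constants. So for each $i$ you may substitute $x=x_i$ into $\Phi_{\mathcal N}(x,y)$. This leaves at most $2s$ polynomials in $(y,a)$ of degree at most $D'$, and the formula $\exists y\,[\Phi_{\mathcal N}(x_i,y)\wedge\Phi_{\mathcal H}(y,a)]$ has $l$ quantified variables and only $k$ free ones. Basu--Pollack--Roy then gives a family $\mathcal P_i$ of $(2s)^{(l+1)(k+1)}D'^{O(lk)}=D^{O(lk)}$ polynomials in $a$, each of degree $D'^{O(l)}$, whose signs determine the label of $x_i$. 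These families depend on $i$, which is harmless: Warren only needs the union $\bigcup_i\mathcal P_i$, of size $mD^{O(lk)}$. This yields $\Pi_{\mathcal H^{\mathcal N}}(m)\le (Cm)^kD^{O(k^2l)}$, i.e.\ $\log C=O(k^2l\log D)$, after which your Step~3 goes through unchanged.
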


If a hypothesis class is defined using a quantifier-free formula, then Warren-type sign-pattern bounds in the style of~\cite{Goldberg1995} give the desired bounds on the growth function, and so on the sample complexity, via Lemma~\ref{lem:erm-from-growth}, as in the preceding main theorem. In our case, $\mathcal H^\mathcal N$ is given by the existential formula
\[
 \Phi_{\mathcal{H}^\mathcal{N}}(x,a) = \exists y \, \bigl(\Phi_\mathcal N(x,y) \land \Phi_{\mathcal{H}}(y,a)\bigr).
\]
Fortunately, in $\mathbb R$ we have \emph{quantifier elimination}, a deep result of Tarski~\cite{tarski1951decision} which states that every first-order formula over $\mathbb R$ is equivalent to one which is quantifier-free. We complete the proof of Main~Theorem~\ref{main:bounds_R} using modern quantitative quantifier elimination results~\cite[Chapter 14]{basu2006}.
The procedure is constructive, enabling to derive explicit bounds.
Details are given in Appendix~\ref{app:sec:proof-R-bounds}.

While Main~Theorem~\ref{main:bounds_R} covers many situations of interest, it falls short of capturing important cases such as standard metric and similarity measures used in practice, including earth mover distance (which requires existential formulas), KL divergence, and $\ell_p$-balls for irrational $p$. Addressing these requires the richer language of $\mathbb R_{\exp}$. This setting is covered by our final main theorem. The theorem involves two complexity parameters, \emph{format} and \emph{degree}, which we define after stating the theorem.

\begin{maintheorem}[Quantitative learnability of strategic classes definable in existential $\mathbb R_{\exp}$]
\label{main:bounds_R_exp}
Let~$\mathcal H$ be a hypothesis class over $\mathbb R^l$ definable with $k$ parameters in the language of $\mathbb R_{\exp}$, and let $\mathcal N$ be a neighborhood system definable in the language of $\mathbb R_{\exp}$. Suppose, moreover, that both $\mathcal H$ and $\mathcal N$ are defined using existential formulas of format $F$ and degree $D$. Then
\[
  m_{\mathrm{ERM}}^{\mathrm{real}}(\varepsilon,\delta)=  O\Bigl(\frac{k\log(1/\varepsilon) + \gamma(F) \log(D) + \log(1/\delta)}{\varepsilon}\Bigr)
\]
and
\[
\VCdim(\mathcal H^\mathcal N)  = O_F(\log D),
\]
where $\gamma\colon \mathbb N \to \mathbb N$ is an explicitly computable function.
\end{maintheorem}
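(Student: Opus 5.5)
The plan is to reduce to growth-function bounds for sign patterns of exponential polynomials (Pfaffian functions) and then invoke Lemma~\ref{lem:erm-from-growth}. Existential formulas have a convenient closure property: the strategic class is defined by
\[
\Phi_{\mathcal H^\mathcal N}(x,a)\equiv \exists y\,\exists u\,\exists v\,\bigl(\psi_\mathcal N(x,y,u)\land \psi_\mathcal H(y,a,v)\bigr),
\]
where $\Phi_\mathcal N=\exists u\,\psi_\mathcal N$ and $\Phi_\mathcal H=\exists v\,\psi_\mathcal H$ with $\psi_\mathcal N,\psi_\mathcal H$ quantifier-free. This is again an existential formula, and its format is bounded by a function of $F$ and its degree by $O(D)$. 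So it suffices to prove the following. If $\mathcal H'$ is defined by an existential $\mathbb R_{\exp}$ formula $\exists w\,\psi(x,a,w)$ of format $F'$ and degree $D'$, then
\[
\Pi_{\mathcal H'}(m)\le C\,m^{k}, \qquad \log C\le \gamma(F')\log D'.
\]
Lemma~\ref{lem:erm-from-growth} then gives the sample complexity directly. The VC bound follows from $2^d\le C d^k$, which yields $d=O(k\log k+\log C)=O_F(\log D)$, since $k$ is itself bounded by the format.

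For the growth bound, fix points $x_1,\dots,x_m$. First I normalize $\psi$: exponentials of nonpolynomial terms are removed by introducing auxiliary existential variables $t=\exp(s)$. After this, $\psi$ is a Boolean combination of sign conditions on polynomials in $(x,a,w,e^{w})$, i.e.\ Pfaffian functions of chain length at most $F$ and degree at most $D$. The labeling $(h_a(x_i))_i$ is determined by which of the sets
\[
S_i=\{a:\exists w\ \psi(x_i,a,w)\}
\]
contain $a$. This is the labeling given by the \emph{projections} to $a$-space of $m$ semi-Pfaffian sets. The number of labelings is at most the number of cells in the arrangement in $\mathbb R^k$ generated by $S_1,\dots,S_m$, or more precisely the number of connected components of the Boolean atoms.

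To count these, I would use the standard device of bounding the Betti numbers, or the number of connected components, of arrangements of sub-Pfaffian sets. Zell's (Gabrielov--Vorobjov) bounds on Betti numbers of projections of semi-Pfaffian sets are the tool here. They apply uniformly to $m$ sets, and each fiber has complexity independent of $i$ because only the constants $x_i$ change, so degree and chain length are fixed. This gives a count of realized atoms of the form $m^{k}\cdot 2^{\mathrm{poly}(F)}D^{\mathrm{poly}(F)}$, where the $m$-dependence has exponent equal to the ambient dimension $k$ of the parameter space. The construction is the usual one from Warren/Milnor--Thom-type arguments, following Basu--Pollack--Roy: take an arrangement of $m$ sets in dimension $k$ and take unions over $\le k$-wise intersections.

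The main obstacle is keeping the exponent of $m$ equal to $k$ rather than the total number of variables $k+$(number of existential variables). Naively applying Khovanskii-type bounds to the lifted sets in $(a,w_1,\dots,w_m)$-space gives exponent depending on $m$ times the number of existential variables, which is useless. I would handle this by working in $a$-space only: the atom count is bounded by a sum over subsets $I\subseteq[m]$ with $|I|\le k+1$ of the Betti numbers of $\bigcap_{i\in I}S_i$ and of its complements. This is the Basu-style reduction for arrangements, which holds for general definable families in an o-minimal structure with effective bounds. Each such intersection is a single projection of a semi-Pfaffian set with at most $(k+1)$ copies of the existential variables, so its Betti numbers are bounded by $D^{\gamma(F)}$ via Zell's bounds. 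Summing over $\binom{m}{\le k+1}$ subsets gives $\Pi(m)\le m^{k+1}D^{\gamma(F)}$; the extra $+1$ in the exponent is absorbed into the constants.
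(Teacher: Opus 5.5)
Your outer reductions are fine. The strategic class is again defined by one existential formula of format $O(F)$ and degree $O(D)$. Labelings of $x_1,\dots,x_m$ correspond to nonempty atoms of the arrangement $S_1,\dots,S_m\subseteq\R^k$. An exponent $k+1$ instead of $k$ is harmless for the stated $O(\cdot)$ bounds.

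The gap is in the step that is supposed to produce the polynomial count. You claim the number of atoms is bounded by $\sum_{|I|\le k+1}$ of the Betti numbers of $\bigcap_{i\in I}S_i$ and of its complement. This is false. Put $2^m$ distinct points $p_T$, $T\subseteq[m]$, on a circle in $\R^2$ and let $S_i=\operatorname{conv}\{p_T: i\in T\}$.
\begin{itemize}
\item Every $p_T$ is an extreme point, so $p_T\in S_i$ iff $i\in T$, and all $2^m$ atoms are nonempty.
\item Each nonempty $\bigcap_{i\in I}S_i$ is convex, and its complement is homotopy equivalent to a circle. So your right-hand side is $O(m^3)$.
\item The same holds if you read ``complements'' as $\bigcap_{i\in I}(\R^2\setminus S_i)$: all $S_i$ contain $p_{[m]}$, so their unions are star-shaped.
\end{itemize}
These are semialgebraic polygons, so definability does not rescue the inequality. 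Whatever the correct Basu-type reduction is, it must be driven by sets in which some $S_j$ enter complemented (e.g.\ $S_i\setminus S_j$), or by boundaries. Basu's o-minimal arrangement theorem does give $O(m^k)$, but its constant is not effective by itself. It comes from uniform bounds on the topology of such derived sets, not only of the sets $\bigcap_{i\in I}S_i$.

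This breaks your Betti input. A set like $S_i\setminus S_j=\{a:\exists w\,\psi(x_i,a,w)\wedge\forall w'\,\neg\psi(x_j,a,w')\}$ is not a single projection of a semi-Pfaffian set. So Zell's bounds, and the Gabrielov--Vorobjov--Zell bounds for projections, do not cover it. You would need effective Betti bounds for formulas with a quantifier alternation. The available ones are proved under boundedness and open-or-closed hypotheses, which your $S_i$ need not satisfy, since they are built from unrestricted $\exp$ and unbounded witnesses. Effective control of complements is exactly what the restricted Pfaffian theory supplies, which is why the paper works there:
\begin{itemize}
\item It replaces $\exp$ by $\exp|_{[-M,M]}$ on a box, choosing $M$ per sample so the finitely many needed witnesses fit.
\item It interprets the resulting formula in $\R_{\rPfaff}$ and avoids topology altogether.
\item It uses the Johnson--Laskowski encoding: each trace is represented by a formula from a finite family together with $k$ sample points.
\item The size and complexity of that family are bounded, uniformly in $M$, by the Binyamini--Vorobjov effective cell decomposition.
\end{itemize}
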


The format and degree are defined for formulas in the language of $\mathbb R_{\exp}$ in the following normal form, for blocks of variables $X,Y$:
\[
 \Phi(X) = \exists Y \, \theta(X,Y),
\]
where $\theta$ is a quantifier-free formula which is a logical combination of atomic predicates of one of the forms
\[
 P(X,Y) \ge 0 \quad \text{or} \quad z = \exp w,
\]
where $P$ is an arbitrary polynomial in the variables $X,Y$, and $z,w$ are variables, each belonging to one of the blocks $X,Y$.

As an example, the definition above in \Cref{eq:phi-np} of the neighborhood system $\mathcal{N}_p$ is almost of this form (we need to replace $x_i - x'_i$, $x'_i - x_i$, and $p\cdot z_i$ with new variables $s_i,t_i,u_i$ belonging to the $Y$ block, and add the constraints $s_i = x_i - x'_i$, $t_i = x'_i - x_i$, and $u_i = p \cdot z_i$).

The \emph{format} of $\Phi$ is the sum of three terms: (i) the number of variables in the block $X$, (ii) the number of variables in the block $Y$, (iii) the number of atomic predicates of the form $z = \exp w$.

The \emph{degree} of $\Phi$ is the sum of two terms: (i) the sum of the degrees of all polynomials $P(X,Y)$ occurring as atomic formulas, (ii) the number of atomic predicates of the form $z = \exp w$.

Compared to Main~Theorem~\ref{main:R_exp}, here we are restricted only to existential formulas. This is not a major restriction in practice: e.g.\ all examples in Appendix~\ref{sec:examples-representative} are of this form. 

The proof of Main~Theorem~\ref{main:bounds_R_exp} uses recent quantitative results in tame topology~\cite{binyamini2020effective,binyamini2022sharply}.\footnote{Grothendieck~\cite{Gro} defined tame topology (``topologie mod\'er\'ee'') as a topology which avoids pathological situations such as space-filling curves. The modern usage refers more specifically to the study of o-minimal structures~\cite{vandenDries98}. These are structures such as $\R_{\exp}$ whose definable sets exhibit many of the properties of semialgebraic sets; see Appendix~\ref{app:formal-definitions}.}
The basic idea is to use the approach of~\cite{LaskowskiCompression}, which is also used in the proof of Main~Theorem~\ref{main:R_exp}. In order to obtain explicit bounds, we rely on the effective cell decomposition theorem for $\R_{\rexp}$, where $\rexp$ denotes restricted exponentiation, as established in~\cite{binyamini2020effective,binyamini2022sharply}.
The language of $\R_{\rexp}$ extends $\R$ with the function $\exp|_{[-M,M]}$, which equals $\exp$ for inputs in $[-M,M]$ and is zero otherwise. The proof involves converting the defining formulas of $\mathcal H$ and $\mathcal N$ from $\R_{\exp}$ to $\R_{\rexp}$, a step that we only know how to carry out for existential formulas. See Appendix~\ref{app:sec:proof-main:bounds_R_exp} for the complete proof.

The results of~\cite{binyamini2020effective,binyamini2022sharply} extend more generally to sharply o-minimal structures, a notion defined in these papers. Consequently, Main~Theorem~\ref{main:bounds_R_exp} extends beyond $\R_{\exp}$, allowing the usage of functions like $\erf$; see Appendix~\ref{app:beyond-Rexp} for more details.

\section{Related work}\label{sec:relatedwork}

\paragraph{Strategic classification.}
Since its introduction in \citep{bruckner2012static,hardt2016strategic},
the field of strategic classification has drawn much attention.
Many works have extended the basic setting to support
causal effects \citep{miller2020strategic,shavit2020causal,horowitz2023causal},
varied sources of uncertainty \citep{ghalme2021strategic,bechavod2022information,shao2023strategic,lechner2023strategic,rosenfeld2024oneshot},
and flexible user incentives \citep{levanon2022generalized,sundaram2023pac}.
Nonetheless, due to the challenges strategic behavior introduces,
most works restrict their focus to linear classifiers and $\ell_2$ costs.
Some advances have been made in establishing strategic learnability and sample complexity -- first for linear classifiers \citep{sundaram2023pac},
and then for broader structured hypothesis classes \citep{cohen2024learnability,trachtenberg2025nonlinear}.

A standard way to formalize the complexity of strategic classifiers is through the induced \emph{effective} class: for each hypothesis, one considers the classifier obtained after individuals best-respond to it, equivalently by composing the original hypothesis with its induced strategic response map \citep[Definition 2]{zhang2021incentive}. Building on this perspective, we express the induced strategic hypotheses as definable existential formulas. This provides a geometric model-theoretic way to study their complexity, and may be useful for future work on strategic learning.



\paragraph{Finite VC dimension of the original class is not enough.}
Several related works show that finite VC dimension of the original class need not control the complexity or learnability of the effective model class induced by user behavior.
Zhang and Conitzer~\cite{zhang2021incentive} study incentive-aware PAC learning through manipulation graphs,
and construct an example using non-uniform graphs (i.e., that can differ across users)
in which the effective VC can blow up arbitrarily.
Lechner and Urner~\cite{lechner2022learning} extend this framework 
and show a similar result for an ad-hoc `strategic' loss which differs from the conventional 0-1 loss.
Sundaram et al.~\cite{sundaram2023pac} study a richer response model in which users 
seek arbitrary (yet observable) predictions, rather than positive only,
and demonstrate VC blow-up using non-uniform (referred to as `instance-wise') neighborhoods.
While bearing similarities to our construction, 
Main~Theorem~\ref{main:vc-blowup} remains within the scope of the original strategic classification setting of Hardt et al.~\cite{hardt2016strategic}:
which neighborhood are continuous, simple, and uniform across users,
and learning aims to minimize the standard 0-1 loss.

Relatedly, Attias et al.~\cite{attias25a} study a setting where manipulating an instance $x$ also changes its true label $y$,
which they refer to as `PAC learning with improvements'.
This differs from the conventional strategic setting capturing gaming behavior in which $y$ remains unchanged.
Interestingly,
their non-learnability example is similar in spirit to our negative result: \cite[Example~2]{attias25a} gives a class of finite VC dimension that is not learnable with improvements, while our Main~Theorem~\ref{main:vc-blowup} gives a class whose ordinary VC dimension is $1$ but whose induced strategic class has infinite VC dimension for every radius $r>0$.
A closely related phenomenon also appears in adversarially robust learning;
from a strategic modeling perspective, this is akin to assuming users seek predictions
which are opposite to their true label---a special case of Sundaram et al.~\cite{sundaram2023pac}.
Here, Montasser, Hanneke, and Srebro~\citep{montasser19} construct a class of VC dimension at most $1$ for which proper robust PAC learning fails under metric-ball perturbations.

\paragraph{Geometric hypothesis classes.} 
There is a long line of work bounding the VC dimension of hypothesis classes
described by polynomial inequalities and related analytic operations.
Goldberg and Jerrum \citep{Goldberg1995} gave general VC-dimension bounds for
hypothesis classes parameterized by real numbers and defined by polynomial
conditions. Related real-algebraic and sign-pattern methods have been used in
learning theory to analyze semialgebraic classes, a partial list includes
\citep{ben1993localization,Ben-DavidES02,AlonMY16,Alon23}. Closely related counting questions have also been studied in combinatorics:
Sauermann~\citep{SAUERMANN2021107593} proved essentially tight bounds for graph families whose edge relations are determined by polynomial sign conditions. Another important line concerns neural networks, where VC-dimension bounds have been obtained for networks with piecewise polynomial and piecewise linear structure~\citep{bartlett98,bartlett19}.

Our semialgebraic bounds follow this tradition, but the strategic transformation introduces an existential quantifier, requiring quantitative
quantifier elimination before sign-pattern bounds can be applied.

\paragraph{Model-theoretic and tame-geometric tools.}
Quantifier elimination over real closed fields goes back to
Tarski~\citep{tarski1951decision} and Seidenberg~\citep{seidenberg1954}.
Effective algorithmic versions were later developed via cylindrical algebraic
decomposition by Collins~\citep{collins1975}, and refined by Renegar
\citep{Renegar1992I,Renegar1992II,Renegar1992III} and by
Basu--Pollack--Roy~\citep{BasuPollackRoy1996,Basu1999}.

Beyond the semialgebraic setting, o-minimality provides a general framework for
tame geometry. Wilkie proved that the real exponential field \(\R_{\exp}\) is
o-minimal~\citep{Wilkie1996modelcompleteness}. A closely related line, rooted
in Khovanskii's theory of fewnomials and Pfaffian functions
\citep{Khovanskii1991Fewnomials}, studies restricted Pfaffian structures:
Wilkie's theorem of the complement established their o-minimality
\citep{Wilkie1999Complement}. Effective bounds for Pfaffian and sub-Pfaffian
sets were developed by Gabrielov and
Vorobjov~\citep{GabrielovVorobjov2004,GabrielovVorobjov2009}. More recently,
Binyamini and Vorobjov obtained quantitative cell decomposition results for
restricted sub-Pfaffian sets
\citep{binyamini2020effective,binyamini2022sharply}, which we use in our
quantitative bounds.


Recent work of \cite{balcan25} studies data-driven algorithm design for parameterized algorithm families whose performance has Pfaffian or piecewise-Pfaffian structure, obtaining learning guarantees in both statistical and online settings. In their online-learning analysis, \cite{balcan25} use the sub-Pfaffian cell-decomposition bounds of  \cite{GABRIELOV2001179} to control the number of regions induced by Pfaffian transition boundaries. 
Our application requires a closely related but slightly different kind of control: in order to derive ERM sample-complexity bounds via growth-function estimates, we need bounds not only on the number of cells in a decomposition, but also bounds on the complexity. For this purpose, the effective cell-decomposition theorem of \cite{binyamini2020effective} is particularly well suited.

These tools have also appeared in learning theory. In particular, Karpinski and Macintyre~\citep{KarpinskiMacintyre1997} used Pfaffian methods to derive VC-dimension bounds for sigmoidal networks and more general Pfaffian neural networks, providing an early connection between Pfaffian geometry and learning theory. In a more general model-theoretic direction, Johnson and Laskowski~\citep{LaskowskiCompression} showed that families definable with \(k\) parameters in an o-minimal structure have polynomially bounded growth function with exponent \(k\). Relatedly, Livni and Simon~\cite{LivniS13} used model-theoretic tools to derive sample compression bounds for definable families.

Our work brings this model-theoretic and tame-geometric perspective to strategic PAC learnability.

\section{Discussion} 
\label{sec:discussion}

\paragraph{Summary of results.}
Our results show that strategic behavior can fundamentally change the
learnability of a hypothesis class: even a VC-dimension-one class on the real
line can become non-learnable after interval-neighborhood expansion. On the
positive side, we show that this pathology disappears under tame geometric
definability assumptions: \(\R_{\exp}\)-definability gives qualitative PAC
learnability, while semialgebraic and existential \(\R_{\exp}\) descriptions
yield explicit sample-complexity and VC-dimension bounds in terms of their
description complexity.

\paragraph{Limitations.}
These positive results should be read within the limits of the tame-definable
framework. The approach excludes neighborhood relations with unbounded
oscillatory or periodic structure, does not give quantitative bounds for
arbitrary quantified \(\R_{\exp}\)-formulas, and does not automatically apply to
distributional neighborhoods defined through integration or infinite-dimensional
optimization. Such distributional models are covered only when the resulting
neighborhood relation admits a controlled finite-dimensional definable
description. We discuss these limitations in
Appendix~\ref{app:limitations}.

\paragraph{Open directions.}
Our framework captures the reachability aspect of strategic behavior: a point is
positive if it can reach some positive point. A natural next step is to enrich
this model by incorporating how agents choose among reachable points, and how
these choices reshape the data distribution. It would be interesting to
understand whether model-theoretic tools can also be used to analyze such richer
response models.

Another direction is to sharpen the quantitative bounds. The dependence on the
format in Main~Theorem~\ref{main:bounds_R_exp} is inherited from general
effective cell decomposition theorems and is unlikely to be optimal for
learning-theoretic applications. Improving these bounds, or obtaining sharper
estimates for special classes of strategic models, would make the theory more
informative.

\section{Examples}

\label{sec:examples}

The theorems in Section~\ref{sec:main-results} apply to a broad collection
of natural strategic settings. In particular, many commonly used hypothesis
classes and neighborhood systems
\(
\mathcal N=\{N_x:x\in\R^d\}
\)
are definable either in \(\R\) or in \(\R_{\exp}\). We now describe several representative examples to which our main theorems apply.

\subsection{Representative examples}
\label{sec:examples-representative}

\paragraph{Halfspaces with neighborhoods depending on the representative point.}

Consider the base class~\(\mathcal H\) of halfspaces in \(\R^2\), defined by
\[
  \Phi_{\text{Halfspace}}(x,a)
  :=
  a_1x_1+a_2x_2+a_3>0,
\]
where \(x\in\R^2\) and \(a\in\R^3\).

Define the neighborhood system \(\mathcal N=\{N_x:x\in\R^2\}\) by
\[
  y\in N_x
  \iff
  \|x-y\|_2 \le r(x),
  \qquad
  r(x_1,x_2)=\max\{x_2,0\}.
\]
Here \(x_2\) can be interpreted as the amount of strategic resources available
to the point: points with larger \(x_2\) can move farther in order to obtain a
positive classification, while points with \(x_2<0\) cannot move at all.

Equivalently, the neighborhood relation is defined by the semialgebraic formula
\[
  \Phi_{\mathcal N}(x,y)
  :=
  \Bigl(x_2\ge 0 \wedge \|x-y\|_2^2\le x_2^2\Bigr)
  \;\vee\;
  \Bigl(x_2<0 \wedge x=y\Bigr).
\]
Thus \(\mathcal N\) is semialgebraic. Therefore
Theorem~\ref{main:bounds_R} applies to the strategically transformed
halfspace class induced by \(\mathcal N\), and gives
\[
  m_{\mathrm{ERM}}^{\mathrm{real}}(\varepsilon,\delta)
  =
  O\left(
    \frac{\log(1/\varepsilon)+\log(1/\delta)}{\varepsilon}
  \right).
\]

\paragraph{Polynomial classifiers with \(\ell_\infty\) neighborhoods.}

Fix a degree bound \(D\ge 1\). Consider the base class \(\mathcal H\) of polynomial
threshold functions in \(\R^l\), defined by
\[
  \Phi_{\mathcal H}(x,\theta)
  :=
  P_\theta(x)>0,
\]
where \(P_\theta\) ranges over all real polynomials in \(l\) variables of degree
at most \(D\), parametrized by their coefficients \(\theta\).

Fix \(r>0\), and define the neighborhood system
\(\mathcal N_\infty=\{N_x:x\in\R^l\}\) by
\[
  y\in N_x
  \iff
  \|x-y\|_\infty \le r .
\]
Equivalently, the neighborhood relation is defined by
\[
  \Phi_{\mathcal N_\infty}(x,y)
  \iff
  \bigwedge_{i=1}^l
  \left(
    -r \le x_i-y_i \le r
  \right).
\]

Both \(\Phi_{\mathcal H}\) and \(\Phi_{\mathcal N_\infty}\) are
quantifier-free semialgebraic formulas:
\(\Phi_{\mathcal H}\) involves one polynomial inequality of degree at most \(D\),
while \(\Phi_{\mathcal N_\infty}\) is given by \(2l\) linear inequalities.

Therefore Theorem~\ref{main:bounds_R} applies to the strategically transformed
class \(\mathcal H^{\mathcal N_\infty}\), whose defining formula is
\[
  \exists y\;
  \Bigl(
    \Phi_{\mathcal N_\infty}(x,y)
    \wedge
    P_\theta(y)>0
  \Bigr).
\]

Here the number of parameters is
\[
k=\binom{l+D}{D},
\]
corresponding to the coefficients of a degree-\(D\) polynomial in \(l\)
variables. The defining formula uses \(2l\) linear inequalities for the
\(\ell_\infty\)-neighborhood, together with one polynomial inequality
\(P_\theta(y)>0\). Thus, in the notation of
Theorem~\ref{main:bounds_R}, one may take \(s=2l+1\).

\paragraph{Decision trees with polynomial decision rules.}

Fix integers \(D_{tree},q\ge 1\). A \emph{decision tree with polynomial
decision rules} is a binary decision tree of depth at most \(D_{tree}\) in
which every internal node \(v\) is labeled by a polynomial inequality
\[
P_v(x)\ge 0,
\]
where \(P_v\) is a real polynomial in \(l\) variables of degree at most \(q\),
and every leaf is labeled by either \(0\) or \(1\). An input \(x\in\R^l\) is
classified by starting at the root and moving left or right according to the
sign of the corresponding polynomial inequality until reaching a leaf.

Our framework applies to the class of all such trees. Indeed, there are only
finitely many tree topologies and leaf labelings of depth at most
\(D_{tree}\), so it suffices to encode each fixed topology by a single
semialgebraic formula.

A complete binary tree of depth \(D_{tree}\) has at most
\[
T=2^{D_{tree}}-1
\]
internal nodes. A polynomial of degree at most \(q\) in \(l\) variables is
determined by
\[
M=\binom{l+q}{q}
\]
coefficients. Thus the entire tree can be parametrized by at most \(TM\) real
parameters.

For every leaf \(\lambda\), there is a unique path from the root of the tree to
\(\lambda\). Along this path, each internal node contributes one polynomial
condition: either \(P_v(x)\ge 0\) if the path follows the right branch, or
\(P_v(x)<0\) if it follows the left branch.

For example, suppose a depth-\(2\) tree first tests
\[
P_1(x)\ge 0,
\]
and then, on the right branch, tests
\[
P_2(x)\ge 0.
\]
Then the inputs reaching the leaf corresponding to “right then left” are exactly
those satisfying
\[
P_1(x)\ge 0
\quad\wedge\quad
P_2(x)<0.
\]

More generally, for every leaf labeled by \(1\), the set of inputs reaching
that leaf is described by a conjunction of polynomial inequalities. Since the
tree outputs \(1\) whenever the input reaches one of the positively labeled
leaves, the set classified positively by the tree is obtained by taking the
disjunction over all such leaves. Hence the decision region of the tree is
defined by a formula of the form
\[
\bigvee_{\lambda\in L_1}
\quad
\bigwedge_{v\in \operatorname{path}(\lambda)}
\bigl(P_v(x)\;\sigma_{v,\lambda}\;0\bigr),
\]
where \(L_1\) is the set of leaves labeled by \(1\), and each
\(\sigma_{v,\lambda}\) denotes either \(\ge \) or \(<\), according to the
branch chosen at node \(v\).

This is a quantifier-free semialgebraic formula. Since there are only finitely
many tree topologies and leaf labelings of depth at most \(D_{tree}\), taking
a finite disjunction over all of them gives a single formula defining the entire
class of depth-\(D_{tree}\) decision trees with polynomial tests of degree at
most \(q\). Consequently, this class is definable in \(\R\) with at most
\[
k
=
(2^{D_{tree}}-1)\binom{l+q}{q}
\]
real parameters, where \(l\) is the input dimension. These parameters
correspond to the coefficients of one degree-\(q\) polynomial at each possible
internal node of the complete depth-\(D_{tree}\) binary tree.

\begin{remark*}
Bounded-complexity semialgebraic threshold classes are handled in the same way.
After fixing the Boolean structure of a formula with \(s\) polynomial equalities
and inequalities of degree at most \(q\), the only parameters are the
coefficients of these polynomials. The resulting positive region is already a
quantifier-free semialgebraic formula, so the same uniform-definability argument
as above applies.
\end{remark*}

\paragraph{Neural networks.}

Feedforward neural networks also fit naturally into our framework. Fix layer
widths
\[
d_0,d_1,\dots,d_L,
\]
where \(d_0=l\) is the input dimension, and let
\[
\sigma\colon \R\to\R
\]
be an activation function. A feedforward network with activation \(\sigma\) is
a function of the form
\[
f_\theta(x)
=
A_L
\sigma\!\Bigl(
A_{L-1}
\sigma\!\bigl(
\cdots
\sigma(A_1x+b_1)
\cdots
\bigr)
+b_{L-1}
\Bigr)
+b_L,
\]
where the matrices \(A_i\) and vectors \(b_i\) are the trainable parameters,
collectively denoted by \(\theta\).

If \(\sigma\) were omitted, the composition above would collapse to a single
affine map, and the resulting class would already be definable in \(\R\).
Thus the definability of the network class is determined entirely by the
definability of the activation function.

Many standard activations are definable in tame structures. For example, the
ReLU activation \(\sigma(t)=\max\{t,0\}\) is semialgebraic, and therefore
definable in \(\R\). The logistic sigmoid
\(\sigma(t)=\frac{1}{1+\exp(-t)}\) is definable in \(\R_{\exp}\). More
generally, activations constructed from arithmetic operations together with
\(\exp\), such as \(\tanh\), softplus, and ELU, are also definable in
\(\R_{\exp}\).

Consequently, for every fixed architecture and every definable activation
function, the corresponding thresholded network class
\[
\mathcal H
=
\{x\mapsto \mathbf 1[f_\theta(x)\ge 0] : \theta\}
\]
is uniformly definable in the same structure. An explicit logical formula encoding the computation of sigmoid neural
networks is given in Example~\ref{app:example-nn-definability}.

\begin{remark}[Composition and definability]
The neural-network example illustrates a general closure principle: definable
maps may be combined by finite algebraic operations and finite composition
without leaving the ambient structure. Thus, once the elementary building
blocks of a model are definable, for instance affine maps and the activation
function, any fixed finite architecture obtained by composing these blocks is
again definable. In this sense, definability is well suited to compositional
model classes such as neural networks.
\end{remark}



\paragraph{\(\ell_p\)-distance neighborhoods.}

\(\ell_p\)-distance neighborhoods are among the standard ways to model bounded
perturbations in feature space. They arise when an individual may modify its
features only within a limited budget, measured in a fixed norm. Different
choices of \(p\) capture different types of constraints: \(p=2\) gives
Euclidean perturbations, \(p=1\) bounds the total coordinate-wise change, and
\(p=\infty\) gives coordinate-wise bounded deviations. The same definability
argument also covers fixed \(0<p<1\), where the \(\ell_p\) quasi-norm
increasingly favors changes supported on few coordinates; in the limit
\(p\to 0\), this resembles a Hamming-type constraint on the number of modified
features.

Formally, for fixed \(p>0\), define the neighborhood system
\(\mathcal N_p=\{N_x:x\in\R^l\}\) by
\[
  y\in N_x
  \iff
  \|x-y\|_p \le 1,
\]
equivalently,
\[
  \sum_{i=1}^l |x_i-y_i|^p \le 1.
\]
The corresponding neighborhood relation is definable in \(\R_{\exp}\). Indeed,
absolute values can be expressed using algebraic constraints, and for \(t>0\)
the map \(t\mapsto t^p\) can be written as
\[
  t^p=\exp(p\log t),
\]
with the case \(t=0\) handled separately. For completeness, we give an explicit
formula \(\Phi_{\mathcal N_p}(x,y)\) defining \(\ell_p\)-balls in
Example~\ref{app:example-lp-definability}. In particular, the formula
\(\Phi_{\mathcal N_p}(x,y)\) has format and degree \(O(l)\).

Consequently, whenever one has a complexity bound for a formula defining the
base hypothesis class \(\mathcal H\), Theorem~\ref{main:bounds_R_exp} gives an
effective ERM sample-complexity bound for the strategically transformed class
\(\mathcal H^{\mathcal N_p}\).

For example, let
\(A=\{\alpha_i\}_{i\in I}\subseteq \N^l\) be a set of multiindices with
\(|A|=O(l)\), and consider sparse polynomial threshold functions of the form
\[
  P_\theta(x)=\sum_{i\in I}\theta_i x^{\alpha_i}.
\]
Such polynomials are commonly called \emph{fewnomials}.

This class has only \(O(l)\) parameters and format \(O(l)\), while its degree
\[
q:=\max_{i\in I}|\alpha_i|
\]
may be arbitrarily large independently of \(l\). Thus
Theorem~\ref{main:bounds_R_exp} applies to the strategically transformed class
\(\mathcal H^{\mathcal N_p}\), giving non-trivial bounds.

In some cases, there may be no effective way to control the format and degree
of a formula defining the hypothesis class \(\mathcal H\). In such cases, one
can instead use the asymptotic bounds provided by
Main Theorem~\ref{main:R_exp}.

\paragraph{KL-divergence neighborhoods on finite domains.}

KL-divergence neighborhoods arise naturally in settings where one models
multiplicative or information-theoretic perturbations of distributions. They
are also standard in robust optimization and distribution-shift models, where
one allows the underlying distribution to vary within a KL ball around a
nominal distribution.

Let \(X=\{a_1,\ldots,a_l\}\) be a finite set. We consider a strategic
classification problem whose domain is the simplex
\[
  \Delta_l
  =
  \left\{
    x\in \R_{\ge 0}^l : \sum_{i=1}^l x_i=1
  \right\}.
\]
A point \(x=(x_1,\ldots,x_l)\) represents a probability distribution on \(X\),
where \(x_i\) is the mass assigned to \(a_i\).

Define the neighborhood system
\(\mathcal N_{\mathrm{KL}}=\{N_x:x\in\Delta_l\}\) as follows.
On the interior of the simplex,
\[
  y\in N_x
  \iff
  \mathrm{KL}(x\|y)\le 1,
\]
where
\[
  \mathrm{KL}(x\|y)
  =
  \sum_{i=1}^l
  x_i\log\left(\frac{x_i}{y_i}\right).
\]

Equivalently, the neighborhood relation is defined by the formula
\[
  \Phi_{\mathcal N_{\mathrm{KL}}}(x,y)
  \iff
  \sum_{i=1}^l
  x_i\log\left(\frac{x_i}{y_i}\right)
  \le 1,
\]
together with the domain restrictions
\[
  x_i>0,\qquad y_i>0,\qquad
  \sum_{i=1}^l x_i=\sum_{i=1}^l y_i=1.
\]

This relation is definable in \(\R_{\exp}\), since the logarithm is definable
from the exponential function and the remaining operations are algebraic. For
completeness, we give an explicit defining formula in
Example~\ref{app:example-kl-definability}.

\paragraph{Earth mover distance (Wasserstein-1) neighborhoods.}

The earth mover distance (EMD), also known as the Wasserstein-1 distance,
measures the cost of transforming one distribution into another by transporting
mass across a ground metric. Unlike \(\ell_p\) or KL distances, it captures the
\emph{geometry} of the domain, penalizing movement of mass proportionally to
the distance it travels. This makes it a natural model in applications
involving structured domains, such as histograms, spatial data, or ordered
categories.

Formally, let \(X=\{a_1,\dots,a_l\}\) be a finite domain equipped with a ground
metric
\[
  \rho\colon X\times X\to\R_{\ge 0},
\]
assumed to be definable in the underlying structure.\footnote{
In the finite case, this simply means that each value \(\rho(i,j)\) is a fixed
real constant appearing in the formula. For example, if \(X=\{1,2,3\}\), one may
specify
\[
  \rho(1,2)=1,\quad \rho(1,3)=2,\quad \rho(2,3)=1,
\]
and these constants are treated as part of the description of the neighborhood.
More generally, one may allow \(\rho\) to be given by a definable function when
working over continuous domains.
}

As in the previous example, the domain is the family of probability
distributions over \(X\), identified with the simplex
\[
  \Delta_l
  =
  \left\{
    x\in \R_{\ge 0}^l : \sum_{i=1}^l x_i=1
  \right\}.
\]

For two distributions \(x,y\in\Delta_l\), the earth mover distance is
\[
  \mathrm{EMD}(x,y)
  :=
  \min_{\gamma \in \Gamma(x,y)}
  \sum_{i,j=1}^l \gamma_{ij}\rho(i,j),
\]
where \(\Gamma(x,y)\) is the set of couplings between \(x\) and \(y\), i.e.,
nonnegative matrices \(\gamma=(\gamma_{ij})\) such that
\[
  \sum_{j=1}^l \gamma_{ij}=x_i
  \quad\text{and}\quad
  \sum_{i=1}^l \gamma_{ij}=y_j.
\]

Define the neighborhood system
\(\mathcal N_{\mathrm{EMD}}=\{N_x:x\in\Delta_l\}\) by
\[
  y\in N_x
  \iff
  \mathrm{EMD}(x,y)\le 1.
\]

Equivalently, the neighborhood relation is defined by
\[
  \Phi_{\mathcal N_{\mathrm{EMD}}}(x,y)
  \iff
  \exists \gamma\in\R_{\ge 0}^{l\times l}
\]
such that
\[
  \sum_{j=1}^l \gamma_{ij}=x_i \ (1\le i\le l),
  \qquad
  \sum_{i=1}^l \gamma_{ij}=y_j \ (1\le j\le l),
  \qquad
  \sum_{i,j=1}^l \gamma_{ij}\rho(i,j)\le 1.
\]

This relation is definable in \(\R\), since the constraints defining feasible
transport plans are linear, and the transportation cost is a linear function
of \(\gamma\). Therefore this neighborhood system can be used in any of the
three main theorems, depending on the hypothesis class being transformed. For
completeness, we provide an explicit defining formula in
Example~\ref{app:example-emd-definability}.

\paragraph{KL-divergence for Gaussian families.}

As in the discrete case above, KL-divergence is defined via an integral,
\[
\mathrm{KL}(p\|q)
=
\int_{\R} p(x)\log\frac{p(x)}{q(x)}\,dx.
\]
At first sight, this falls outside the definable framework, since our setting
is not closed under integration.

However, for Gaussian families, the integral admits a closed-form expression.
For instance, for the location family
\[
p_\mu=\mathcal N(\mu,1),
\]
one has
\[
\mathrm{KL}(p_\mu\|p_{\mu'})
=
\frac{(\mu-\mu')^2}{2}.
\]

More generally, for Gaussians with arbitrary means and variances, the
KL-divergence is given by an explicit expression involving polynomials,
logarithms, and division. Consequently, the neighborhood system
\(\mathcal N_{\mathrm{KL}}\) defined by
\[
  p_{\mu'}\in N_{p_\mu}
  \iff
  \mathrm{KL}(p_\mu\|p_{\mu'})\le 1
\]
is definable in \(\R_{\exp}\).

Thus, despite being defined via an integral, KL-divergence neighborhoods for
Gaussian families fall within the scope of
Theorem~\ref{main:bounds_R_exp}.

\paragraph{Total-variation neighborhoods for Gaussian families.}

In contrast to KL-divergence, total-variation (TV) distance typically does
not admit a closed-form algebraic expression, and therefore provides a
natural example where integration leads outside the definable
\(\R_{\exp}\) setting.

Given a parametric family \(\{p_\theta\}\), define the neighborhood system
\(\mathcal N_{\mathrm{TV}}=\{N_{p_\theta}\}\) by
\[
p_{\theta'}\in N_{p_\theta}
\iff
d_{\mathrm{TV}}(p_\theta,p_{\theta'}) \le \varepsilon,
\]
where
\[
d_{\mathrm{TV}}(p,q)
=
\frac{1}{2}\int_{\R} |p(x)-q(x)|\,dx.
\]

Unlike the KL case, the integral defining TV does not, in general,
simplify to a finite expression in terms of elementary operations.

For instance, for Gaussian location families,
\[
p_\mu = \mathcal N(\mu,1),
\]
one has the explicit formula
\[
d_{\mathrm{TV}}(p_\mu,p_{\mu'})
=
2\Phi\!\left(\frac{|\mu-\mu'|}{2}\right)-1,
\]
where \(\Phi\) is the Gaussian cumulative distribution function. This
function is expressed via the error function
\[
\operatorname{erf}(x)
=
\frac{2}{\sqrt{\pi}}\int_0^x e^{-t^2}\,dt,
\]
which is known not to be definable in \(\R_{\exp}\)
(see, e.g., \cite[Chapter~4.4]{piatkowski2019exponential}).

However, this does not imply that such neighborhoods are intractable.
The key observation is that the error function is a Pfaffian function
(see Definition~\ref{def:pfaff}). Indeed, setting
\[
f_1(x)=e^{-x^2},
\qquad
f_2(x)=\operatorname{erf}(x),
\]
we obtain a Pfaffian chain:
\[
f_1'(x)=-2x f_1(x),
\qquad
f_2'(x)=\frac{2}{\sqrt{\pi}} f_1(x).
\]
Thus \(\operatorname{erf}\) is definable in the restricted Pfaffian
structure \(\R_{\rPfaff}\).

Consequently, after restricting the parameters to a bounded domain,
the neighborhood relation induced by Gaussian TV distance becomes
definable in \(\R_{\rPfaff}\), and the quantitative bounds from
Appendix~\ref{app:beyond-Rexp} apply to this setting. This example
illustrates that while definability is not preserved under integration
in general, many natural statistical distances remain tame after
passing to the Pfaffian setting.

\smallskip

\begin{remark}[Definable dependence of neighborhoods]
The examples above can be further generalized by allowing the neighborhoods
\(N_x\) to depend on \(x\) in a definable way. If this dependence can be
controlled, and if one can bound the format and degree of the resulting
formula \(\Phi_{\cN}(X,Y)\) defining the neighborhood system, then the same
quantitative bounds apply.

More precisely, the parameters defining the neighborhoods may themselves be
given by definable functions of \(x\). For example, one may consider
KL-balls with radius \(\varepsilon=\varepsilon(x)\), \(\ell_p\)-balls where
\(p=p(x)\) varies with \(x\), or earth mover distances with a ground metric
\(\rho=\rho(x)\). As long as these dependencies are definable in the
underlying structure, the resulting neighborhood system
\(\cN=\{N_x:x\in\R^d\}\) remains definable. The corresponding format and
degree can then be bounded by a straightforward semantic argument.

This includes, as a special case, piecewise-defined neighborhood systems
obtained by partitioning the domain into finitely many definable regions and
assigning a different defining formula \(\Phi_{\cN}(X,Y)\) on each region.
\end{remark}

\subsection{Explicit formulas}

\begin{example}
\label{app:example-lp-definability}
If \(p < \infty\) then we can define the neighborhood relation corresponding to \(\|x-x'\|_p\le 1\) in \(\mathbb R^l\) as
\begin{multline*}
 \Phi_{\mathcal N_p}(x,x') \iff
 \exists y_1,\dots,y_l,z_1,\dots,z_l \; \Bigg[
 \sum_{i=1}^l y_i \leq 1 \land \\ \bigwedge_{i=1}^l \bigl((x_i = x_i' \land y_i = 0) \lor (y_i = \exp (p z_i) \land (\exp z_i = x_i - x_i' \lor \exp z_i = x_i' - x_i))\bigr)
 \Bigg]\,.
\end{multline*}

If \(p = \infty\) then the neighborhood relation corresponding to \(\|x - x'\|_\infty \leq 1\) in \(\mathbb R^l\) is given by
\[
 \Phi_{\mathcal N_\infty}(x,x') \iff \bigwedge_{i=1}^l \bigl( x_i - x_i' \le 1 \land x_i' - x_i \le 1 \bigr)\,.
\]
\end{example}

\begin{example}
\label{app:example-kl-definability}
The neighborhood relation corresponding to \(\mathrm{KL}(x\|x') \leq 1\) for distributions on a finite set of size \(l\), represented as points in \(\mathbb R^l\) is
\begin{multline*}
 \Phi_{\mathcal N_{\mathrm{KL}}}(x,x') \iff \exists z_1,\dots,z_l \Bigg[
 \bigwedge_{i=1}^l (x_i \ge 0 \land x_i' \ge 0) \land \sum_{i=1}^l x_i = 1 \land \sum_{i=1}^l x_i' = 1 \land \\ \sum_{i=1}^l x_i z_i \leq 1 \land
 \bigwedge_{i=1}^l \bigl((x_i > 0 \land x_i' > 0 \land x_i' \exp z_i = x_i)\lor(x_i = 0 \land z_i = 0)\bigr)
 \Bigg] \,.
\end{multline*}
\end{example}

\begin{example}
\label{app:example-emd-definability}
The neighborhood relation corresponding to \(\mathrm{EMD}(x,x')\) for distributions on a finite set of size \(l\), represented as points in \(\mathbb R^l\) with respect to a metric \(\rho\colon [l] \times [l] \to \mathbb{R}_+\) (which we think of as constants) is
\begin{multline*}
 \Phi_{\mathcal N_{\mathrm{EMD}}}(x,x') \iff \exists \gamma_{ij}|_{1 \leq i,j \leq l} \Bigg[
 \bigwedge_{i=1}^l (x_i \ge 0 \land x_i' \ge 0) \land \sum_{i=1}^l x_i = 1 \land \sum_{i=1}^l x_i' = 1 \land \\  \bigwedge_{i,j=1}^l \gamma_{ij} \ge 0 \land
 \bigwedge_{i=1}^l \sum_{j=1}^l \gamma_{ij} = x_i \land
 \bigwedge_{j=1}^l \sum_{i=1}^l \gamma_{ij} = x_j' \land
 \sum_{i,j=1}^l \gamma_{ij} \rho(i,j) \leq 1
 \Bigg] \,.
\end{multline*}
\end{example}

\begin{example}
\label{app:example-nn-definability}

Fix a feedforward architecture with layer widths
\[
d_0,d_1,\dots,d_L,
\]
where \(d_0=l\) is the input dimension and $d_L = 1$, and logistic sigmoid activation
\[
\sigma(t)=\frac{1}{1+\exp(-t)}.
\]

A network in this architecture is determined by weight matrices
\(A^{(j)}\in\R^{d_j\times d_{j-1}}\) and bias vectors
\(b^{(j)}\in\R^{d_j}\) for \(1\le j\le L\). Let
\[
\theta=(A^{(1)},b^{(1)},\dots,A^{(L)},b^{(L)})
\]
denote the trainable parameters. Writing \(z^{(0)}:=x\), the network is
computed recursively by
\[
z^{(j)}
=
\sigma\bigl(A^{(j)}z^{(j-1)}+b^{(j)}\bigr),
\qquad
1\le j\le L,
\]
where \(\sigma\) is applied coordinate-wise. The associated thresholded network
class is
\[
\mathcal H
=
\left\{
x\mapsto \mathbf 1[z^{(L)}_1\ge 0]
:
\theta
\right\}.
\]

This class is definable in \(\R_{\exp}\). Indeed, introduce auxiliary variables
\(r_i^{(j)},q_i^{(j)},z_i^{(j)}\), where \(r_i^{(j)}\) is the affine input to
the neuron, \(q_i^{(j)}=\exp(-r_i^{(j)})\), and \(z_i^{(j)}\) is the neuron
output. Denote $z^{(0)}$ by $x$. The network computation is encoded by
\begin{multline*}
\Phi_{\mathrm{NN}}(x,\theta)
\iff
\exists
r_i^{(j)},q_i^{(j)},z_i^{(j)}
\mid_{1\le j\le L,\;1\le i\le d_j}
\Bigg[
\bigwedge_{j=1}^L
\bigwedge_{i=1}^{d_j}
\Bigg(
r_i^{(j)}
=
\sum_{s=1}^{d_{j-1}}
A^{(j)}_{is} z_s^{(j-1)}
+
b_i^{(j)}
\;\land \\[0.3em]
q_i^{(j)}
=
\exp(-r_i^{(j)})
\;\land\;
z_i^{(j)}(1+q_i^{(j)})=1
\Bigg)
\;\land\;
r^{(L)}_1\ge 0
\Bigg].
\end{multline*}

Thus sigmoid neural networks of fixed architecture are uniformly definable in
\(\R_{\exp}\). Matrix-vector notation is used only as shorthand: all equations
expand into coordinate-wise polynomial equations together with applications of
\(\exp\).
\end{example}

\subsection{Limitations of the definability framework}
\label{app:limitations}
\paragraph{Outside the tame-definable regime.}

Our positive results rely on tameness. Although we focus primarily on
hypothesis classes and neighborhood systems definable in \(\R_{\exp}\), the
same approach also applies beyond the specific examples emphasized in the main
text; see the Gaussian examples in Appendix~\ref{sec:examples} and the
extensions discussed in Appendix~\ref{app:beyond-Rexp}. However, not every
natural neighborhood system can be accommodated by this model-theoretic
framework.

For example, define the neighborhood system \(\mathcal N=\{N_x:x\in\R\}\) by
\[
y\in N_x
\Longleftrightarrow
|\sin(y-x)| \le \varepsilon.
\]
One cannot obtain analogues of Main~Theorem~\ref{main:R_exp} simply by adding
\(\sin\) to the language of the real field. The obstruction is not merely
technical. The neighborhood
\[
N_0=\{y:|\sin y|\le\varepsilon\}
\]
is an infinite periodic union of disjoint intervals, repeating the same local
pattern across infinitely many regions. Such behavior is incompatible with the
tameness underlying \(\R_{\exp}\), formalized by o-minimality; see
Appendix~\ref{app:formal-definitions} for the formal definition of o-minimality.

More importantly, in structures that permit this kind of unbounded periodic
repetition, one cannot expect general learnability guarantees for strategic
expansions of learnable hypothesis classes. A closely related mechanism is exploited in
Lemma~\ref{lem:frac}, which yields infinite VC dimension after neighborhood
expansion; see also Remark~\ref{rem:definable-Z-suffices}. Thus the exclusion of
oscillatory or periodic structure is an essential feature of the framework, not
just a limitation of the proof.

\paragraph{Qualitative versus quantitative guarantees in $\R_{\exp}$.}

The quantitative bounds in Theorem~\ref{main:bounds_R_exp} apply to existentially definable formulas, and therefore do not directly cover formulas with
universal quantifiers. This limitation is quantitative rather than qualitative:
Main Theorem~\ref{main:R_exp} still applies to arbitrary first-order definable hypothesis classes and neighborhood systems in $\R_{\exp}$.

For example, robustly defined hypothesis classes of the form
\[
\Phi_{\mathcal H}(x,a)
\Longleftrightarrow
\forall y \;\bigl(|y-x|\le r \;\Rightarrow\; f_a(y)\ge 0\bigr)
\]
are first-order definable whenever \(f_a\) is definable, and hence fall under
the qualitative guarantee. However, they are not covered by our quantitative
existential bounds unless the universal condition can be converted, with
controlled complexity, into an equivalent existential or quantifier-free form.

\paragraph{Distributional and optimization-based neighborhoods.}
Our framework is not closed under operations such as integration or optimization over infinite-dimensional spaces. Consequently, some natural neighborhood systems modeling strategic behavior fall outside the definable regime. This does not mean that all distributional neighborhood systems are excluded: in some special cases, integral-defined quantities admit explicit finite-dimensional descriptions and therefore remain definable; see Appendix~\ref{sec:examples} for Gaussian examples. However, such reductions are not available in general.

For example, consider the neighborhood system \(\mathcal N_{\mathrm{EMD}}\) defined by
\[
p_{\theta'}\in N_{p_\theta}
\Longleftrightarrow
\mathrm{EMD}(p_\theta,p_{\theta'}) \le \varepsilon,
\]
where
\[
\mathrm{EMD}(p,q)
=
\inf_{\gamma\in\Gamma(p,q)}
\int_{\R^d\times\R^d}
\|x-y\|\,d\gamma(x,y),
\]
and \(\Gamma(p,q)\) denotes the set of couplings of \(p\) and \(q\).

Even when the distributions \(p_\theta\) are definable, the induced
neighborhood relation need not be first-order definable in
\(\R_{\exp}\), since the definition involves optimization over an
infinite-dimensional family of transport plans. This should be contrasted
with the finite-domain Earth-Mover example in
Appendix~\ref{sec:examples}, where the transport problem is finite-dimensional and therefore admits a definable formulation.

\section{Technical Background}

\label{app:formal-definitions}

\subsection{VC dimension and growth functions}
\label{app:vc}

Let \(\mathcal H\subseteq \{0,1\}^X\) be a hypothesis class on a domain \(X\).

We freely identify subsets \(A\subseteq X\) with their indicator functions
\(\mathbf 1_A\colon X\to\{0,1\}\). Thus, when convenient, we will use
set-theoretic notation interchangeably with functional notation for hypothesis
classes.

\begin{definition}[Trace and growth function]
Let \(A\subseteq X\) be finite. The \emph{trace} of \(\mathcal H\) on \(A\) is
\[
  \mathcal H|_A
  :=
  \{
    (h(x))_{x\in A}
    :
    h\in\mathcal H
  \}.
\]
The \emph{growth function} of \(\mathcal H\) is
\[
  \Pi_{\mathcal H}(m)
  :=
  \max_{A\subseteq X,\;|A|=m}
  |\mathcal H|_A|.
\]
\end{definition}

\begin{definition}[VC dimension]
A finite set \(A\subseteq X\) is \emph{shattered} by \(\mathcal H\) if
\[
  \mathcal H|_A
  =
  \{0,1\}^A.
\]
The \emph{VC dimension} of \(\mathcal H\) is
\[
  \VCdim(\mathcal H)
  :=
  \sup
  \{
    |A|
    :
    A\subseteq X
    \text{ is shattered by }\mathcal H
  \}.
\]
\end{definition}

The following classical bound relates the growth function to the VC dimension.

\begin{theorem}[Sauer--Shelah--Perles]
\label{thm:ssp}
If \(\VCdim(\mathcal H)=d<\infty\), then for all \(m\ge d\),
\[
  \Pi_{\mathcal H}(m)
  \le
  \sum_{i=0}^d \binom{m}{i}
  \le
  \left(\frac{em}{d}\right)^d.
\]
\end{theorem}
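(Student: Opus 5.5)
The plan is to prove the Sauer--Shelah--Perles lemma by the standard shifting/induction argument, which gives the binomial sum bound, and then to derive the exponential-form bound by a separate elementary estimate. Write $\Phi_d(m)=\sum_{i=0}^d\binom{m}{i}$. The key intermediate claim is the stronger statement (Pajor's lemma): for every finite $A\subseteq X$, the trace $\mathcal H|_A$ has at most as many elements as there are subsets of $A$ shattered by $\mathcal H|_A$. Since $\VCdim(\mathcal H)=d$, every shattered subset of $A$ has size at most $d$. Hence the number of shattered subsets is at most $\Phi_d(|A|)$, and taking the maximum over $|A|=m$ gives the first inequality.

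I will prove the intermediate claim by induction on $|A|$ for an arbitrary family $\mathcal F\subseteq\{0,1\}^A$. The case $|A|=0$ is trivial, since $\mathcal F$ then has at most one element and the empty set is shattered iff $\mathcal F\neq\emptyset$. For the inductive step, fix $x\in A$ and set $A'=A\setminus\{x\}$. Let $\mathcal F_0$ be the set of restrictions to $A'$ of members of $\mathcal F$. Let $\mathcal F_1$ be the set of patterns $g$ on $A'$ whose two extensions, with $x\mapsto 0$ and $x\mapsto 1$, both lie in $\mathcal F$. Then $|\mathcal F|=|\mathcal F_0|+|\mathcal F_1|$.

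By induction, $|\mathcal F_0|$ is at most the number of sets $S\subseteq A'$ shattered by $\mathcal F_0$, and each such $S$ is shattered by $\mathcal F$. Similarly, $|\mathcal F_1|$ is at most the number of sets $S\subseteq A'$ shattered by $\mathcal F_1$, and for each such $S$ the set $S\cup\{x\}$ is shattered by $\mathcal F$. These two collections of subsets of $A$ are disjoint, because one avoids $x$ and the other contains it. This completes the induction.

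For the second inequality, with $m\ge d\ge1$ (if $d=0$ the bound reads $\Pi\le 1$, which holds), I will use the usual trick:
\[
\Bigl(\tfrac{d}{m}\Bigr)^d\sum_{i=0}^d\binom{m}{i}\le\sum_{i=0}^d\binom{m}{i}\Bigl(\tfrac{d}{m}\Bigr)^i\le\Bigl(1+\tfrac{d}{m}\Bigr)^m\le e^d .
\]
The first step uses $d/m\le1$, so $(d/m)^d\le(d/m)^i$ for $i\le d$. The second extends the sum to all $i\le m$ and applies the binomial theorem. Multiplying through by $(m/d)^d$ gives $\Phi_d(m)\le(em/d)^d$.

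The only delicate point is the bookkeeping in the inductive step: checking that $|\mathcal F|=|\mathcal F_0|+|\mathcal F_1|$ and that shattering transfers correctly from $\mathcal F_0$ and $\mathcal F_1$ to $\mathcal F$. Everything else is routine.
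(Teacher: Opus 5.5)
Your proof is correct and complete. There is no in-paper proof to compare it with: the paper records this statement in its technical background as the classical Sauer--Shelah--Perles lemma and uses it as a black box.

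Your route is one of the standard textbook proofs. It proves Pajor's strengthening $\lvert\mathcal H|_A\rvert\le\lvert\{S\subseteq A:\mathcal H\text{ shatters }S\}\rvert$ via the split $\lvert\mathcal F\rvert=\lvert\mathcal F_0\rvert+\lvert\mathcal F_1\rvert$. It then applies the $(d/m)^d$ weighting trick together with $(1+d/m)^m\le e^d$. The other common argument is an induction on $m+d$ through the identity $\sum_{i\le d}\binom{m}{i}=\sum_{i\le d}\binom{m-1}{i}+\sum_{i\le d-1}\binom{m-1}{i}$. Compared with it, your version yields the stronger shattered-subsets bound at no extra cost.

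Two trivial edge cases are worth making explicit:
\begin{itemize}
\item In the preliminaries, the growth function is a maximum over tuples $x_1,\dots,x_m$ that may contain repetitions. You handle this by passing to the set of distinct points and using that $\sum_{i\le d}\binom{m}{i}$ is nondecreasing in $m$.
\item For $d=0$, the expression $(em/d)^d$ must be read as $1$ by convention, as you did.
\end{itemize}
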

\subsection{O-minimality and tame topology}

The positive results in this paper rely on a form of geometric tameness.
Historically, Grothendieck \cite{Gro} used the term \emph{tame topology}
(\emph{topologie modérée}) to describe a setting in which pathological
topological phenomena are excluded. In modern model theory, one precise and
widely used formalization of this idea is \emph{o-minimality}. Informally,
o-minimal structures are expansions of the real field whose definable sets
behave like semialgebraic sets: they admit finite decompositions into
simple pieces and cannot exhibit infinite oscillation or repeated discrete
patterns.

\begin{definition}[o-minimality]
An expansion \(\mathcal R\) of the ordered real field is called
\emph{o-minimal} if every definable subset of \(\R\) is a finite union of
points and intervals.
\end{definition}

The structures used in this paper are o-minimal. The real field \(\R\) is
o-minimal by Tarski's quantifier-elimination theorem. The exponential real field
\(\R_{\exp}\) is o-minimal by Wilkie's theorem
\cite{Wilkie1996modelcompleteness}. The restricted Pfaffian structure
\(\R_{\rPfaff}\) is o-minimal by Wilkie's theorem of the complement
\cite{Wilkie1999Complement}.

Although the definition only refers to subsets of the real line, o-minimality
has strong higher-dimensional consequences. In particular, definable subsets of
\(\R^n\) admit finite cell decompositions, and definable families have
controlled combinatorial complexity. See \cite{vandenDries98} for the
general theory.

\begin{definition}[Cylindrical cell decomposition]
\label{def:cell-decomposition}
A \emph{cell} is defined inductively on the ambient dimension. In \(\R\), cells
are points and open intervals. Suppose cells in \(\R^{n-1}\) have already been
defined. A cell in \(\R^n\) is either

\begin{enumerate}
    \item the graph of a continuous definable function
    \[
      f\colon C\to \R
    \]
    over a cell \(C\subseteq \R^{n-1}\), or

    \item a band of the form
    \[
      \{(x,t)\in C\times \R : f(x)<t<g(x)\},
    \]
    where \(C\subseteq \R^{n-1}\) is a cell and
    \(f,g\colon C\to\R\) are continuous definable functions with \(f<g\).
    We also allow \(f=-\infty\) or \(g=+\infty\).
\end{enumerate}

A \emph{cylindrical cell decomposition} of \(\R^n\) is a finite partition of \(\R^n\) into
cells. It is called \emph{compatible} with a set \(X\subseteq \R^n\) if every
cell is either contained in \(X\) or disjoint from \(X\). More generally, it is
compatible with a finite family \(X_1,\ldots,X_s\subseteq \R^n\) if it is
compatible with each \(X_i\).
\end{definition}

The cell decomposition theorem for o-minimal structures states that if
\(X_1,\ldots,X_s\subseteq \R^n\) are definable sets, then there exists a finite
cell decomposition of \(\R^n\) compatible with all of them.

\subsection{VC bounds for polynomially definable classes}

\begin{theorem}[Goldberg--Jerrum bound~{\cite[Theorem~2.2]{Goldberg1995}}]
\label{thm:GJ-param}
Let $\mathcal H = \{h_a : a \in \R^k\}$ be a hypothesis class on $\R^l$.
Assume that membership $x \in h_a$ can be expressed by a quantifier-free
Boolean formula
\[
\Phi(x,a)
\]
involving at most $s$ distinct polynomial (in)equalities in the variables
$(x,a) \in \R^{l+k}$, each of degree at most $D$.

Then
\[
\VCdim(\mathcal H)
\;\le\;
O\!\left(k \log(sD)\right).
\]
\end{theorem}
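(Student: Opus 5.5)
The plan is to follow the classical Warren/Milnor sign-pattern route of Goldberg and Jerrum, specialized to the parameter space $\R^k$. Fix points $x_1,\dots,x_m\in\R^l$ and let $\mathcal P$ be the set of polynomials in the atoms of $\Phi$. For each $i$, substituting $x=x_i$ into each $P\in\mathcal P$ gives a polynomial $P(x_i,\cdot)$ in $a\in\R^k$ of degree at most $D$. This yields at most $ms$ polynomials in $k$ variables.

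The key observation is that the truth value of $\Phi(x_i,a)$ depends only on the sign vector $(\operatorname{sign} P(x_i,a))_{P\in\mathcal P}\in\{-,0,+\}^s$. Consequently, the labeling $(h_a(x_1),\dots,h_a(x_m))$ is determined by the sign condition realized by $a$ on this family of $ms$ polynomials. Hence $|\mathcal H|_{\{x_1,\dots,x_m\}}|$ is at most the number of realizable sign conditions of $N=ms$ polynomials of degree $\le D$ in $k$ variables.

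To bound that count, I would invoke Warren's theorem in its version that includes zero patterns, as in the refinement by Milnor and Thom or by Basu, Pollack and Roy. It gives at most $(8eND/k)^k$ sign conditions when $N\ge k$. Therefore $\Pi_{\mathcal H}(m)\le (8e\,msD/k)^k$ for $m\ge k/s$.

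If $m$ points are shattered, then $2^m\le (8emsD/k)^k$. Taking logarithms gives $m\le k\log_2(8esD)+k\log_2(m/k)$. A standard inequality, namely that $m\le A+k\log_2(m/k)$ implies $m=O(A)$ when $A\ge k$, then yields $m=O(k\log(sD))$.

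I expect the main obstacle to be the Warren step, specifically the zero patterns. The original Warren bound counts only nonzero sign patterns, so handling equalities requires either perturbing each $P$ to $P\pm\epsilon$, which doubles the number of polynomials and changes only constants, or citing a bound on connected components of sign-condition sets. I would use the $\pm\epsilon$ perturbation, since it keeps the argument self-contained and affects the result only by constant factors.
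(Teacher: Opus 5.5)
Your proposal is correct and follows the same route the paper relies on: the paper does not reprove this bound but cites Goldberg--Jerrum, and, as its Remark after Proposition~\ref{prop:growth-R} notes, that proof is exactly your argument. It reduces the labeling of $x_1,\dots,x_m$ to sign conditions of $ms$ polynomials of degree at most $D$ in $a$, applies Warren's bound made to count zeros via the $\pm\epsilon$ perturbation to get $(8emsD/k)^k$, and then solves $m\le k\log(8esD\cdot m/k)$, which is precisely the paper's Lemma~\ref{lemma:vc-consistency}. The only loose end is the case $ms<k$, where Warren's bound is not invoked; it is trivial since then $m<k$, but (like the statement itself) it needs the usual convention $sD\ge 2$ so that $O(k\log(sD))$ absorbs an additive $O(k)$.
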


\subsection{Quantifier elimination}

We use the following quantitative form of one-block quantifier elimination
(see Basu--Pollack--Roy~\cite[Theorem~14.16]{basu2006}).

\begin{theorem}[One-block quantifier elimination]\label{thm:qe}
Let $P = \{P_1,\dots,P_s\} \subset \R[X,Y]$ be a family of $s$ polynomials
of degree at most $d$ in variables $X \in \R^{k}$ and $Y \in \R^{\ell}$.
Let $\Phi(X,Y)$ be a Boolean combination of atomic predicates of the form
$P_i(X,Y) \;\sigma\; 0$, where $\sigma \in \{<,=,>\}$. Consider the existential formula
\[
\psi(Y) := \exists X \in \R^k \;\; \Phi(X,Y).
\]
Then $\psi(Y)$ is equivalent to a quantifier-free formula of the form
\[
\Psi(Y)
=
\bigvee_{i=1}^{I}
\bigwedge_{j=1}^{J_i}
\left(
\bigvee_{n=1}^{N_{ij}} \operatorname{sign}(Q_{ijn}(Y)) = \sigma_{ijn}
\right),
\]
where each $Q_{ijn} \in \R[Y]$ is a polynomial and $\sigma_{ijn} \in \{-1,0,1\}$.
Moreover,
\[
I \le s^{(k+1)(\ell+1)} d^{O(k\ell)}, \quad
J_i \le s^{(k+1)} d^{O(k)}, \quad
N_{ij} \le d^{O(k)}, \quad
\deg(Q_{ijn}) \le d^{O(k)}.
\]
\end{theorem}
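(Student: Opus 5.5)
The theorem is quoted from Basu--Pollack--Roy \cite[Theorem~14.16]{basu2006}, so in the paper it would simply be cited. If I had to reprove it, I would follow the \emph{block elimination} strategy of the critical point method.

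\textbf{Step 1: reduce to a uniform family of sample points.} For fixed $y\in\R^\ell$, the truth of $\psi(y)$ depends only on which sign conditions of the family $P(\cdot,y)\subset\R[X]$ are realized in $\R^k$. It therefore suffices to produce, uniformly in $y$, a finite set of points in $\R^k$ that meets every nonempty realizable sign condition of $P(\cdot,y)$. $\Phi$ then holds somewhere in $\R^k$ iff it holds at one of these points.

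\textbf{Step 2: build a parametrized univariate representation.} Treat $Y$ as parameters. I would deform the $P_i$ infinitesimally so they are in general position, and take critical points of a generic projection (or of a sum-of-squares distance function) on the algebraic sets defined by at most $k$ of the deformed polynomials at a time. Eliminating $X$ via a generic linear form $T$ should yield a finite family $\mathcal U$ of parametrized univariate representations. Each is a tuple $(f(T),g_0(T),\dots,g_k(T))$ with coefficients in $\R[Y]$, and the points $x_i=g_i(t)/g_0(t)$ at roots $t$ of $f$ meet every connected component of every realizable sign condition, for every $y$. The target counts are $|\mathcal U|\le s^{k+1}d^{O(k)}$, with all degrees in $T$ and in $Y$ bounded by $d^{O(k)}$.

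\textbf{Step 3: eliminate the hidden variable $T$.} Substituting the representation into each $P_i$ gives univariate polynomials in $T$ with coefficients in $\R[Y]$. Whether some root $t$ of $f$ satisfies the required signs of the $P_i$ is governed by Thom encodings and Sturm-query/subresultant computations. The resulting decision depends only on the signs of finitely many subresultant coefficients, which are polynomials $Q\in\R[Y]$ of degree $d^{O(k)}$. This gives, for each element of $\mathcal U$, a disjunction of conjunctions of sign conditions on such $Q$'s. That accounts for $J_i\le s^{k+1}d^{O(k)}$ conjuncts, each a disjunction of $N_{ij}\le d^{O(k)}$ sign conditions.

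\textbf{Step 4: bound the outer disjunction $I$.} I would take the outer disjunction over the sign conditions on the full set of $Q$'s that are realizable in $\R^\ell$. Applying the standard bound on the number of realizable sign conditions to $s^{k+1}d^{O(k)}$ polynomials of degree $d^{O(k)}$ in $\ell$ variables gives $I\le s^{(k+1)(\ell+1)}d^{O(k\ell)}$.

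\textbf{Main obstacle.} The hardest step is Step~2: obtaining the sample points \emph{uniformly in the parameters} while keeping both the count and the degrees single-exponential in $k$ only. The infinitesimal deformations have to be removed by taking limits without the degrees blowing up. I would handle this exactly as Basu--Pollack--Roy do, using their parametrized limit computations.
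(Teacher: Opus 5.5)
Your proposal matches the paper: the paper gives no proof of this statement and imports it directly as the single-block case of Basu--Pollack--Roy \cite[Theorem~14.16]{basu2006}, which is exactly the citation you identify. Your block-elimination sketch goes beyond what the paper does and is only an outline of their argument; two hard parts are deferred to Basu--Pollack--Roy rather than established, namely uniformity of the sample points in the parameters, and keeping the family of polynomials in $Y$ of size $s^{k+1}d^{O(k)}$ and degree $d^{O(k)}$ while still determining the truth value of $\psi(Y)$.
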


\begin{corollary}[One-block quantifier elimination: polynomial form]
\label{cor:qe-poly}
Let $X \in \R^k$ and $Y \in \R^\ell$, and let
\[
\psi(Y) := \exists X \in \R^k\; \Phi(X,Y),
\]
where $\Phi(X,Y)$ is a quantifier-free Boolean formula involving at most
$s$ distinct polynomial sign conditions
\[
P(X,Y)\;\sigma\;0,
\qquad \sigma \in \{<,=,>\},
\]
with each polynomial $P$ having degree at most $d$.

Then $\psi(Y)$ is equivalent to a quantifier-free formula $\Psi(Y)$ whose
truth value is determined by the sign pattern of a family of polynomials
\[
\mathcal P \subseteq \R[Y]
\]
such that:
\begin{itemize}
    \item $|\mathcal P| \le s^{O(k\ell)} d^{O(k\ell)}$,
    \item every $P \in \mathcal P$ has degree at most $d^{O(k)}$.
\end{itemize}
\end{corollary}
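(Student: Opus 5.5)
The corollary follows from Theorem~\ref{thm:qe} by collecting all polynomials that appear in the quantifier-free output and counting them. Apply Theorem~\ref{thm:qe} to $\psi(Y)=\exists X\,\Phi(X,Y)$ with the $s$ polynomials $P_1,\dots,P_s$ of degree at most $d$. Atomic predicates of the form $P\ge 0$, $P\le 0$ or $P\ne 0$ are first rewritten as disjunctions of sign conditions with $\sigma\in\{<,=,>\}$. This rewriting does not change the polynomial family, so the hypotheses of Theorem~\ref{thm:qe} hold.

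Let $\Psi(Y)$ be the equivalent quantifier-free formula it produces. Define
\[
\mathcal P=\{Q_{ijn} : 1\le i\le I,\ 1\le j\le J_i,\ 1\le n\le N_{ij}\}.
\]
$\Psi$ is a Boolean combination of atomic conditions $\operatorname{sign}(Q_{ijn}(Y))=\sigma_{ijn}$. Hence its truth value at any $Y$ is a function of the sign vector $(\operatorname{sign}Q(Y))_{Q\in\mathcal P}$, which is the first claim. The degree claim is immediate from $\deg Q_{ijn}\le d^{O(k)}$.

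It remains to bound $|\mathcal P|$. We have $|\mathcal P|\le \sum_i\sum_j N_{ij}\le I\cdot \max_i J_i\cdot \max_{i,j}N_{ij}$. Plugging in the bounds of Theorem~\ref{thm:qe}, this is at most
\[
s^{(k+1)(\ell+1)}d^{O(k\ell)}\cdot s^{k+1}d^{O(k)}\cdot d^{O(k)} = s^{(k+1)(\ell+2)}\,d^{O(k\ell)+O(k)}.
\]
Absorbing constants gives $s^{O(k\ell)}d^{O(k\ell)}$, since $(k+1)(\ell+2)=O(k\ell)$ and $O(k\ell)+O(k)=O(k\ell)$ whenever $k,\ell\ge1$.

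The only delicate point is the degenerate case $k=0$ or $\ell=0$, where $O(k\ell)$ could vanish. If $k=0$ there is no quantifier, and we take $\mathcal P$ to be the original $s$ polynomials. If $\ell=0$, $\psi$ is a sentence and $\mathcal P$ can be taken empty or to consist of constants. Alternatively, one reads $O(k\ell)$ as $O((k+1)(\ell+1))$, which is how the bound will be used downstream.
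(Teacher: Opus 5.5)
Your proposal is correct and follows the paper's proof. Apply Theorem~\ref{thm:qe}, take $\mathcal P$ to be the output polynomials $Q_{ijn}$, bound their number by $I\cdot\max_i J_i\cdot\max_{i,j}N_{ij}$, and read the degree bound off directly; your added caveat about the degenerate cases $k=0$ or $\ell=0$ (interpreting $O(k\ell)$ as $O((k+1)(\ell+1))$) is a point the paper leaves implicit.
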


\begin{proof}
Apply Theorem~\ref{thm:qe} to the quantifier-free formula $\Phi(X,Y)$,
which involves at most $s$ polynomial sign conditions of degree at most $d$.
This yields an equivalent quantifier-free formula $\Psi(Y)$ whose truth
value is determined by a Boolean combination of sign conditions of
polynomials in $Y$.

The length of $\Psi$ is bounded (up to a constant factor) by the size
of its Boolean representation, which is at most
\[
I \cdot \max_i J_i \cdot \max_{i,j} N_{ij}.
\]
Substituting the bounds from Theorem~\ref{thm:qe} gives
\[
L \le s^{O(k\ell)} d^{O(k\ell)}.
\]
The degree bound follows directly from the theorem.
\end{proof}

\section{Strategic expansion breaks learnability}
\label{app:learnability_breaks}

The aim of this section is to prove Main Theorem~\ref{main:vc-blowup}.

\begin{restatedtheorem}{main:vc-blowup}
For $r > 0$, let $\mathcal{N}_r$ be the neighborhood system given by $x' \in  N_{x} \iff |x - x'| \leq r$.

There exists a hypothesis class $\mathcal{H}$ over $\mathbb R$ such that $\VCdim(\mathcal H) = 1$ but for every $r > 0$, $\VCdim(\mathcal H^{\mathcal N_r}) = \infty$.
\end{restatedtheorem}

We split the proof into two steps. First, in
Lemma~\ref{thm:pathology-1d}, we construct, for every \(n\), a hypothesis class
on an interval whose VC dimension is \(1\), but whose thickening has VC
dimension \(n\). Then, in Corollary~\ref{cor:pathology-1d}, we extend this
construction to obtain Main Theorem~\ref{main:vc-blowup}.

We also prove a related pathology in
Lemma~\ref{lem:pathology-interval-neighborhoods}: there exists a hypothesis class
of VC dimension \(1\) and a neighborhood system whose associated neighborhood
class has VC dimension \(1\), while the induced strategic class is not
PAC learnable.

In this section, we identify binary hypotheses with their positive regions.

\begin{figure}[t]
\centering
\begin{tikzpicture}[scale=1.05]

\def\r{0.4}
\def\R{0.7}

\draw[->] (-2.5,0) -- (10,0) node[right] {$\mathbb R$};

\foreach[count=\k] \i in {-1.5,0.5, 2.5, 4.5, 6.5 ,8.5} {
    \draw[dashed, gray!55] ({\i-\R},-0.35) rectangle ({\i+\R},0.35);
    \node[gray!70] at (\i,0.55) {$U_{\k}$};
}

\foreach[count=\k] \i in {-1.5,0.5, 2.5, 4.5, 6.5 ,8.5} {
    \draw[red!35, line width=7pt] ({\i-\r},0) -- ({\i+\r},0);
    \fill[red!80!black] (\i,0) circle (2.2pt);
    \node[below=9pt] at (\i,0) {$p_{\k}$};
}

\fill[blue] (-1.75,0) circle (2.2pt);
\fill[blue] (0.75,0) circle (2.2pt);
\fill[blue] (4.2,0) circle (2.2pt);
\fill[blue] (8.7,0) circle (2.2pt);

\fill[gray!70] (1.95, 0) circle (2.2pt);
\fill[gray!70] (7.05,0) circle (2.2pt);

\node[blue!80!black] at (3.5,1.5)
{$q_i^S$ inside $N_{p_i}$ iff $i\in S$};

\node[gray!70!black] at (3.5,1)
{$q_i^S$ outside $N_{p_i}$ iff $i\notin S$};

\node[red!70!black] at (3.5,-1)
{red intervals are the neighborhoods $N_{p_i}$};

\node at (3.5,-1.5)
{$S=\{1,3,4,6\}$, hence after the strategic transformation only $p_1,p_3,p_4,p_6$ are captured};

\end{tikzpicture}
\caption{
Each anchor $p_i$ has its own private cell $U_i$. 
For the set $F_S=\{q_i^S:i\in[n]\}$, the point $q_i^S$ is placed inside
\(N_{p_i}\) exactly when \(i\in S\), and outside \(N_{p_i}\) otherwise.
Thus \((\mathbf 1_{F_S})^{\mathcal N_s}\cap\{p_1,\dots,p_n\}=\{p_i:i\in S\}\).
}
\label{fig:blowup-line}
\end{figure}

\begin{lemma}[VC blows up after interval neighborhoods]
\label{thm:pathology-1d}

For every \(n\ge 1\) and every \(r \ge r' > 0\), there exist a finite hypothesis
class \(\mathcal H\subseteq\{0,1\}^{\R}\) with \(|\mathcal H|=2^n\) and a neighborhood
system
\[
\mathcal N_s=\{N_x:x\in\R\},
\qquad
y\in N_x \iff |x-y|\le s,
\]
such that for every \(s\in[r',r]\),
\[
\VCdim(\mathcal H)=1
\qquad\text{but}\qquad
\VCdim(\mathcal H^{\mathcal N_s})=n.
\]
\end{lemma}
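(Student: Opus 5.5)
We follow the coding construction sketched after Main Theorem~\ref{main:vc-blowup}, now made robust to every radius $s\in[r',r]$ simultaneously; the picture is Figure~\ref{fig:blowup-line}. We place anchors $p_i=2i(r+1)$ for $i\in[n]$, so that the cells $U_i=(p_i-r-\tfrac12,\,p_i+r+\tfrac12)$ are pairwise disjoint and each contains $[p_i-r,p_i+r]\supseteq N_{p_i}$ for every $s\le r$. For each $S\subseteq[n]$ we choose a finite set $F_S=\{q_i^S:i\in[n]\}$ with $q_i^S\in U_i$, and we take $\mathcal H=\{\mathbf 1_{F_S}:S\subseteq[n]\}$, so $|\mathcal H|=2^n$ once the $F_S$ are distinct.

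The points are placed as follows. If $i\in S$, put $q_i^S=p_i+\eta_{i,S}$, where $0<\eta_{i,S}<r'$; then $|q_i^S-p_i|<r'\le s$ for every admissible $s$. If $i\notin S$, put $q_i^S=p_i+r+\eta_{i,S}$, where $0<\eta_{i,S}<\tfrac12$; then $q_i^S$ lies outside $[p_i-s,p_i+s]$ for all $s\le r$, but still inside $U_i$. The offsets $\eta_{i,S}$ are chosen pairwise distinct over all pairs $(i,S)$, for instance $\eta_{i,S}=c\cdot(\text{index of }(i,S))/(n2^n+1)$ with $c=\min(r',\tfrac12)$. This makes the sets $F_S$ pairwise disjoint: points in different cells differ, and within $U_i$ the offsets are all distinct, including across the two cases.

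Next we verify $\VCdim(\mathcal H)=1$. The positive regions are nonempty and pairwise disjoint, so no point lies in two of them. Hence no pair $\{x,y\}$ can receive the labeling $(1,1)$. On the other hand any single point of some $F_S$ is shattered, since $\mathcal H$ contains the hypothesis $\mathbf 1_{F_S}$ and also any other hypothesis in $\mathcal H$, which is $0$ there when $n\ge1$. (If $n=1$ we still have two hypotheses, so this works.)

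For the strategic class we take $h=\mathbf 1_{F_S}$ and compute $h^{\mathcal N_s}(p_j)$. It equals $1$ iff some point of $F_S$ lies within $s$ of $p_j$. Points of $F_S$ lying in other cells are at distance more than $r+\tfrac12$ from $p_j$ by disjointness of the cells and the spacing, so they never count. The only candidate is therefore $q_j^S$, and by construction it is within $s$ of $p_j$ exactly when $j\in S$, for every $s\in[r',r]$. Thus $\{p_1,\dots,p_n\}$ is shattered, and $\VCdim(\mathcal H^{\mathcal N_s})\ge n$. The matching upper bound is immediate: $|\mathcal H^{\mathcal N_s}|\le|\mathcal H|=2^n$, so no set of size $n+1$ can be shattered, and $\VCdim(\mathcal H^{\mathcal N_s})=n$.

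The main care point is choosing the constants so that the dichotomy ``within $r'$'' versus ``beyond $r$'' is uniform over the whole interval of radii. At the same time we must keep all the points distinct so that the hypotheses are disjoint, and keep the cells far enough apart that neighborhoods never cross between cells. The choices above handle all three requirements simultaneously.
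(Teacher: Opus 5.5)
Your proposal is correct and follows essentially the same construction as the paper: well-separated anchors $p_i$ with pairwise disjoint cells $U_i$, and one point of $F_S$ per cell, placed within $r'$ of $p_i$ if $i\in S$ and beyond $r$ otherwise, with the $F_S$ chosen pairwise disjoint. Your explicit choice of offsets and your cardinality argument $|\mathcal H^{\mathcal N_s}|\le 2^n$ for the upper bound $\VCdim(\mathcal H^{\mathcal N_s})\le n$ fill in details the paper leaves implicit.
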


\begin{proof}
Fix \(r>0\).

\textbf{Well-separated anchor points.}
Let \(p_1,\dots,p_n\in\R\) be such that
\[
|p_i-p_j|>4r \quad (i\neq j),
\]
for example \(p_i=10ri\).

For each \(i\), define a small interval
\[
U_i := (p_i-2r,\,p_i+2r),
\]
so the intervals \(U_i\) are pairwise disjoint.

\medskip
\textbf{Encoding subsets.}
For each subset \(S\subseteq[n]\), define a set \(F_S\) by choosing one point
\(q_i^S\in U_i\) for every \(i\in[n]\) such that
\[
|q_i^S-p_i|<r' \quad\text{if } i\in S,
\qquad
|q_i^S-p_i|>r \quad\text{if } i\notin S.
\]
Set
\[
F_S := \{q_1^S,\dots,q_n^S\}.
\]

Since each \(U_i\) contains infinitely many points, we may choose all
\(q_i^S\) so that the sets \(F_S\) are pairwise disjoint.

\medskip
\textbf{VC dimension of \(\mathcal H\).}
Let
\[
\mathcal H:=\{\mathbf 1_{F_S}:S\subseteq[n]\}.
\]

The sets \(F_S\) are nonempty and pairwise disjoint, hence \(\mathcal H\)
shatters a singleton but cannot shatter any two-point set. Therefore
\[
\VCdim(\mathcal H)=1.
\]

\medskip
\textbf{Effect of the transformation.}
We claim that \(\mathcal H^{\mathcal N_s}\) shatters
\(\{p_1,\dots,p_n\}\) whenever \(r' \le s \le r\).

Fix \(i\in[n]\):
\begin{itemize}
\item If \(i\in S\), then \(|q_i^S-p_i|<r' \le s\), hence
\[
q_i^S\in N_{p_i},
\]
and therefore
\[
p_i\in (\mathbf 1_{F_S})^{\mathcal N_s}.
\]

\item If \(i\notin S\), then
\[
|q_i^S-p_i|>r \ge s,
\quad\text{and for } j\neq i,\quad
|q_j^S-p_i|>r \ge s
\]
(by separation of the \(U_j\)). Hence
\[
F_S\cap N_{p_i}=\emptyset,
\]
so
\[
p_i\notin (\mathbf 1_{F_S})^{\mathcal N_s}.
\]
\end{itemize}

Thus
\[
(\mathbf 1_{F_S})^{\mathcal N_s}\cap\{p_1,\dots,p_n\}
=
\{p_i:i\in S\},
\]
so all subsets are realized. Hence \(\mathcal H^{\mathcal N_s}\)
shatters an \(n\)-point set, and therefore
\[
\VCdim(\mathcal H^{\mathcal N_s})=n.
\]
\end{proof}

\begin{corollary}[A single class that blows up for every radius]
\label{cor:pathology-1d}
There exists an infinite hypothesis class \(\mathcal H\subseteq\{0,1\}^{\R}\) such that
\[
\VCdim(\mathcal H)=1,
\qquad\text{but for every } s>0,\qquad
\VCdim(\mathcal H^{\mathcal N_s})=\infty,
\]
where \(\mathcal N_s=\{N_x:x\in\R\}\) is given by
\[
y\in N_x \iff |x-y|\le s.
\]
\end{corollary}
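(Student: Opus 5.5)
Our approach is to glue together infinitely many copies of the construction from Lemma~\ref{thm:pathology-1d}, one for each scale of radius, placed in pairwise far-apart regions of $\R$, with the positive regions of all hypotheses pairwise disjoint. Concretely, for each $m\in\N$ (say $m\ge 1$), apply Lemma~\ref{thm:pathology-1d} with $n=m$ and radius range $[r'_m,r_m]=[1/m,m]$. This gives a finite class $\mathcal H_m=\{\mathbf 1_{F^{(m)}_S}:S\subseteq[m]\}$ with anchors $p^{(m)}_1,\dots,p^{(m)}_m$ such that for every $s\in[1/m,m]$ the strategic class $\mathcal H_m^{\mathcal N_s}$ shatters $\{p^{(m)}_i\}$. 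Since $\bigcup_m[1/m,m]=(0,\infty)$ and the intervals are nested, every $s>0$ lies in $[1/m,m]$ for all sufficiently large $m$. This gives shattered sets of every size for each fixed $s$, provided the copies do not interfere with one another.

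The construction has three steps. First, translate the $m$-th block so that all its points (anchors and the sets $F^{(m)}_S$) lie in a window $W_m$. Choose the translations so that consecutive windows are separated by gaps growing faster than the radius, e.g.\ $W_m\subseteq(T_m,T_m+20m^2)$ with $T_{m+1}-(T_m+20m^2)>2m+2$ and $T_m\to\infty$. Second, choose all points so that the sets $F^{(m)}_S$, over all $m$ and $S$, are pairwise disjoint; within a block this is part of the lemma's construction, and across blocks it follows from disjointness of the windows. Set $\mathcal H=\bigcup_m\mathcal H_m$. Since all positive regions are nonempty and pairwise disjoint, no two points are shattered, while a singleton is, so $\VCdim(\mathcal H)=1$.

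Third, fix $s>0$ and $n$. Pick $m\ge n$ with $s\in[1/m,m]$, and restrict attention to hypotheses of $\mathcal H_m$. For $h=\mathbf 1_{F^{(m)}_S}$, the values of $h^{\mathcal N_s}$ on the anchors $p^{(m)}_i$ depend only on $F^{(m)}_S\cap[p^{(m)}_i-s,p^{(m)}_i+s]$. This is exactly the situation analyzed in the lemma, since we use only members of $\mathcal H_m$ and translation preserves distances. Hence $\mathcal H^{\mathcal N_s}\supseteq\mathcal H_m^{\mathcal N_s}$ shatters $m\ge n$ points, so $\VCdim(\mathcal H^{\mathcal N_s})=\infty$.

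The main obstacle is bookkeeping rather than depth. We must ensure that the lemma's internal separation, which is stated in terms of $r$ (anchors $10ri$ apart, cells of radius $2r$), works with $r=m$ inside each block. We must also check that the strategic images of the other blocks are irrelevant. The latter holds automatically, since shattering only requires exhibiting, for each subset, one hypothesis realizing it, and we use only hypotheses from block $m$. So the real care goes into the choice of $r'_m\to0$ and $r_m\to\infty$ simultaneously, so that a single class works for every radius.
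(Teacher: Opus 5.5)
Your proposal is correct and is essentially the paper's argument: place translated copies of the blocks from Lemma~\ref{thm:pathology-1d} in disjoint windows, note that nonempty pairwise disjoint supports give $\VCdim(\mathcal H)=1$, and shatter using only the hypotheses of a single block. The difference is only bookkeeping. You take one block per $m$ with radius range $[1/m,m]$ and $n=m$, using the lemma's freedom to choose any $r'\le r$, whereas the paper takes a block for every pair $(n,m)\in\mathbb N\times\mathbb Z$ with dyadic ranges $[2^{-m-1},2^{-m}]$. As you observe yourself, the growing-gap condition between windows is not needed, since only disjointness of the windows is used.
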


\begin{proof}
For \(n\ge1\) and \(m\in\mathbb Z\), let \(\mathcal H_{n,m}\) be the hypothesis
class given by Lemma~\ref{thm:pathology-1d} with
\[
r=2^{-m},\qquad r'=2^{-m-1}.
\]
Equivalently, write
\[
\mathcal H_{n,m}=\{\mathbf 1_{F}:F\in\mathcal A_{n,m}\},
\]
where \(\mathcal A_{n,m}\) is the corresponding family of subsets of \(\R\).
From the construction, we may assume that all sets in \(\mathcal A_{n,m}\)
are contained in an interval of length
\[
L_{n,m} := 10\cdot 2^{-m}(n+1).
\]

Enumerate the pairs \((n,m)\in\mathbb N\times\mathbb Z\) as
\[
(n_1,m_1),(n_2,m_2),\ldots .
\]
Define shifts inductively by
\[
a_1:=0,
\qquad
a_{t+1}:=a_t + L_{n_t,m_t} + 1,
\]
and set
\[
\mathcal H :=
\bigcup_{t=1}^\infty
\{\mathbf 1_{F+a_t}:F\in \mathcal A_{n_t,m_t}\}.
\]

By construction, the supports of the shifted families are pairwise disjoint.
Since within each block the sets are nonempty and pairwise disjoint, it follows
that all supports of hypotheses in \(\mathcal H\) are nonempty and pairwise
disjoint. Hence
\[
\VCdim(\mathcal H)=1.
\]

Now fix \(s>0\), and choose \(m\in\mathbb Z\) such that
\[
2^{-m-1} \le s \le 2^{-m}.
\]
For every \(n\ge1\), there exists some block corresponding to the pair \((n,m)\).
By Lemma~\ref{thm:pathology-1d}, the strategic transformation of that block by
\(\mathcal N_s\) shatters an \(n\)-point set. Therefore
\[
\VCdim(\mathcal H^{\mathcal N_s}) \ge n
\]
for all \(n\), which implies
\[
\VCdim(\mathcal H^{\mathcal N_s})=\infty.
\]
\end{proof}

\begin{lemma}[Pathology with interval neighborhoods]
\label{lem:pathology-interval-neighborhoods}
There exist a hypothesis class \(\mathcal H\subseteq \{0,1\}^{\mathbb R}\) and a
neighborhood system \(\mathcal N=\{N_x:x\in\mathbb R\}\) such that
\[
\VCdim(\mathcal H)=1,
\qquad
\VCdim(\{N_x:x\in\mathbb R\})=1,
\]
but
\[
\VCdim(\mathcal H^{\mathcal N})=\infty.
\]
\end{lemma}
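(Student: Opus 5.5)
We need a hypothesis class $\mathcal H$ of VC dimension $1$ and a neighborhood system $\mathcal N$ such that the family of neighborhoods $\{N_x\}$, viewed as a set system on $\R$, also has VC dimension $1$, while $\mathcal H^{\mathcal N}$ has infinite VC dimension. The neighborhoods $N_x=[x-s,x+s]$ of Corollary~\ref{cor:pathology-1d} form a family of intervals, which has VC dimension $2$. The plan is to make the neighborhoods \emph{pairwise disjoint or nested} rather than overlapping intervals, and then reuse the coding idea of Lemma~\ref{thm:pathology-1d} and Corollary~\ref{cor:pathology-1d}.

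\textbf{Construction.} Choose pairwise disjoint anchor points $p_1,p_2,p_3,\dots$, for instance $p_i=10i$. Set $N_{p_i}=[p_i,p_i+1]$, an interval of length one, and $N_x=\{x\}$ for every non-anchor point $x$; these intervals are pairwise disjoint. For every finite $S\subseteq\N$, say $S\subseteq[n]$ with the index $n$ recorded, pick a finite set $F_{S,n}$ with the following properties:
\begin{itemize}[leftmargin=1.5em,topsep=0.2em,itemsep=0.1em]
\item $F_{S,n}$ contains exactly one point of the open interval $(p_i,p_i+1)$ for each $i\in S$;
\item $F_{S,n}$ contains no point of any $N_{p_j}$ with $j\notin S$, and no anchor point.
\end{itemize}
The points are chosen fresh each time, which is possible since the intervals are uncountable and there are only countably many pairs $(S,n)$, so that the sets $F_{S,n}$ are pairwise disjoint. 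To guarantee nonemptiness, add one private point of $(p_0-5,p_0-4)$, with $p_0=0$, to each $F_{S,n}$. Let $\mathcal H=\{\mathbf 1_{F_{S,n}}\}$. Since its members are nonempty and pairwise disjoint, $\VCdim(\mathcal H)=1$, exactly as in Lemma~\ref{thm:pathology-1d}.

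\textbf{VC dimension of the neighborhoods.} The family $\{N_x\}$ consists of the intervals $[p_i,p_i+1]$ together with the singletons $\{x\}$ for non-anchor $x$. Take two points $u\neq v$. A set containing both must be an interval $[p_i,p_i+1]$, since a singleton cannot contain two distinct points. A set containing $u$ but not $v$ must then be the singleton $\{u\}$ (if $u$ is a non-anchor point) or an interval $[p_j,p_j+1]$ containing $u$. Since the intervals are disjoint, the only interval containing both $u$ and $v$ is the one containing $u$. Hence no interval contains $u$ and excludes $v$, and the only remaining witness for that pattern is the singleton $\{u\}$. The symmetric pattern likewise requires $v$ to be non-anchor, with $N_v=\{v\}$.

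I expect this to be the delicate step. The pitfall is that if $u$ and $v$ are both non-anchor points inside the same interval $[p_i,p_i+1]$, then $\{u,v\}$ is shattered: the interval gives $\{u,v\}$, the singletons give $\{u\}$ and $\{v\}$, and a far singleton gives $\emptyset$. So the singleton neighborhoods must be removed inside $\bigcup_i[p_i,p_i+1]$. The fix is to set $N_x=\{x\}$ only for $x$ outside $\bigcup_i[p_i,p_i+1]$, and $N_x=\emptyset$ for non-anchor $x$ inside these intervals; this is permitted, since the paper notes that $x\in N_x$ is not required. Then every member of $\{N_x\}$ is one of the disjoint intervals, a singleton lying outside all of them, or empty. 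These are pairwise disjoint sets, so the family has VC dimension at most $1$, and it equals $1$ because a singleton is shattered.

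\textbf{Shattering after the strategic transformation.} Fix $n$ and $S\subseteq[n]$, and consider $h=\mathbf 1_{F_{S,n}}$. If $i\in S$, then $F_{S,n}$ meets $N_{p_i}$, so $h^{\mathcal N}(p_i)=1$. If $i\notin S$, then $F_{S,n}\cap N_{p_i}=\emptyset$, so $h^{\mathcal N}(p_i)=0$. Hence $\{p_1,\dots,p_n\}$ is shattered by $\mathcal H^{\mathcal N}$ for every $n$, and therefore $\VCdim(\mathcal H^{\mathcal N})=\infty$.
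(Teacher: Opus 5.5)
Your proof is correct and follows essentially the paper's route: pairwise disjoint neighborhoods give $\{N_x\}$ VC dimension $1$, and each $S$ is coded by a nonempty hypothesis with private support meeting $N_{p_i}$ exactly when $i\in S$ (you make fresh choices over the countably many finite $S$, while the paper uses an injective shift $\alpha(S)$ over all $S\subseteq\mathbb N$). The one blemish is your patch $N_x=\emptyset$ for non-anchor $x\in[p_i,p_i+1]$, which drops the paper's default convention $x\in N_x$; the simpler fix, matching the paper's choice $N_x=[\lfloor x\rfloor,\lfloor x\rfloor+1)$, is to set $N_x=[p_i,p_i+1]$ for every $x\in[p_i,p_i+1]$, so that $\{N_x\}$ is a partition of $\mathbb R$ (hence still pairwise disjoint) and $x\in N_x$ for every $x$.
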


\begin{proof}
Partition \(\mathbb R\) into disjoint intervals
\[
I_j:=[j,j+1), \qquad j\in\mathbb Z,
\]
and define
\[
N_x:=I_j \quad \text{whenever } x\in I_j.
\]
In other words, $N_x = [\lfloor x \rfloor, \lfloor x \rfloor + 1)$.
Thus every point has a neighborhood which is an interval containing it,
and the family \(\{N_x:x\in\mathbb R\}\) consists of pairwise disjoint
sets, hence
\[
\VCdim(\{N_x:x\in\mathbb R\})=1.
\]

Now fix points
\[
p_i:=i+\tfrac12 \in I_i, \qquad i\ge 1.
\]
Choose an injective map
\[
\alpha\colon 2^{\mathbb N}\to (0,1/10).
\]
For example, identifying $2^{\mathbb N}$ with $\{0,1\}^{\mathbb N}$, we can take $\alpha(x_1x_2\ldots) = 0.01x_1x_2\ldots$, where the right-hand side is interpreted as a decimal expansion.

For each subset \(S\subseteq\mathbb N\), define
\[
F_S
:=
\{\, i+\tfrac12 + \alpha(S) : i\in S \,\}
\;\cup\;
\{\, -i-\alpha(S) : i\notin S \,\}.
\]
Let
\[
h_S:=\mathbf 1_{F_S},
\qquad
\mathcal H:=\{h_S:S\subseteq\mathbb N\}.
\]

\medskip
\textbf{VC dimension of \(\mathcal H\).}
The sets \(F_S\) are nonempty and pairwise disjoint: different \(S\neq T\)
have different shifts \(\alpha(S)\neq\alpha(T)\), so no point can belong to
both \(F_S\) and \(F_T\). Hence the supports of the hypotheses in
\(\mathcal H\) are pairwise disjoint. Therefore \(\mathcal H\) shatters a
singleton but no two-point set, and thus
\[
\VCdim(\mathcal H)=1.
\]

\medskip
\textbf{VC dimension of \(\{N_x : x \in \mathbb R\}\).}
By definition, \(\{N_x : x \in \mathbb R\} = \{I_j : j \in \mathbb Z\}\). Since the sets $I_j$ are disjoint,
\[
\VCdim(\{N_x : x \in \mathbb R\}) = 1.
\]

\medskip
\textbf{Effect of the transformation.}
For \(i\ge1\),
\[
h_S^{\mathcal N}(p_i)=1
\iff
F_S\cap N_{p_i}\neq\emptyset.
\]
Since \(N_{p_i}=I_i=[i,i+1)\), the only points of \(F_S\) that can lie in
this interval are of the form
\[
i+\tfrac12+\alpha(S),
\]
and such a point belongs to \(F_S\) if and only if \(i\in S\). Thus
\[
h_S^{\mathcal N}(p_i)=1
\iff
i\in S.
\]

Therefore
\[
\{\,p_i:h_S^{\mathcal N}(p_i)=1,\ 1\le i\le n\,\}
=
\{p_i:i\in S\cap[n]\}.
\]
Since \(S\subseteq\mathbb N\) was arbitrary, the class \(\mathcal H^{\mathcal N}\)
shatters \(\{p_1,\dots,p_n\}\) for every \(n\), and hence
\[
\VCdim(\mathcal H^{\mathcal N})=\infty. \qedhere
\]
\end{proof}

\section{The role of definability}
\label{app:sec:role-of-definability}

The examples in the previous section are constructive, but they may look
somewhat artificial: in each case, the hypotheses are chosen in a very specific
way so that, after thickening by a neighborhood relation, the transformed class
shatters arbitrarily large finite sets.

In light of Main Theorems~\ref{main:bounds_R} and~\ref{main:bounds_R_exp}, one
might suspect that such behavior is ruled out whenever the original class and
the neighborhood relation are definable in a reasonably tame expansion
of the real field. The following example shows that this is not true in general.
Already in the expansion
\[
  (\R,+,\cdot,>,=,\{\cdot\}),
\]
where \(\{x\}:=x-\lfloor x\rfloor\) denotes the fractional part, a  definable class of VC dimension \(1\), together with a definable neighborhood system of VC dimension \(1\), can give rise to a strategic class of infinite VC dimension.

\begin{lemma}\label{lem:frac}
Fix \(0<r<1/2\). Consider the definable family \(\mathcal H\) given by
\[
\Phi_{\mathcal H}(x,a)
:=
\exists y\,
\Bigl(
  \{y\}=0
  \wedge \{a\}=0
  \wedge y\neq 0
  \wedge x=y+\{\sqrt2\,ay\}
\Bigr),
\]
and let \(\mathcal N_r\) be the neighborhood relation defined by
\[
\Phi_{\mathcal N_r}(x,y):=|y-x|\le r.
\]
Then the strategically transformed family \(\mathcal H^{\mathcal N_r}\) has infinite VC
dimension. The same construction works with \(\sqrt2\) replaced by any fixed
irrational number.
\end{lemma}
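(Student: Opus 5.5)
First I will unpack what the formula says. For an integer parameter $a$ (if $\{a\}\neq 0$ the hypothesis is empty), $h_a$ is the indicator of
\[
F_a=\{\,n+\{\sqrt2\,an\} : n\in\Z\setminus\{0\}\,\}.
\]
So $F_a$ places exactly one point in each unit cell $[n,n+1)$ with $n\neq 0$, at offset $\{\sqrt2\,an\}\in[0,1)$. Under $\mathcal N_r$, the point $x=n+c$ with $c\in[r,1-r]$ has neighborhood $[n+c-r,\,n+c+r]\subseteq[n,n+1)$. This neighborhood can only meet the unique point of $F_a$ in that cell. Therefore
\[
h_a^{\mathcal N_r}(n+c)=1 \iff \bigl|\{\sqrt2\,an\}-c\bigr|\le r.
\]
Strategic labels are thus governed by the fractional parts of $\sqrt2\,a n$. (The statement does not ask about $\VCdim(\mathcal H)$ itself, so I will only prove the infinite VC dimension of $\mathcal H^{\mathcal N_r}$.)

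To shatter $m$ points, I will take $x_j=j+\tfrac12$ for $j=1,\dots,m$, using $c=\tfrac12$, which lies in $[r,1-r]$ because $r<1/2$. For a target pattern $S\subseteq[m]$, I need an integer $a$ satisfying two conditions:
\[
\{\sqrt2\,aj\}\in\bigl(\tfrac12-r,\tfrac12+r\bigr)\ \text{for } j\in S,
\qquad
\{\sqrt2\,aj\}\in\bigl(\tfrac12+r,1\bigr)\cup\bigl[0,\tfrac12-r\bigr)\ \text{for } j\notin S.
\]
Taking open targets avoids boundary issues. The point is to realize prescribed fractional parts for the multiples $a\cdot(\sqrt2 j)$, $j=1,\dots,m$.

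Here lies the main obstacle. The numbers $\sqrt2,2\sqrt2,\dots,m\sqrt2$ are rationally dependent, so Kronecker's theorem does not apply directly to the vector $(\sqrt2 j)_j$ on the torus $\mathbb T^m$. Indeed, the orbit $\{a\sqrt2(1,2,\dots,m)\}$ is only dense in the one-dimensional subtorus $\{(t,2t,\dots,mt)\}$. Consequently I will instead use Weyl/Kronecker in one dimension: the sequence $\{a\sqrt2\}$ is dense in $[0,1)$. This reduces the task to finding, for each $S$, a $t\in[0,1)$ such that $\{jt\}$ lands in the prescribed open region for every $j\le m$. By continuity, a whole open interval of such $t$ then works, and density supplies an integer $a$.

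Whether such a $t$ exists for every $S$ is a genuine combinatorial question about the points $jt$, and it need not hold for the consecutive points $1,\dots,m$. So I would choose the shattered points more cleverly: use sparse cells $n_j=N^{j}$ for a large integer $N$ depending on $r$, and take $x_j=n_j+\tfrac12$. Now I need $t$ with $\{N^j t\}$ in the prescribed region $A_j$ or its complement $B_j$. Both regions contain an interval of length at least $\min(2r,\tfrac12-r)>0$. I will construct $t$ by nested base-$N$ digit choices. Once $\{N^{j-1}t\}$ is confined to an interval of length $\ell\ge 1/N$, multiplying by $N$ makes $\{N^{j}t\}$ sweep all of $[0,1)$ at least once, so I can shrink to a subinterval on which $\{N^j t\}$ lies in the desired region. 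Choosing $N>1/\min(2r,\tfrac12-r)$ makes this induction go through. The result is a nonempty open set of $t$ realizing $S$.

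By density of $\{a\sqrt2\}$, some integer $a\neq 0$ then works; $a=0$ is excluded because the target set is open and avoids $0$ only for suitable patterns, and in any case density gives infinitely many $a$. This shatters $\{x_1,\dots,x_m\}$ for every $m$, so $\VCdim(\mathcal H^{\mathcal N_r})=\infty$. Only irrationality of $\sqrt2$ is used, which justifies the final sentence of the statement.
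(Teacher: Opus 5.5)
Your overall route is the paper's. You use the identity $\{\sqrt2\,a n\}=\{n\{\sqrt2\,a\}\}$ to reduce to finding an open interval of $t$ on which each $\{b_j t\}$ lies in $P=(\tfrac12-r,\tfrac12+r)$ or in a piece of its complement, for sparse integers $b_j$. You then pick $a$ by density of $\{a\sqrt2\}$ and read off the labels at $b_j+\tfrac12$. The paper chooses $b_{k+1}>2/v$ inductively, where $v$ is the length of the shortest current interval; your uniform $b_j=N^j$ is the same idea. You are right that $\VCdim(\mathcal H)=1$ is not part of the statement; the paper proves it anyway, via pairwise disjointness of the $H_a$, since it is the point of the example.

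One step fails as written: the choice $N>1/\mu$ with $\mu=\min(2r,\tfrac12-r)$ does not make your forward induction go through. To iterate, the refined $t$-interval must again have $\{N^j t\}$ ranging over an interval of length at least $1/N$. You implicitly assume this range is an entire target interval of length at least $\mu$. That requires the range of $N^j t$, of length about $N\mu$, to contain a full translate of the target, which needs $N\mu\ge 1+\mu$. Otherwise the target is cut in two by the wrap-around.

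Concretely, take $r=0.13$ and $N=4$, which satisfies $N>1/0.26$.
\begin{itemize}
\item If $1\in S$, then $\{4t\}$ ranges over $P=(0.37,0.63)$.
\item If also $2\in S$, then $16t$ ranges over a translate of $(1.48,2.52)$. This meets $P+\Z$ in $(1.48,1.63)\cup(2.37,2.52)$, two pieces of length $0.15<1/4$.
\item So $64t$ no longer sweeps a full period, and the next step is unjustified.
\end{itemize}

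The fix is easy. Take $N\ge 2/\mu$; this is exactly the factor $2$ in the paper's $b_{k+1}>2/v$. Then the current $t$-interval has length at least $2/N^j$ and contains a whole cell $[p/N^j,(p+1)/N^j)$. On that cell $\{N^j t\}$ runs once through $[0,1)$, so the preimage of the target interval is a single interval whose image has length at least $\mu$.

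Alternatively, keep $N>1/\mu$ and argue backwards with open sets. Let $T_j$ be the target region for index $j$, set $A_m=T_m$, and set $A_j=T_j\cap\{s:\{Ns\}\in A_{j+1}\}$. Each $A_j$ is nonempty and open, because $T_j$ contains an arc of length greater than $1/N$, which $s\mapsto\{Ns\}$ maps onto the whole circle. Any $t$ with $\{Nt\}\in A_1$ then works.

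With either repair the rest of your argument is correct. Your discussion of $a=0$ is unnecessary: $a=0$ is a valid parameter (it gives $\Z\setminus\{0\}$), and density supplies infinitely many $a$ in any case.
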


\begin{proof}
For each parameter \(a\), let $H_a:=\{x\in\R:\Phi_{\mathcal H}(x,a)\}$ denote the positive region of the corresponding hypothesis.
Write \(\alpha=\sqrt2\). The idea is simple: the set \(H_a\) contains one point
in each integer cell, namely \(y+\{\alpha ay\}\). By choosing the integer
parameter \(a\), we can force these fractional parts to fall either close to
\(1/2\), so that the midpoint \(y+1/2\) is hit after \(r\)-thickening, or far
from \(1/2\), so that it is not hit.

For integer \(a\), the set defined by \(\Phi_{\mathcal H}(x,a)\) is
\[
H_a=\{\,y+\{\alpha ay\}:y\in\Z\setminus\{0\}\,\},
\]
and for non-integer \(a\) it is empty.

First note that the nonempty sets \(H_a\) are pairwise disjoint. Indeed, if
\(x\in H_{a_1}\cap H_{a_2}\), then
\[
x=y_1+\{\alpha a_1y_1\}
 =
y_2+\{\alpha a_2y_2\}
\]
for some \(y_1,y_2\in\Z\setminus\{0\}\). Since the two sides lie in the
half-open intervals \([y_1,y_1+1)\) and \([y_2,y_2+1)\), we have
\(y_1=y_2=:y\). Hence
\[
\{\alpha a_1y\}=\{\alpha a_2y\},
\]
so \(\alpha y(a_1-a_2)\in\Z\). Since \(\alpha\) is irrational and
\(y\neq 0\), this implies \(a_1=a_2\). Thus the original family has
\(\VCdim(\mathcal H)=1\).

We now show that the thickened family has infinite VC dimension. Let
\[
P:=(1/2-r,1/2+r),
\qquad
Q:=(0,1/2-r).
\]
Both intervals are nonempty. We claim that for every \(n\) there are positive
integers \(b_1,\ldots,b_n\) such that for every \(A\subseteq[n]\) there is a
nonempty interval \(I_A\subseteq[0,1)\) satisfying
\[
\{b_i t\}\in P \quad\text{if } i\in A,
\qquad
\{b_i t\}\in Q \quad\text{if } i\notin A
\]
for all \(t\in I_A\).

This is an elementary induction. Suppose \(b_1,\ldots,b_k\) and the intervals
\(I_A\), \(A\subseteq[k]\), have already been constructed, and put
\[
v:=\min_{A\subseteq[k]} |I_A|>0.
\]
Choose \(b_{k+1}\) so large that \(b_{k+1}>2/v\). Then each interval \(I_A\)
contains a full interval of the form
\[
[j/b_{k+1},(j+1)/b_{k+1}).
\]
Inside such a full interval, the map \(t\mapsto \{b_{k+1}t\}\) runs once through
\([0,1)\). Therefore both intersections
\[
I_A\cap\{t:\{b_{k+1}t\}\in P\},
\qquad
I_A\cap\{t:\{b_{k+1}t\}\in Q\}
\]
contain nonempty intervals. Use the first one to define
\(I_{A\cup\{k+1\}}\), and the second one to redefine \(I_A\). This completes
the induction.

Now fix \(A\subseteq[n]\). Since \(\alpha\) is irrational, the sequence
\(\{\alpha m\}\), \(m\in\Z\), is dense in \([0,1)\). Choose
\(m_A\in\Z\setminus\{0\}\) such that $\{\alpha m_A\}\in I_A$. Then, since each \(b_i\) is an integer,
\(
\{\alpha m_A b_i\}
=
\{b_i\{\alpha m_A\}\}.
\)
Hence
\[
\{\alpha m_A b_i\}\in P \quad\text{if } i\in A,
\qquad
\{\alpha m_A b_i\}\in Q \quad\text{if } i\notin A.
\]

We claim that the points
\[
x_i:=b_i+\frac12,
\qquad i=1,\ldots,n,
\]
are shattered. For the parameter \(m_A\), the corresponding point of
\(H_{m_A}\) in the \(b_i\)-th integer cell is
\[
z_i:=b_i+\{\alpha m_A b_i\}.
\]
If \(i\in A\), then \(\{\alpha m_A b_i\}\in P\), so
\(
|x_i-z_i|<r,
\)
and therefore \(x_i\in H_{m_A}^{\mathcal N_r}\).

If \(i\notin A\), then \(\{\alpha m_A b_i\}\in Q\), so the point \(z_i\) in the
same integer cell is at distance \(>r\) from \(x_i\). Points of \(H_{m_A}\)
coming from other integer cells are at distance \(>1/2>r\) from \(x_i\). Hence
\(
x_i\notin H_{m_A}^{\mathcal N_r}.
\)
Thus
\[
x_i\in H_{m_A}^{\mathcal N_r}
\quad\Longleftrightarrow\quad
i\in A.
\]
Since every \(A\subseteq[n]\) is realized, the set
\(\{x_1,\ldots,x_n\}\) is shattered. As \(n\) was arbitrary,
\[
\VCdim(\mathcal H^{\mathcal N_r})=+\infty.
\]

The proof used only the irrationality of \(\sqrt2\), so the same argument works
with any fixed irrational number.
\end{proof}

\begin{remark}[Defining $\mathbb{Z}$ is essentially sufficient]
\label{rem:definable-Z-suffices}
The construction above relies only on two ingredients:
\begin{enumerate}
    \item the ability to restrict variables to integers, and
    \item the presence of a fixed irrational constant $\alpha$.
\end{enumerate}


Whenever the set \(\mathbb Z\) is definable, the fractional-part function
\(\{x\}\) is also definable in the structure, since
\[
\{x\}=x'
\iff
\exists y\,(\theta(y)\wedge x'=x-y\wedge 0\le x'<1),
\]
and the proof goes through unchanged.

In particular, the role of the predicates $\{y\}=0$ and $\{a\}=0$ in the
definition of $\Phi_{\mathcal H}$ is only to enforce $y,a\in\mathbb{Z}$.

Thus, the phenomenon of VC-dimension blow-up after thickening already appears
in any expansion of the real field in which $\mathbb{Z}$ is definable.
\end{remark}

\section{Proof of Main Theorem~\ref{main:R_exp}}
\label{app:sec:proof-Rexp-without-bounds}

\begin{restatedtheorem}{main:R_exp}
Let $\mathcal H$ be a hypothesis class over~$\mathbb R^l$ definable in $\mathbb{R}_{\exp}$ with $k$ parameters, and let $\mathcal N$ be a neighborhood system definable in~$\mathbb{R}_{\exp}$. Then the strategic class $\mathcal H^\mathcal N$ is PAC learnable.

Moreover, the ERM sample complexity of $\mathcal H^\mathcal N$ satisfies
\[
m_{\mathrm{ERM}}^{\mathrm{real}}(\varepsilon,\delta)
=
O\Bigl(
\frac{k\log(1/\varepsilon) + \log(1/\delta) + K_{\mathcal{H},\mathcal{N}}}
{\varepsilon}
\Bigr),
\]
where $K_{\mathcal H,\mathcal N}$ is a constant depending on $\mathcal H$ and $\mathcal N$.
\end{restatedtheorem}

\begin{proof}
The strategic class \(\mathcal H^\mathcal N\) is again definable in
\(\mathbb R_{\exp}\). Indeed, suppose \(\mathcal H\) is defined by
\(\Phi_{\mathcal H}(y,a)\), with parameter \(a\in\mathbb R^k\), so that
\[
h_a
=
\mathbf 1{\!\left\{y\in\R^l:\Phi_{\mathcal H}(y,a)\right\}}.
\]
Then
\[
h_a^{\mathcal N}(x)=1
\iff
\exists y\bigl(\Phi_{\mathcal N}(x,y)\wedge \Phi_{\mathcal H}(y,a)\bigr).
\]
Thus \(\mathcal H^\mathcal N\) is defined by the single formula
\[
\psi(x,a)
:=
\exists y\bigl(\Phi_{\mathcal N}(x,y)\wedge \Phi_{\mathcal H}(y,a)\bigr),
\]
with the same parameter vector \(a\in\mathbb R^k\).

Since \(\mathbb R_{\exp}\) is o-minimal \cite{Wilkie1996modelcompleteness}, the polynomial growth theorem for definable families in o-minimal structures \cite[Corollary 4.6]{LaskowskiCompression} implies that there exists a constant \(C_\psi\ge 1\), depending only on the defining formula \(\psi\), such that
\[
\Pi_{\mathcal H^\mathcal N}(m)\le C_\psi m^k
\]
for every \(m\ge 1\). Write
\[
K:=\log C_\psi .
\]
Applying Lemma~\ref{lem:erm-from-growth} to \(\mathcal H^\mathcal N\),
we obtain
\[
m_{\mathrm{ERM}}^{\mathrm{real}}(\varepsilon,\delta)
=
O\left(
\frac{
k\log(k/\varepsilon)+\log C_\psi+\log(1/\delta)
}{\varepsilon}
\right).
\]
Since \(k\) is fixed for the class, the term \(k\log k\) is absorbed into
the class-dependent constant. Hence
\[
m_{\mathrm{ERM}}^{\mathrm{real}}(\varepsilon,\delta)
=
O\left(
\frac{
k\log(1/\varepsilon)+K+\log(1/\delta)
}{\varepsilon}
\right).
\]

Finally, since the growth function of \(\mathcal H^\mathcal N\) is polynomially
bounded, \(\mathcal H^\mathcal N\) has finite VC dimension. Therefore
\(\mathcal H^\mathcal N\) is PAC learnable.
\end{proof}

\section{Proof of Theorem~\ref{main:bounds_R}}
\label{app:sec:proof-R-bounds}

\begin{proposition}[Growth bound for definable strategic classes over $\R$]
\label{prop:growth-R}
Let $\mathcal H=\{h_a:a\in\R^k\}$ be a hypothesis class on $\R^l$, and let
$\mathcal N$ be a neighborhood system on $\R^l$. Suppose that
\(h_a(x)=1\) and the predicate \(\Phi_{\mathcal N}(x,y)\) defining
\(y\in N_x\) are definable in the real field $\R$ by quantifier-free
formulas of \emph{total description complexity} at most $D$
(that is, the formulas involve at most $s$ polynomials, each of degree at
most~$D'$, with $D=sD'$).

Then, for every $m\ge 1$,
\[
\Pi_{\mathcal H^{\mathcal N}}(m)
\le
(Cm)^k D^{O(k^2l)}.
\]
\end{proposition}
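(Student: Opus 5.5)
Our plan is to eliminate the existential quantifier in the defining formula of the strategic class and then count sign patterns in parameter space. As in the proof of Main Theorem~\ref{main:R_exp}, $\mathcal H^{\mathcal N}$ is defined by
\[
\psi(x,a) := \exists y\in\R^l\;\bigl(\Phi_{\mathcal N}(x,y)\wedge \Phi_{\mathcal H}(y,a)\bigr).
\]
Here $\Phi_{\mathcal N}\wedge\Phi_{\mathcal H}$ is a quantifier-free formula in the variables $(y;x,a)$. It involves at most $2s$ polynomials of degree at most $D'$. We apply Corollary~\ref{cor:qe-poly} with the eliminated block $y$ (so $k\to l$) and free block $(x,a)$ of size $l+k$. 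This produces a family $\mathcal P\subseteq\R[x,a]$ such that $\psi(x,a)$ is determined by the sign pattern of $\mathcal P$ at $(x,a)$. The family satisfies $|\mathcal P|\le (sD')^{O(l(l+k))}\le D^{O(l(l+k))}$, and each member has degree at most $D'^{O(l)}\le D^{O(l)}$.

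Next we fix $m$ points $x_1,\dots,x_m\in\R^l$. Each polynomial $Q\in\mathcal P$ specializes to a polynomial $Q(x_i,\cdot)\in\R[a]$ of degree at most $D^{O(l)}$. The labeling $(h^{\mathcal N}_a(x_i))_{i\le m}$ is a function of the joint sign pattern of the $M:=m|\mathcal P|$ polynomials $\{Q(x_i,a)\}$ in the $k$ variables $a$. Hence $\Pi_{\mathcal H^{\mathcal N}}(m)$ is at most the number of realizable sign patterns, including zero, of $M$ polynomials of degree $\Delta\le D^{O(l)}$ in $k$ variables. We bound this by the Warren / Milnor--Thom type estimate (as used in~\cite{Goldberg1995}, in its sign-condition form with zeros, e.g.\ \cite{basu2006}), which gives at most $(CM\Delta/k)^k$ patterns.

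Substituting, we get
\[
\Pi(m)\le \bigl(Cm\,D^{O(l(l+k))}D^{O(l)}\bigr)^k=(Cm)^k D^{O(kl(l+k))}.
\]
The step we expect to be the main obstacle is getting the exponent into the claimed form $D^{O(k^2l)}$ rather than $D^{O(kl^2+k^2l)}$. For this we would either absorb the $l^2$ term under the convention $k\ge l$, which holds in the intended applications, or use the finer structure of Theorem~\ref{thm:qe}: the count of polynomials there is $s^{(\text{bound}+1)(\text{free}+1)}$, and we only need sign conditions in the $a$-direction after fixing $x$. Concretely, we would first try to substitute the fixed sample points before eliminating. That is, for each $x_i$ we eliminate $y$ from $\Phi_{\mathcal N}(x_i,y)\wedge\Phi_{\mathcal H}(y,a)$ with free block $a$ only. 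This yields $D^{O(lk)}$ polynomials in $a$ of degree $D^{O(l)}$ per point, hence $M\le mD^{O(lk)}$ and $\Pi(m)\le (Cm)^kD^{O(k^2l)}$, which matches the statement. The remaining care is to check that the bounds of Theorem~\ref{thm:qe} are uniform in the constants coming from $x_i$, which they are since only counts and degrees matter.
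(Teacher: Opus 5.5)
Your proposal is correct, and its final form matches the paper's proof. Like the paper, you fix each sample point, eliminate the $l$ existential variables with only $a\in\R^k$ free via Corollary~\ref{cor:qe-poly}, and collect the resulting $m\,D^{O(lk)}$ polynomials of degree $D^{O(l)}$. A Warren-type sign-pattern bound then gives $(Cm)^kD^{O(k^2l)}$. Drop the first-route fallback of absorbing the $l^2$ term under a convention $k\ge l$: it does not prove the statement as given, and the per-point elimination already gives the stated bound.
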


\begin{proof}
Fix points $y_1,\dots,y_m\in\R^l$. For $a\in\R^k$, define
\[
\varphi(y,a)
\iff
\exists x\in\R^l
\bigl(h_a(x)=1 \wedge \Phi_{\mathcal N}(y,x)\bigr).
\]
Then
\[
h_a^{\mathcal N}(y)=1
\iff
\varphi(y,a).
\]

For each fixed $i\in[m]$, the condition $h_a^{\mathcal N}(y_i)=1$ is
equivalent to
\[
\exists x\in\R^l
\bigl(h_a(x)=1 \wedge \Phi_{\mathcal N}(y_i,x)\bigr).
\]
This is an existential formula with quantified variables $x\in\R^l$ and
free variables $a\in\R^k$. By assumption, its quantifier-free part involves
at most $s$ polynomial sign conditions of degree at most $D'$.

By Corollary~\ref{cor:qe-poly}, for each fixed $y_i$ this formula
is equivalent to a quantifier-free formula in $a$, whose truth value is
determined by the sign pattern of at most
\[
L\le s^{O(lk)}D'^{O(lk)} = D^{O(lk)}
\]
polynomials in $a$, each of degree at most
\[
\Delta\le D'^{O(l)} \leq D^{O(l)}.
\]
Thus, for each $i$, there is a family
\[
\mathcal P_i\subseteq \R[a],
\qquad
|\mathcal P_i|\le D^{O(lk)},
\]
such that the truth value of $h_a^{\mathcal N}(y_i)=1$ is determined by
the sign pattern of $\mathcal P_i$ at $a$.

Let
\[
\mathcal P=\bigcup_{i=1}^m \mathcal P_i.
\]
Then
\[
M:=|\mathcal P|
\le
m\,D^{O(lk)},
\]
and every polynomial in $\mathcal P$ has degree at most
\[
\Delta\le D^{O(l)}.
\]

For every $a\in\R^k$, the labeling of
$\{y_1,\dots,y_m\}$ by $h_a^{\mathcal N}$ is determined by the sign pattern
of the polynomials in $\mathcal P$ at $a$. Hence the number of traces is at
most the number of sign patterns of $M$ polynomials of degree at most
$\Delta$ in $k$ variables.

By the standard bound on sign patterns of real polynomials
(see, e.g., Warren~\cite{warren1968} or
\cite[Proposition~5.5]{alon1995tools}), the number of such sign patterns is at most
\[
\left(\frac{C\, M \Delta}{k}\right)^k.
\]
Substituting the bounds on $M$ and $\Delta$ gives
\[
\Pi_{\mathcal H^{\mathcal N}}(m)
\le
\left(C\,D^{O(l)}\,m\,D^{O(lk)}\right)^k.
\]
Therefore
\[
\Pi_{\mathcal H^{\mathcal N}}(m)
\le
(Cm)^k D^{O(k^2l)}.
\]
Since the sample was arbitrary, the proof is complete.
\end{proof}

\begin{remark}[Relation to Goldberg--Jerrum]
The proof above follows the proof of Theorem~\ref{thm:GJ-param} of Goldberg and Jerrum:
once membership is determined by the sign pattern of finitely many
polynomials in the parameter variables, one obtains bounds on the growth
function and VC dimension via bounds on the number of realizable sign
patterns.

In the quantifier-free setting, if membership $x\in F_a$ is determined by
the sign pattern of at most $s$ polynomials in $a\in\R^k$, each of degree
at most $D$, then the Goldberg--Jerrum bound yields
\[
\VCdim(\mathcal F)=O\bigl(k\log(sD)\bigr).
\]

In our setting, the strategic transformation introduces an existential
quantifier. Applying quantitative one-block quantifier elimination reduces
the problem to the quantifier-free regime, but increases the number of
defining polynomials: the condition $y\in C_a^{\mathcal N}$ becomes
determined by the sign pattern of at most
\[
s^{O(kl)}D^{O(kl)}
\]
polynomials in the parameter variables $a$.

\end{remark}

\begin{restatedtheorem}{main:bounds_R}
Let~$\mathcal{H}$ be a hypothesis class over $\mathbb R^l$ definable with $k$ parameters in the language of $\mathbb{R}$, and let $\mathcal N$ be a neighborhood system definable in the same language. Suppose that both $\mathcal H$ and $\mathcal N$ are defined by quantifier-free formulas involving polynomial equalities and inequalities whose \emph{total description complexity} is at most $D$ (that is, the formulas involve at most $s$ polynomials, each of degree at most~$D'$, with $D = sD'$). 
Then
\[
  m_{\mathrm{ERM}}^{\mathrm{real}}(\varepsilon,\delta)
  =
  O\Bigl(
    \frac{
      k\log(1/\varepsilon)
      + k^2 l \log D
      + \log(1/\delta)
    }{\varepsilon}
  \Bigr),
\]
and
\[
  \VCdim(\mathcal H^\mathcal N)
  =
  O\bigl(k^2 l \log D\bigr).
\]
\end{restatedtheorem}

\begin{proof}
By Proposition~\ref{prop:growth-R}, for every \(m\ge 1\),
\[
\Pi_{\mathcal H^{\mathcal N}}(m)
\le
 (C_0m)^k D^{O(k^2l)}.
\]
Equivalently, for some constant \(A>0\),
\[
\Pi_{\mathcal H^{\mathcal N}}(m)\le C m^k,
\qquad
C:=C_0^k D^{A k^2l}.
\]
Hence
\[
\log C=O(k^2l\log D).
\]

Applying Lemma~\ref{lem:erm-from-growth} to \(\mathcal H^{\mathcal N}\), we obtain
\[
m_{\mathrm{ERM}}^{\mathrm{real}}(\varepsilon,\delta)
=
O\left(
\frac{
k\log(k/\varepsilon)
+
k^2l\log D
+
\log(1/\delta)
}{\varepsilon}
\right).
\]
Absorbing the lower-order \(k\log k\) term into the displayed bound gives
\[
m_{\mathrm{ERM}}^{\mathrm{real}}(\varepsilon,\delta)
=
O\left(
\frac{
k\log(1/\varepsilon)
+
k^2l\log D
+
\log(1/\delta)
}{\varepsilon}
\right).
\]

For the VC-dimension bound, applying Lemma~\ref{lem:vc-from-growth} to
\(\mathcal H^{\mathcal N}\) gives
\[
\VCdim(\mathcal H^{\mathcal N})
\le
2\log C+4k\log(4k).
\]
Since \(\log C=O(k^2l\log D)\), this yields
\[
\VCdim(\mathcal H^{\mathcal N})
=
O(k^2l\log D).
\]
\end{proof}

\section{Proof of Main Theorem~\ref{main:bounds_R_exp}}
\label{app:sec:proof-main:bounds_R_exp}

The aim of this section is to prove the following theorem.

\begin{restatedtheorem}{main:bounds_R_exp}
Let~$\mathcal H$ be a hypothesis class over $\mathbb R^l$ definable with $k$ parameters in the language of $\mathbb R_{\exp}$, and let $\mathcal N$ be a neighborhood system definable in the language of $\mathbb R_{\exp}$. Suppose, moreover, that both $\mathcal H$ and $\mathcal N$ are defined using existential formulas of format $F$ and degree $D$. Then
\[
  m_{\mathrm{ERM}}^{\mathrm{real}}(\varepsilon,\delta)=  O\Bigl(\frac{k\log(1/\varepsilon) + \gamma(F) \log(D) + \log(1/\delta)}{\varepsilon}\Bigr)
\]
and
\[
\VCdim(\mathcal H^\mathcal N)  = O_F(\log D),
\]
where $\gamma\colon \mathbb N \to \mathbb N$ is an explicitly computable function.
\end{restatedtheorem}

The proof of Main Theorem~\ref{main:bounds_R_exp} uses several notions from the
theory of restricted Pfaffian and sub-Pfaffian sets. Since these notions are
somewhat abstract and not directly part of the statement of the theorem, we
first describe the structure of the appendix.

\paragraph{Organization.} In \Cref{app:restricted-definition}, we recall the restricted Pfaffian
structure \(\R_{\rPfaff}\) and the basic notions of Pfaffian functions,
semi-Pfaffian sets, sub-Pfaffian sets, format, and degree. In
\Cref{app:star-format-degree}, we explain the refined notions of
\(*\)-format and \(*\)-degree used in the quantitative cell decomposition
results. In \Cref{app:rPfaff-growth-function}, we use sharp cell decomposition
in \(\R_{\rPfaff}\) to prove a growth-function bound for definable hypothesis
classes in this structure. In \Cref{app:embedding-exp-rPfaff}, we explain how
formulas in \(\R_{\exp|_{[-M,M]}}\) can be interpreted in
\(\R_{\rPfaff}\), with controlled complexity. Finally, in
\Cref{app:proof-main-three}, we combine these ingredients to prove
Main Theorem~\ref{main:bounds_R_exp}.

\subsection{Restricted Pfaffian structure}
\label{app:restricted-definition}

\begin{definition}[Pfaffian function]\label{def:pfaff}
Let \(U \subseteq \R^n\) be open. A \emph{Pfaffian chain} on \(U\) is a sequence
of analytic functions
\[
  f_1,\ldots,f_r\colon U \to \R
\]
such that, for every \(1 \le i \le r\) and \(1 \le j \le n\), there is a
polynomial
\[
  P_{ij} \in \R[x_1,\ldots,x_n,y_1,\ldots,y_i]
\]
satisfying
\[
  \frac{\partial f_i}{\partial x_j}(x)
  =
  P_{ij}\bigl(x,f_1(x),\ldots,f_i(x)\bigr).
\]

A function \(f\colon U\to \R\) is called a \emph{Pfaffian function (with respect to this chain)}
if there exists a polynomial \(Q\) such that
\[
  f(x)=Q\bigl(x,f_1(x),\ldots,f_r(x)\bigr).
\]

We say that the Pfaffian function \(f\) has \emph{format} \(n+r\) and
\emph{degree} \(\sum_{i,j} \deg P_{ij} + \deg Q\).

In analogy with semialgebraic sets, a set \(X \subseteq U\) is called
\emph{semi-Pfaffian} if it can be defined by a Boolean combination of
atomic formulas of the form
\[
  g(x)\,\sigma\,0,
  \qquad
  \sigma\in\{=,>,<,\ge,\le\},
\]
where each \(g\colon U\to\R\) is a Pfaffian function.

Suppose that \(X\) is defined using Pfaffian functions
\(g_1,\ldots,g_m\), all taken with respect to a common Pfaffian chain
\(f_1,\ldots,f_r\). Then we say that this representation has
\emph{format} \(n+r\), and its \emph{degree} is the sum of the degrees
of the functions \(g_i\) appearing in the atomic formulas.

If the functions \(g_1,\ldots,g_m\) are originally given with respect to
different Pfaffian chains, we may pass to a single Pfaffian chain by
concatenation. In this way, the notion of format and degree is well-defined
up to an additive increase in the format, while the degree remains bounded
by the total degree of the defining data.
\end{definition}

A \emph{restricted Pfaffian function} on the unit cube \(I^n\), where
\(I=[0,1]\), is the restriction to \(I^n\) of a Pfaffian function defined on
an open neighborhood of \(I^n\).
We denote by \(\R_{\rPfaff}\) the
expansion of the real field by all such restricted Pfaffian functions, in all
arities, viewed as total functions on \(\R^n\) by extending them by zero
outside \(I^n\).

Throughout this section, we call a set \emph{sub-Pfaffian} if it is definable
in \(\R_{\rPfaff}\). By Wilkie's theorem of the complement,
\(\R_{\rPfaff}\) is o-minimal~\cite{Wilkie1999Complement}. Moreover, every
sub-Pfaffian set \(X \subseteq \R^n\) admits a representation of the form
\[
X = \pi(Y),
\]
where \(Y \subseteq \R^{n+d}\) is a semi-Pfaffian set and
\(\pi\colon \R^{n+d} \to \R^n\) is the projection onto the first \(n\)
coordinates. The \emph{format} and \emph{degree} of this representation are
defined to be the format and degree of \(Y\).

Equivalently, every formula \(\Phi(x)\) in \(\R_{\rPfaff}\) is equivalent to
one of the form
\[
\Phi(x) \;\equiv\; \exists y \in \R^d \;\theta(x,y),
\]
where \(\theta(x,y)\) is quantifier-free and is a Boolean combination of
atomic predicates
\[
g(x,y)\,\sigma\,0,
\qquad
\sigma\in\{=,>,<,\ge,\le\},
\]
with \(g\) restricted Pfaffian functions. Thus, every definable set is
\(\Sigma_1\)-definable.

In contrast, in the real field \(\R\), every formula is equivalent to a
quantifier-free formula (quantifier elimination).

\begin{remark}
When referring to the format and degree of a sub-Pfaffian set, we implicitly
fix a particular representation as a projection of a semi-Pfaffian set.
A given set may admit multiple such representations with different
complexities. Since our results only require upper bounds, this choice is
immaterial: when we say that a sub-Pfaffian set has format at most \(F\)
and degree at most \(D\), we mean that it admits some representation with
these bounds.
\end{remark}

\subsection{Sharp cell decomposition}
\label{app:star-format-degree}

In this section we record a quantitative cell decomposition result for
\(\R_{\rPfaff}\), which we will use as a black box.

\paragraph{Cylindrical cell decomposition.}
We use the notion of cylindrical cell and cylindrical cell decomposition
from Definition~\ref{def:cell-decomposition}.

We refer to the standard literature for further details; see, e.g.,
\cite[Chapter~5]{basu2006} in the semialgebraic setting and
\cite[Definitions~5--6]{binyamini2020effective} for the Pfaffian setting.

\paragraph{Example (Cylindrical decomposition of the unit disk).}
Consider the unit disk
\[
X = \{(x,y)\in \R^2 : x^2 + y^2 \le 1\}.
\]

A cylindrical cell decomposition of $\R^2$ compatible with $X$ is obtained as follows. 

\medskip
\noindent
\textbf{Decomposition of the $x$-axis.}

We first decompose $\R$ into the cells
\[
S_1 = (-\infty,-1),\quad
S_2 = \{-1\},\quad
S_3 = (-1,1),\quad
S_4 = \{1\},\quad
S_5 = (1,\infty).
\]

\medskip
\noindent
\textbf{Lifting over each base cell.}

\smallskip
\emph{(i) Over $S_1$ and $S_5$.}

If $x \in S_1 \cup S_5$, then $x^2 > 1$, so the inequality
\[
x^2 + y^2 \le 1
\]
has no solutions in $y$. Thus, no defining functions arise, and we obtain a single cylindrical cell:
\[
S_i \times \R, \quad i=1,5.
\]

\smallskip
\emph{(ii) Over $S_2$ and $S_4$.}

At $x=\pm 1$, the equation becomes
\[
y^2 \le 0,
\]
so the only solution is $y=0$. This corresponds to a single defining function (constant zero), and we obtain three cells:
\[
S_i \times (-\infty,0),\quad
S_i \times \{0\},\quad
S_i \times (0,\infty),
\quad i=2,4.
\]

\smallskip
\emph{(iii) Over $S_3 = (-1,1)$.}

For $x \in (-1,1)$, the boundary of $X$ is given by the two functions
\[
\xi_1(x) = -\sqrt{1 - x^2},
\qquad
\xi_2(x) = \sqrt{1 - x^2}.
\]

These are continuous semialgebraic functions with $\xi_1(x) < \xi_2(x)$.
They partition the fiber into five cells:
\[
\begin{aligned}
S_{3,1} &= \{(x,y) : -1<x<1,\; y < \xi_1(x)\}, \\
S_{3,2} &= \{(x,y) : -1<x<1,\; y = \xi_1(x)\}, \\
S_{3,3} &= \{(x,y) : -1<x<1,\; \xi_1(x) < y < \xi_2(x)\}, \\
S_{3,4} &= \{(x,y) : -1<x<1,\; y = \xi_2(x)\}, \\
S_{3,5} &= \{(x,y) : -1<x<1,\; \xi_2(x) < y\}.
\end{aligned}
\]

\medskip
\noindent
The example above illustrates the key feature of cylindrical cell
decomposition: a definable set is partitioned into finitely many simple
regions, each described by continuous functions over lower-dimensional
cells.

In general, for definable sets in o-minimal structures (and in particular
in the restricted Pfaffian structure), such decompositions always exist.
Moreover, for our purposes, it is crucial that one can control the
\emph{number} and \emph{complexity} of the resulting cells in terms of
the complexity of the defining formulas.

\medskip

The quantitative theory of restricted Pfaffian structures provides such
control via the notions of \(*\)-format and \(*\)-degree for sub-Pfaffian
sets, introduced in \cite{binyamini2020effective}. These notions measure
definitional complexity in a way that is stable under the operations
appearing in cell decomposition (such as Boolean combinations,
projections, and lifting).

We will not recall the full definitions here, and instead rely only on the
following comparison with the usual notion of format and degree, together
with three key structural results.

\begin{remark}[Comparison with usual format and degree]
\label{rem:usual-to-star}
If a sub-Pfaffian set has format \(F\) and degree \(D\), then it has
\(*\)-format \(F\) and \(*\)-degree \(\Poly_F(D)\)
\cite[Remark~10]{BinyaminiNovikovZak2024WilkiePfaffian}.
\end{remark}

\begin{theorem}[Theorem 2 of \cite{binyamini2020effective}]
\label{thm:formula-to-star}
Let \(\varphi\) be a formula in \(\R_{\rPfaff}\), with \(*\)-format
\(F\) and \(*\)-degree \(D\). Then the set defined by \(\varphi\) has
\(*\)-format \(O_F(1)\) and \(*\)-degree \(\Poly_F(D)\).
\end{theorem}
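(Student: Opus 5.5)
This statement is cited from \cite{binyamini2020effective}, and in our argument we only use it as a black box. If we had to reprove it, we would argue by induction on the syntactic construction of $\varphi$, keeping at each stage an explicit representation of the defined set as a finite union of sub-Pfaffian pieces. The invariant is that the $*$-format stays bounded by a function of $F$ alone and the $*$-degree stays bounded by a polynomial in $D$ whose exponent depends only on $F$.

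\emph{Base case and easy connectives.} For an atomic formula $t_1(x)\,\sigma\,t_2(x)$ whose terms combine polynomials with restricted Pfaffian functions, we introduce one auxiliary variable per subterm, together with equations of the form $u=f(v)$ and $u=P(v)$. This yields an existential semi-Pfaffian representation whose format and degree are linear in the size of the formula. Remark~\ref{rem:usual-to-star} then converts these to $*$-format $F$ and $*$-degree $\Poly_F(D)$. Conjunction and disjunction are handled by concatenating the Pfaffian chains and the quantified blocks: the formats add and the degrees add. Existential quantification is a projection, which only moves variables from the free block to the quantified block. None of these steps increases complexity beyond additive amounts, so they are routine.

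\emph{The real case: negation.} Complementing a sub-Pfaffian set is exactly the content of Wilkie's theorem of the complement, and we need an effective version of it. Our plan is to invoke an effective cylindrical cell decomposition (in the sense of Definition~\ref{def:cell-decomposition}) compatible with the set $X=\pi(Y)$. The decomposition should have $\Poly_F(D)$ cells, each of $*$-format $O_F(1)$ and $*$-degree $\Poly_F(D)$. The complement of $X$ is then the union of the cells disjoint from $X$, which has the required form. The number of nested negations is bounded by the formula size, which is at most $F$. Hence the composition of at most $F$ polynomial blow-ups of the degree is again $\Poly_F(D)$, and the format stays $O_F(1)$.

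\emph{Main obstacle.} The hard step is proving that effective cell decomposition with degree \emph{polynomial} in $D$, rather than merely finite or exponential. We would proceed by induction on the ambient dimension. First we project the relevant boundaries to $\R^{n-1}$. We then bound the number of fiber points using Khovanskii's fewnomial/Rolle bounds, which give $\Poly_F(D)$ zeros for Pfaffian systems, and the Gabrielov--Vorobjov bounds. Finally, we define the cell boundaries as the $j$-th root functions of these fibers, which are sub-Pfaffian with controlled $*$-complexity. The delicate point is to keep the exponents depending only on the format: each recursive call must not compound the degree. This is precisely what the refined $*$-format and $*$-degree bookkeeping of \cite{binyamini2020effective} is designed to guarantee. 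That is where we expect essentially all of the work to lie.
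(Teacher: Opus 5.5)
Treating this as a black box is exactly what the paper does: it states the result with a citation to \cite{binyamini2020effective} and gives no proof of its own. The reconstruction you sketch, however, would not stand as a proof, for the following reasons.

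First, your bookkeeping breaks at the one step that matters. You say conjunction and disjunction are handled by concatenating chains and existential blocks so that ``the formats add''. You then claim that the complement of $X$, a union of up to $\Poly_F(D)$ cells, has format $O_F(1)$. Under your own rule that union has format of order $\Poly_F(D)$.

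The complement step needs a union of $N$ sets of format $F'$ and degree $D'$ to have format $O_{F'}(1)$, independent of $N$, with only the degree growing polynomially in $N$ and $D'$. For ordinary presentations this requires all pieces to share one Pfaffian chain and one padded existential block. The $*$-complexities are designed to behave this way.

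Similarly, the quantity that must be bounded by $F$ is not the size of $\varphi$. In this paper's conventions (Definition~\ref{def:format_degree_Rexp_restricted}), the number of polynomial atoms is charged to the degree, not the format. What matters is the number of complements of projections, i.e.\ negations sitting above a quantifier. Negating quantifier-free parts costs nothing, and there are at most as many such complements as quantifier blocks, which the format does control.

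Second, your plan for the effective cell decomposition is circular as described. The graph of the $j$-th root function is cut out by two conditions: ``$y$ is a boundary point of the fiber $X_x$'' (which needs the closure of $X_x$ and of its complement), and ``exactly $j-1$ such points lie below $y$''. Both require universal quantification over sub-Pfaffian conditions in the current ambient dimension.

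So keeping the cells existentially definable with format $O_F(1)$ at each level already presupposes an effective complement theorem in that dimension, which is the statement you are proving. The dimension induction by itself only produces cell descriptions whose quantifier alternations accumulate. Overcoming this with format independent of $D$ is the real content of \cite{binyamini2020effective}; refined bookkeeping alone does not do it.

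If you instead take the effective decomposition of Theorem~\ref{thm:sharp-cd-rpfaff} as given, your induction over $\varphi$ does yield the statement. This requires the corrected union rule, the bound on the number of complements, and Remark~\ref{rem:usual-to-star} for the base case.
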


\begin{theorem}[Sharp cell decomposition for \(\R_{\rPfaff}\)]
\label{thm:sharp-cd-rpfaff}
Let \(X_1,\ldots,X_k \subseteq \R^\ell\) be definable sets, each of
\(*\)-format at most \(F\) and \(*\)-degree at most \(D\).
Then there exists a cylindrical cell decomposition of \(\R^\ell\)
compatible with all \(X_i\), consisting of at most \(\Poly_F(D,k)\) cells.

Moreover, each cell is definable with \(*\)-format \(O_F(1)\) and
\(*\)-degree \(\Poly_F(D)\).
\end{theorem}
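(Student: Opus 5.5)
This is the effective cell decomposition theorem of Binyamini and Vorobjov \cite{binyamini2020effective,binyamini2022sharply}, and I plan to invoke it rather than reprove it. The first step is to match their hypotheses to ours. Their notion of $*$-format and $*$-degree is the one recalled above (see Remark~\ref{rem:usual-to-star}). Their theorem is stated for a finite family of restricted sub-Pfaffian sets in $\R^\ell$ and yields a cylindrical decomposition whose number of cells and whose cell complexities have exactly the claimed shape. The only bookkeeping needed is therefore to check that our sets $X_1,\dots,X_k$ are covered by that hypothesis. If the cited statement is phrased for a single set, I would reduce the family case to it in one of two ways:
\begin{itemize}
\item encode the family as one set $\bigcup_i \{i\}\times X_i\subseteq\R^{1+\ell}$, with $i$ placed last in the variable order so that the projection back to $\R^\ell$ stays cylindrical; or
\item decompose each $X_i$ separately and refine by their common boundaries.
\end{itemize}
Either route should give polynomial dependence on $k$, with the exponent depending only on $F$.

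If an outline of why the result holds is wanted, the argument runs by induction on $\ell$, in the style of Collins' cylindrical algebraic decomposition, and has three steps.
\begin{enumerate}
\item For the sets in $\R^\ell$, isolate the finitely many fibrewise boundary points in the last coordinate. Their number is bounded uniformly by Khovanskii's fewnomial/Pfaffian Bezout bounds in terms of format and degree.
\item Form the sub-Pfaffian ``bad'' subsets of $\R^{\ell-1}$, over which these boundary points collide, escape to infinity, or change in number. Then apply the induction hypothesis to this new family in $\R^{\ell-1}$. Its $*$-format should be $O_F(1)$ and its $*$-degree $\Poly_F(D)$, using Theorem~\ref{thm:formula-to-star} to control the complexity of these defining formulas.
\item Over each base cell, the boundary points are given by continuous definable functions. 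Their graphs and the bands between them form the cells, and the counts multiply to give $\Poly_F(D,k)$.
\end{enumerate}

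The main obstacle is to keep the degree \emph{polynomial} in $D$, with the exponent depending only on $F$, through all $\ell$ levels of the induction. A naive iteration of projection and complement in the Gabrielov--Vorobjov framework compounds the degree exponentially at each level. The second sub-step of step~2 (bounding the complexity of the bad sets) is where this compounding would happen. The key input of Binyamini--Vorobjov is to avoid it through complex-analytic cellular structures (Weierstrass-type preparation on complex cells), which are stable under the required operations with only polynomial loss. Since reproducing that machinery is beyond this paper, I would treat this step purely by citation, and restrict my own verification to the matching of hypotheses and the $k$-dependence described above.
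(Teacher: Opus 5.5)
Your proposal is correct and takes the same route as the paper. The paper also gives no independent argument: it simply invokes \cite[Theorem~1]{binyamini2020effective} together with \cite[Remark~10]{BinyaminiNovikovZak2024WilkiePfaffian}, to pass between the usual and the $*$-notions of format and degree, and takes the finite-family version directly from the cited result, so your single-set reductions and inductive outline are optional background rather than needed steps.
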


This result follows from
\cite[Theorem 1]{binyamini2020effective} together with
\cite[Remark 10]{BinyaminiNovikovZak2024WilkiePfaffian}.

\subsection{\texorpdfstring{Growth function of definable sets in $\R_{\rPfaff}$}{Growth function of definable sets in R rPfaff}}
\label{app:rPfaff-growth-function}

\begin{theorem}
\label{thm:sharp-shatter}
Let $\cH = \{h_a : a \in \R^k\}$ be a hypothesis class on $\R^l$
definable in $k$ parameters by a formula $\Phi_\cH(X, Y)$ in $\R_{\rPfaff}$
of format $F$ and degree $D$, where $X \in \R^l$ and $Y \in \R^k$.
Then
\[
  \Pi_{\cH}(m) \;\le\; \Poly_F(D) \cdot m^k.
\]
\end{theorem}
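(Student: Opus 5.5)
The plan is to follow the Johnson--Laskowski strategy for bounding growth functions in o-minimal structures, with Theorem~\ref{thm:sharp-cd-rpfaff} supplying the quantitative control. Fix points $x_1,\dots,x_m\in\R^l$. For each $i$, consider the fiber
\[
A_i := \{a\in\R^k : \Phi_\cH(x_i,a)\}\subseteq \R^k .
\]
The trace of $\cH$ on $\{x_1,\dots,x_m\}$ is indexed by the Boolean atoms of the family $A_1,\dots,A_m$ that are realized. Concretely, the labeling induced by $h_a$ is $(\mathbf 1[a\in A_1],\dots,\mathbf 1[a\in A_m])$. Hence $|\cH|_{\{x_1,\dots,x_m\}}|$ is at most the number of cells of any partition of $\R^k$ compatible with all the $A_i$. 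Notably, the relevant ambient space is the parameter space $\R^k$, not $\R^l$; this is where the exponent $k$ will come from.

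\textbf{Step 1: uniform complexity of the fibers.} Substituting the constants $x_i$ for the variables $X$ in $\Phi_\cH$ does not increase the format or degree. Each $A_i$ is therefore defined by a formula in $\R_{\rPfaff}$ of format at most $F$ and degree at most $D$, uniformly in $i$ and independently of $m$. By Remark~\ref{rem:usual-to-star} together with Theorem~\ref{thm:formula-to-star}, each $A_i$ has $*$-format $O_F(1)$ and $*$-degree $\Poly_F(D)$.

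\textbf{Step 2: cell count.} Apply Theorem~\ref{thm:sharp-cd-rpfaff} in $\R^k$ to $A_1,\dots,A_m$. This yields a cylindrical decomposition compatible with all $A_i$, with at most $\Poly_{F}(D,m)$ cells, where the $F$ here already absorbs the $O_F(1)$ $*$-format. Each cell lies inside or outside every $A_i$, so all $a$ in a cell induce the same labeling, and the growth function is bounded by the number of cells.

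\textbf{Main obstacle.} The crude form of Step 2 gives a polynomial in $m$ whose degree may depend on $F$, whereas the claim requires the sharp form $\Poly_F(D)\cdot m^{k}$, with exponent exactly $k$. I would extract this from the sharp version of the decomposition theorem: in \cite[Theorem~1]{binyamini2020effective} the number of cells is $\Poly_F(D)\cdot N^{\ell}$ for $N$ sets in $\R^\ell$, and with $\ell=k$ this is exactly what is needed. If the black box only gives a weaker dependence on $N$, my fallback is a standard Warren/Johnson--Laskowski counting argument. The first move there is to pass to the boundaries $\partial A_i$, each of dimension $<k$ and of controlled complexity. The number of connected components of the complement of $\bigcup_i\partial A_i$, together with the lower-dimensional pieces, is then bounded by summing over all subfamilies of at most $k$ boundaries the number of connected components of their intersections. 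This gives $\sum_{j\le k}\binom{m}{j}\Poly_F(D)\le \Poly_F(D)\,m^k$, where the component bound for each intersection comes from the cell decomposition of Theorem~\ref{thm:sharp-cd-rpfaff} applied to a bounded number $j\le k$ of sets. The delicate point is justifying the general-position reduction, namely perturbing the sets so that at most $k$ boundaries meet in a way that affects the count; I expect this to be the hardest part to make rigorous in the Pfaffian setting.
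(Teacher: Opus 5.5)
\textbf{There is a genuine gap: the step that should produce the exponent $k$ does not work, and the fallback leaves the hard part undone.}

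Your reduction is correct as far as it goes: the number of traces is at most the number of cells of a decomposition of the parameter space $\R^k$ compatible with the fibers $A_i$. But with the available tools this only gives $\Pi_\cH(m)\le \Poly_F(D,m)$, where the exponent of $m$ depends on $F$. The statement needs exactly $m^k$.

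The sharpening you propose rests on a false premise. Theorem~\ref{thm:sharp-cd-rpfaff}, which is how the paper uses \cite[Theorem~1]{binyamini2020effective}, gives only $\Poly_F(D,N)$ cells for $N$ sets. Moreover, no cylindrical decomposition can give $\Poly_F(D)\cdot N^{\ell}$. Take $N$ generic planes $z=h_i(x,y)$ in $\R^3$. Over each base cell $C\subseteq\R^2$ the planes must coincide with section graphs, and these are strictly ordered. So $C$ lies in a single sign class of every $h_i-h_j$, and the base decomposition is compatible with the lines $\{h_i=h_j\}$. One level down, the same reasoning shows that the $x$-coordinate of each crossing of two such lines must be a $0$-cell of $\R$. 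There are $\Omega(N^4)$ such crossings, so there are $\Omega(N^4)\gg N^3$ cells.

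Your fallback is essentially Basu's o-minimal Warren-type count. The general-position and perturbation step you postpone is exactly where all the work lies. The $\partial A_i$ are arbitrary sub-Pfaffian sets, not zero sets of functions you can perturb. None of the black boxes used here gives the effective Betti-number and perturbation control needed to make that count explicit in $D$. As written, the proposal does not prove the claimed bound.

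\textbf{The missing idea is the Johnson--Laskowski trick used in the paper.} It avoids decomposing with $m$ sets at all.
\begin{itemize}
\item Apply Theorem~\ref{thm:sharp-cd-rpfaff} once, to the single set defined by $\theta(\bar X,Y,\bar Z)$, with variable order $\bar Z,\bar X,Y$ so that one parameter coordinate $Y$ comes last. This yields $\Poly_F(D)$ cells and threshold functions $\xi^C_j(\bar Z,\bar X)$, independent of $m$.
\item For a finite sample $A$ and fixed $\bar e$, the finitely many values $\xi^C_j(\bar e,\bar a)$ with $\bar a\in A$ cut the $Y$-line into pieces. On each piece every truth value $\theta(\bar x,\cdot,\bar e)$, $\bar x\in A$, is constant.
\item So the trace at $Y=c$ is read off from one threshold value: the one equal to $c$, the least one above $c$, or the largest one. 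Hence the trace equals $\{\bar x\in A:\psi(\bar x,\bar a,\bar e)\}$ for one formula $\psi$ from a fixed family of size $\Poly_F(D)$ and one point $\bar a\in A$.
\item Iterate over the $k$ parameter coordinates, using Theorem~\ref{thm:formula-to-star} to keep the $*$-format at $O_F(1)$ and the $*$-degree at $\Poly_F(D)$. Every trace is then encoded by a tuple in $\cF\times A^k$, which gives $\Poly_F(D)\cdot m^k$.
\end{itemize}
The factor $m^k$ comes from choosing $k$ sample points as witnesses, not from counting cells in $\R^k$.
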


\begin{proof}
We adapt the argument of Johnson--Laskowski \cite[Lemma 4.2 and Proposition 4.3]{LaskowskiCompression} to $\R_{\rPfaff}$. The key difference is that, in place of the $i$-th boundary point of a fiber $\theta(\bar a, \R, \bar e)$ (which exists by o-minimality but with no control on its complexity), we use the threshold functions produced by sharp cell decomposition. This gives explicit bounds on the format and degree of every formula in the construction.

We write $\lg(\bar X)$ for the length of a tuple $\bar X$, i.e., the number of its coordinates.

\emph{From now on, we call $*$-format and $*$-degree just format and degree.}

\paragraph{Step 1: shaving one variable.}

Let $\theta(\bar X, Y, \bar Z)$ be a formula with $\bar X \in \R^l$, of format $O_F(1)$ and degree $\Poly_F(D)$. By \Cref{thm:sharp-cd-rpfaff}, applied to the singleton family defined by $\{(\bar x, y, \bar z) \mid \theta(\bar x, y, \bar z)\}$, we obtain a cylindrical cell decomposition compatible with $\theta$, with respect to the variable order $\bar Z, \bar X, Y$, so that $Y$ is the last variable. Let $\cS$ denote the resulting decomposition of the $(\lg(\bar Z) + l)$-dimensional space obtained by projecting away $Y$. Then each cell $C \in \cS$ has format $O_F(1)$ and degree $\Poly_F(D)$, and $|\cS| \le \Poly_F(D)$.

For each cell $C \in \cS$, the cells lying above $C$ in the full decomposition are described by continuous definable threshold functions
\[
  -\infty = \xi_0^C < \xi_1^C(\bar Z, \bar X) < \cdots < \xi_{N_C}^C(\bar Z, \bar X) < \xi_{N_C+1}^C = +\infty
\]
on $C$ with $N_C \le \Poly_F(D)$. Moreover, the graph of each function \(\xi_j^C\) is definable with format \(O_F(1)\) and degree \(\Poly_F(D)\). Equivalently, the formula expressing \( y=\xi_j^C(\bar Z,\bar X)\) has format \(O_F(1)\) and degree \(\Poly_F(D)\). In the formulas below, we use \(\xi_j^C\) by a slight abuse of notation, with this graph representation understood.

The cells over $C$ are the graphs $\{Y = \xi_j^C\}$ ($1 \le j \le N_C$) and the open strips $\{\xi_j^C < Y < \xi_{j+1}^C\}$ ($0 \le j \le N_C$). Since the decomposition is compatible with $\theta$, for any $(\bar z, \bar x) \in C$, the truth value of $\theta(\bar x, Y, \bar z)$ is invariant on each open strip, and is also constant on each graph $Y=\xi_j^C(\bar z,\bar x)$.

In analogy with Johnson--Laskowski, define the following formulas (in free variables $\bar X, \bar W, \bar Z$, where $\bar W$ plays the role of a parameter from $A$):
\begin{align*}
  \psi^C_{j, 1}(\bar X, \bar W, \bar Z) &:= (\bar Z, \bar W) \in C \,\land\, \theta(\bar X, \xi_j^C(\bar Z, \bar W), \bar Z), \\
  \psi^C_{j, 2}(\bar X, \bar W, \bar Z) &:= (\bar Z, \bar W) \in C \,\land\, \exists \varepsilon > 0 \;\forall U \, \big( \xi_j^C(\bar Z, \bar W) - \varepsilon < U < \xi_j^C(\bar Z, \bar W) \rightarrow \theta(\bar X, U, \bar Z) \big), \\
  \psi^C_{j, 3}(\bar X, \bar W, \bar Z) &:= (\bar Z, \bar W) \in C \,\land\, \exists \varepsilon > 0 \;\forall U \, \big( \xi_j^C(\bar Z, \bar W) < U < \xi_j^C(\bar Z, \bar W) + \varepsilon \rightarrow \theta(\bar X, U, \bar Z) \big), \\
  \psi^*(\bar X, \bar W, \bar Z) &:= \forall U \, \theta(\bar X, U, \bar Z).
\end{align*}
Let $\cF_\theta$ be the resulting finite collection of formulas. Each formula in $\cF_\theta$ has format $O_F(1)$ and degree $\Poly_F(D)$, and $|\cF_\theta| \le \Poly_F(D)$.

\smallskip\noindent\textit{Claim.}
For every finite $A \subseteq \R^l$, every $c \in \R$, and every $\bar e \in \R^{\lg(\bar Z)}$, there exist $\bar a \in A$ and $\psi \in \cF_\theta$ such that
\[
  \{\bar x\in A:\theta(\bar x, c, \bar e)\}
  =
  \{\bar x\in A:\psi(\bar x, \bar a, \bar e)\}.
\]

\smallskip

For each $\bar a \in A$, the pair $(\bar e, \bar a)$ lies in a unique cell $C(\bar e, \bar a) \in \cS$. Let
\[
  \Delta = \bigcup_{\bar a \in A} \{\xi_j^{C(\bar e, \bar a)}(\bar e, \bar a) : 1 \le j \le N_{C(\bar e, \bar a)}\},
\]
a finite subset of $\R$. There are four cases:

\textit{Case 1: $c \in \Delta$.} Then $c = \xi_j^C(\bar e, \bar a)$ for some $\bar a \in A, C \in \cS$ and some $j$, and
\[
  \{\bar x\in A:\theta(\bar x, c, \bar e)\}
  =
  \{\bar x\in A:\psi^C_{j, 1}(\bar x, \bar a, \bar e)\}.
\]

\textit{Case 2: $c \notin \Delta$, but $c < d$ for some $d \in \Delta$.}
Let $d^* \in \Delta$ be the least element greater than $c$, say $d^* = \xi_j^C(\bar e, \bar a)$ for some $\bar a \in A, C \in \cS$ and $j$.

Since $d^*$ is the least boundary value of $\Delta$ above $c$, there exists $\varepsilon > 0$ such that
\[
(d^* - \varepsilon, d^*) \subseteq (c, d^*)
\]
and this interval contains no element of $\Delta$.

Therefore, for every $\bar x \in A$, the truth of $\theta(\bar x, y, \bar e)$ is invariant for $y \in (d^* - \varepsilon, d^*)$, and agrees with its value at $c$.
Hence
\[
  \{\bar x\in A:\theta(\bar x, c, \bar e)\}
  =
  \{\bar x\in A:\psi^C_{j, 2}(\bar x, \bar a, \bar e)\}.
\]

\textit{Case 3: $\Delta \neq \emptyset$ but $c > d$ for every $d \in \Delta$.}
Let $d^* = \max \Delta$, say $d^* = \xi_j^C(\bar e, \bar a)$ for some $\bar a \in A, C \in \cS$.

Since there are no elements of $\Delta$ greater than $d^*$, there exists $\varepsilon > 0$ such that
\[
(d^*, d^* + \varepsilon)
\]
contains no element of $\Delta$.

Therefore, for every $\bar x \in A$, the truth of $\theta(\bar x, y, \bar e)$ is invariant for $y \in (d^*, d^* + \varepsilon)$, and agrees with its value at $c$.
Hence
\[
  \{\bar x\in A:\theta(\bar x, c, \bar e)\}
  =
  \{\bar x\in A:\psi^C_{j, 3}(\bar x, \bar a, \bar e)\}.
\]

\textit{Case 4: $\Delta = \emptyset$.} The truth of $\theta(\bar x, y, \bar e)$ is invariant on all of $\R$ for every $\bar x \in A$, and
\[
  \{\bar x\in A:\theta(\bar x, c, \bar e)\}
  =
  \{\bar x\in A:\psi^*(\bar x, \bar a, \bar e)\}
\]
for any choice of $\bar a \in A$.

\paragraph{Step 2: iterating.}
Let $\Phi_\cH(\bar X, \bar Y)$ with $\bar X \in \R^l$ and $\bar Y \in \R^k$ be the defining formula of $\cH$. Applying Step 1 to $\Phi_\cH$ shaves off the last coordinate $Y_k$, producing $\Poly_F(D)$ formulas in variables $\bar X, \bar W_k, Y_1, \ldots, Y_{k-1}$, each of format $O_F(1)$ and degree $\Poly_F(D)$. Iterating this construction $k$ times, peeling off one coordinate of $\bar Y$ at each step, we obtain a finite family $\cF_{\Phi_\cH}$ of formulas $\psi(\bar X, \bar W_1, \ldots, \bar W_k)$ such that for every finite $A \subseteq \R^l$ and every $\bar c \in \R^k$, there exist $(\bar a_1, \ldots, \bar a_k) \in A^k$ and $\psi \in \cF_{\Phi_\cH}$ with
\[
  \{\bar x\in A:\Phi_\cH(\bar x, \bar c)\}
  =
  \{\bar x\in A:\psi(\bar x, \bar a_1, \ldots, \bar a_k)\}.
\]
At each step the format remains $O_F(1)$ and the number of formulas is multiplied by $\Poly_F(D)$, so $|\cF_{\Phi_\cH}| \le \Poly_F(D)^k = \Poly_F(D)$ (absorbing $k \le F$ into the polynomial).

\paragraph{Step 3: counting traces.}
Every trace of $\cH$ on $A$ is represented by some tuple
\[
(\psi,\bar a_1,\ldots,\bar a_k)\in \cF_{\Phi_\cH}\times A^k.
\]
Therefore the number of traces is at most
\[
|\cF_{\Phi_\cH}|\cdot |A|^k \le \Poly_F(D)\cdot |A|^k. \qedhere
\]
\end{proof}

\subsection{\texorpdfstring{The embedding of \(\R_{\exp|_{[-M,M]}}\) into \(\R_{\rPfaff}\)}{The embedding of Rexp -M,M into RrPfaff}}
\label{app:embedding-exp-rPfaff}

The choice of the unit cube is only a normalization. If a restricted Pfaffian function is given on another compact box \(B\subseteq \R^n\), then after an affine change of variables \(B\simeq I^n\), it can be viewed as a restricted Pfaffian function on \(I^n\). Thus allowing compact boxes instead of \(I^n\) does not change the resulting structure.

For example, the restricted exponential structure
\(\R_{\exp|_{[-M,M]}}\) is embedded in \(\R_{\rPfaff}\).
Indeed, under the affine change of variables
\[
  x=2Mt-M,\qquad t\in[0,1],
\]
we replace \(\exp(x)\) by the normalized function
\[
  f_M(t):=\frac{\exp(2Mt-M)}{\exp M},
\]
which takes values in \([0,1]\).
Then \(f_M\colon [0,1]\to[0,1]\) extends to a Pfaffian function on a
neighborhood of \([0,1]\), since it satisfies the differential equation
\[
  f_M'(t)=2M\,f_M(t).
\]

Thus \(f_M\) is a Pfaffian function of order \(1\), defined by a polynomial
relation of degree \(1\), and in particular contributes \(O(1)\) to the
$*$-format and $*$-degree (uniformly in \(M\)).

Moreover, the original restricted exponential can be recovered via
\[
  \exp(x)=\exp(M)\, f_M(t),
  \qquad x=2Mt-M.
\]
Therefore every formula in \(\R_{\exp|_{[-M,M]}}\) can be interpreted as a
formula in \(\R_{\rPfaff}\), after applying the affine rescaling to the input
variables and the corresponding rescaling to the function values.

\paragraph{Syntactic complexity of formulas.}
For our purposes, it will be more convenient to work with a notion of
complexity defined directly at the level of formulas, rather than via
Pfaffian chains. We therefore introduce a syntactic definition of
format and degree for existential formulas in
\(\R_{\exp|_{[-M,M]}}\), which will serve as the primary measure of

complexity in what follows.

\begin{definition}[Format and degree in $\R_{\exp|_{[-M,M]}}$]
\label{def:format_degree_Rexp_restricted}
Let $\Phi(\bar Y)$ be a formula in $\R_{\exp|_{[-M,M]}}$.
We write $\Phi$ in \emph{existential normal form}
\[
  \Phi(\bar Y) \equiv \exists \bar W\,\theta(\bar Y,\bar W),
\]
where $\theta$ is a quantifier-free formula given as a Boolean combination
of atomic predicates of the following types:
\[
  P(\bar Y,\bar W)\ \sigma\ 0,
  \qquad \sigma\in\{=,>,<,\ge,\le\},
\]
and
\[
  U=\exp|_{[-M,M]}(V),
\]
where $P$ is a polynomial.

Let $n=|\bar Y|$, let $f=|\bar W|$, let $P_1,\dots,P_s$ be the
polynomials appearing in $\theta$, and let $r$ be the number of
constraints of the form $U=\exp|_{[-M,M]}(V)$.
The \emph{format} and \emph{degree} are defined by
\[
  F := n + f + r,
  \qquad
  D := \sum_{i=1}^s \deg(P_i) + r.
\]
The pair $(F,D)$ is called the \emph{complexity} of $\Phi$.
\end{definition}

\paragraph{Comparison of complexities.}
We compare the syntactic complexity $(F,D)$ of formulas in
$\R_{\exp|_{[-M,M]}}$ (Definition~\ref{def:format_degree_Rexp_restricted})
with the $*$-format and $*$-degree used in the restricted Pfaffian
framework. These notions are not identical: the former is a syntactic
measure for existential formulas, while the latter records the structure
of Pfaffian chains and the degrees of the associated polynomial relations.
We only need a one-sided comparison.

Let $\Phi$ be a formula in $\R_{\exp|_{[-M,M]}}$ of complexity $(F,D)$.
After the normalization described above, each restricted exponential
constraint
\[
  U=\exp|_{[-M,M]}(V)
\]
is replaced by a restricted Pfaffian function. If several such constraints
appear, their Pfaffian chains can be merged into a single chain, increasing
the Pfaffian format by at most $r\le F$. The polynomial atomic predicates
remain unchanged, with total degree bounded by $D$.

Therefore, after interpreting $\Phi$ as a formula in $\R_{\rPfaff}$, the
formula has $*$-format $O_F(1)$ and $*$-degree $\Poly_F(D)$, uniformly in
$M$. By \Cref{thm:formula-to-star}, the set defined by $\Phi$ has
$*$-format $O_F(1)$ and $*$-degree $\Poly_F(D)$.

Equivalently, every set definable by an existential
$\R_{\exp|_{[-M,M]}}$-formula of complexity $(F,D)$ admits a restricted
sub-Pfaffian presentation of $*$-format $O_F(1)$ and
$*$-degree $\Poly_F(D)$.

\subsection{Proof of Main Theorem~\ref{main:bounds_R_exp}}
\label{app:proof-main-three}

Recall the notion of format and degree for existential formulas
(Definition~\ref{def:format_degree_Rexp_restricted}).  

In the case of the full exponential structure \(\R_{\exp}\), the
definition is identical, except that the exponential constraints are of
the form
\[
  U=\exp(V)
\]
instead of \(U=\exp|_{[-M,M]}(V)\).

Once again for a tuple \(X\), we write \(\lg(X)\) for its length (number of coordinates).

\begin{theorem}
\label{thm:shatter_Rexp}
Let \(\cH=\{h_a:a\in \R^k\}\) be a hypothesis class on \(\R^l\), definable in
\(k\) parameters by an existential formula \(\Phi(X,A)\) in \(\R_{\exp}\) of
format \(F\) and degree \(D\), where \(\lg(X)=l\) and \(\lg(A)=k\). Then
\[
  \Pi_{\cH}(m) \le \Poly_F(D)\cdot m^k .
\]
\end{theorem}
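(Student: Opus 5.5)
Fix a sample $A=\{x_1,\dots,x_m\}\subseteq\R^l$. The plan is to reduce to Theorem~\ref{thm:sharp-shatter} by replacing the unrestricted exponential with a restricted one, uniformly in $M$. Write $\Phi(X,A)\equiv\exists \bar W\,\theta(X,A,\bar W)$ in existential normal form, with $r$ constraints $U_i=\exp(V_i)$. Only finitely many traces exist, at most $2^m$. For each trace pick a witnessing parameter $a$, and for each $x_j$ with $\Phi(x_j,a)$ true pick a witness $\bar w_j$. The set of all real numbers occurring as some $V_i$ in these finitely many witnesses is finite, so it lies in $[-M,M]$ for some $M=M(A)$.

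Next, let $\Phi_M$ be the formula obtained by replacing each $U_i=\exp(V_i)$ by $U_i=\exp|_{[-M,M]}(V_i)\wedge -M\le V_i\le M$. Then $\Phi_M(x,a)\Rightarrow\Phi(x,a)$ for every $x,a$, since a witness for $\Phi_M$ is also a witness for $\Phi$. Conversely, $\Phi_M(x_j,a)$ holds for the chosen representative parameters by the choice of $M$. Hence every trace of $\cH$ on $A$ is realized by the class $\cH_M=\{x\mapsto\Phi_M(x,a)\}$ on $A$. This gives $|\cH|_A|\le|\cH_M|_A|$. The extra constraints add at most $2r\le 2F$ linear atoms, so the degree grows by $O(F)$ and the format is unchanged.

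By \S\ref{app:embedding-exp-rPfaff}, $\Phi_M$ is interpreted in $\R_{\rPfaff}$ with $*$-format $O_F(1)$ and $*$-degree $\Poly_F(D)$, and these bounds are \emph{uniform in $M$}. Theorem~\ref{thm:sharp-shatter} (in the $*$-form used in its proof) then gives $|\cH_M|_A|\le\Poly_F(D)\,m^k$, with constants independent of $M$. Since $A$ was arbitrary, the growth bound follows.

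The main obstacle is the uniformity in $M$. The normalization $f_M(t)=e^{2Mt-M}/e^{M}$ introduces the constants $2M$ and $e^{M}$ into the Pfaffian chain and the rescaling. I will check that $*$-format and $*$-degree count only the degrees of the polynomials, not their coefficients, so that real constants are free. The affine substitution $x=2Mt-M$ and the multiplication by $e^M$ keep all degrees at most $1$ per exponential, which gives bounds independent of $M$. A secondary point is that Theorem~\ref{thm:sharp-shatter} is stated with ordinary format and degree; I will apply its proof directly with the $*$-complexities delivered by the embedding paragraph, or route through Remark~\ref{rem:usual-to-star}.
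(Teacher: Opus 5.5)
Your overall strategy is the paper's. You truncate $\exp$ to $\exp|_{[-M,M]}$, invoke Theorem~\ref{thm:sharp-shatter} with $*$-complexity bounds that are uniform in $M$, and for a fixed sample choose $M$ from one representative parameter per trace together with witnesses for the true instances.

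The gap is where you put the truncation guard. You replace each atom $U_i=\exp(V_i)$ by $U_i=\exp|_{[-M,M]}(V_i)\wedge -M\le V_i\le M$. But $\theta$ is an arbitrary Boolean combination of atoms. When an exponential atom occurs under a negation, its guarded version is strictly weaker in that position. So a witness for $\Phi_M$ need not witness $\Phi$, and the claim $\Phi_M(x,a)\Rightarrow\Phi(x,a)$ is false in general. Your choice of $M$ only controls assignments that witness \emph{true} instances. Hence a false instance $\Phi(x_j,a)$ can become true in $\Phi_M$, and distinct traces can collapse.

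Here is a concrete counterexample. Take $l=k=1$ and $\Phi(x,a)\equiv\exists w\,\exists u\,[\,w=100a\wedge u=e^{100}x\wedge\neg(u=\exp w)\,]$.
\begin{itemize}
\item We have $h_a(1)=1$ iff $a\neq 1$, so both traces occur on the sample $\{1\}$.
\item Your rule forces only $M\ge 0$, since the single witness used has $w=0$.
\item For every $M<100$, the guarded negated atom holds for all $a$. If $|100a|\le M$, then $\exp(100a)\le e^{M}<e^{100}=u$. Otherwise the guard fails.
\end{itemize}
So $\Phi_M(1,a)$ is true for every $a$, and $\bigl|\cH_M|_{\{1\}}\bigr|=1<2=\bigl|\cH|_{\{1\}}\bigr|$. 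This breaks the inequality between trace counts on which your count rests.

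One repair is what the paper does: impose the bounds as conjuncts \emph{outside} $\theta_M$. The paper's $\Psi_M$ bounds every free and bound variable by $M$ in this way; an outer conjunct $\bigwedge_i |V_i|\le M$ would also suffice. Then on every admissible assignment $\exp|_{[-M,M]}$ and $\exp$ agree on all arguments. So $\theta_M$ and $\theta$ have the same truth value there, whatever the polarity of the exponential atoms, and the truncated formula implies $\Phi$ pointwise. With $M$ chosen, as in the paper, to bound the sample points, the representatives, and their witnesses, the two formulas agree on all representative pairs.

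Alternatively, first make every exponential atom positive. Add a fresh existential variable $U_i'$ with the conjunct $U_i'=\exp(V_i)$, and replace each original atom by the polynomial atom $U_i=U_i'$. This costs only $O(F)$ in format and in degree.

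With either fix, the rest of your argument (uniformity in $M$, passage to $*$-format and $*$-degree, and the appeal to Theorem~\ref{thm:sharp-shatter}) coincides with the paper's proof.
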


\begin{proof}
Let \(\cH\) be defined by an existential formula
\[
  \Phi(X,A) \equiv \exists W\,\theta(X,W,A),
\]
where
\[
  X=(X_1,\ldots,X_l),\qquad
  A=(A_1,\ldots,A_k),\qquad
  W=(W_1,\ldots,W_f).
\]
Assume that this formula has format and degree \((F,D)\) in the sense of
Section~\ref{sec:main-results}. We also assume that \(\theta\) is written in
graph form, namely as a Boolean combination of atomic predicates of the following
two types:
\begin{enumerate}
    \item Polynomial equalities and inequalities
    \[
      P(X,W,A)\ \sigma\ 0,
      \qquad
      \sigma\in\{=,>,<,\ge,\le\},
    \]
    where \(P\) is a real polynomial.

    \item Exponential graph constraints
    \[
      U=\exp(V),
    \]
    where \(U,V\) are variables among \(X,W,A\).
\end{enumerate}

For \(M>0\), let \(\Phi_M(X,A)\) be the formula in
\(\R_{\exp|_{[-M,M]}}\) obtained from \(\Phi(X,A)\) by replacing each
occurrence of \(\exp\) by \(\exp|_{[-M,M]}\). Define
\[
  \Psi_M(X,A)
  :=
  \exists W\;
  \Bigl[
  \theta_M(X,W,A)
  \wedge
  \bigwedge_{j=1}^l |X_j|\le M
  \wedge
  \bigwedge_{j=1}^k |A_j|\le M
  \wedge
  \bigwedge_{j=1}^f |W_j|\le M
  \Bigr],
\]
where \(\theta_M\) is obtained from \(\theta\) by replacing each occurrence
of \(\exp\) by \(\exp|_{[-M,M]}\).

Each formula \(\Phi_M(X,A)\) has \(*\)-format \(O_F(1)\) and \(*\)-degree
\(\Poly_F(D)\). Passing from \(\Phi_M\) to \(\Psi_M\) adds only boundedness
constraints and does not introduce new Pfaffian structure. Hence, by the
inductive definition of \(*\)-format and \(*\)-degree, the \(*\)-format
increases by at most a constant, and the \(*\)-degree increases by at most
\(O(l+k+f)\le O(F)\).

Therefore, by \Cref{thm:sharp-shatter},
\[
  \sup_{M\in\N} \Pi_{\Psi_M}(m)
  \le
  \Poly_F(D)\cdot m^k .
\]

We now show that this uniform bound implies the desired bound for \(\Phi\).
Fix arbitrary points
\[
  x_1,\ldots,x_m\in\R^l .
\]
For every sign pattern on these points realized by \(\Phi\), choose one
parameter vector \(a\in\R^k\) realizing it. Let \(A_0\subseteq\R^k\) be the
finite set of all chosen parameter vectors.

Choose \(M\) large enough so that all coordinates of all points \(x_i\) and all
parameters \(a\in A_0\) are bounded in absolute value by \(M\). We now enlarge
\(M\) if necessary as follows. For every pair \((x_i,a)\) with \(a\in A_0\), if
\(\Phi(x_i,a)\) holds, choose a witness \(w\in\R^f\) such that
\[
  \theta(x_i,w,a)
\]
holds, and enlarge \(M\) so that all coordinates of this witness are also
bounded in absolute value by \(M\). Since there are only finitely many such
pairs, this gives a finite value \(M_m\).

We claim that \(\Psi_{M_m}\) realizes on \(x_1,\ldots,x_m\) every sign pattern
realized by \(\Phi\). Indeed, fix \(a\in A_0\) and \(x_i\).

If \(\Phi(x_i,a)\) holds, then by construction there is a witness
\(w\in[-M_m,M_m]^f\) such that \(\theta(x_i,w,a)\) holds. Since all variables
appearing in the exponential graph constraints are bounded by \(M_m\), the
restricted exponential agrees with the ordinary exponential on this witness.
Thus \(\Psi_{M_m}(x_i,a)\) also holds.

Conversely, if \(\Psi_{M_m}(x_i,a)\) holds, then there is a witness
\(w\in[-M_m,M_m]^f\) such that the restricted formula holds. Again, since all
variables appearing in the exponential graph constraints are bounded by
\(M_m\), the restricted exponential agrees with the ordinary exponential.
Hence \(\Phi(x_i,a)\) holds.

Therefore \(\Phi\) and \(\Psi_{M_m}\) realize the same labels on \(x_1,\ldots,x_m\) for every \(a\in A_0\). In particular,
\[
  \Pi_{\Phi}(m)
  \le
  \Pi_{\Psi_{M_m}}(m).
\]
Using the uniform bound on \(\Pi_{\Psi_M}(m)\), we obtain
\[
  \Pi_{\Phi}(m)
  \le
  \Poly_F(D)\cdot m^k .
\]
This proves the theorem.
\end{proof}

We are now ready to finish the proof of Main Theorem~\ref{main:bounds_R_exp}.

\begin{restatedtheorem}{main:bounds_R_exp}
Let~$\mathcal H$ be a hypothesis class over $\mathbb R^l$ definable with $k$ parameters in the language of $\mathbb R_{\exp}$, and let $\mathcal N$ be a neighborhood system definable in the language of $\mathbb R_{\exp}$. Suppose, moreover, that both $\mathcal H$ and $\mathcal N$ are defined using existential formulas of format $F$ and degree $D$. Then there is a universal
function \(\gamma\colon \N\to\N\), computable explicitly, such that
\[
  \Pi_{\cH^\cN}(m) \le D^{\gamma(F)} m^k .
\]

Consequently,
\[
  m_{\mathrm{ERM}}^{\mathrm{real}}(\varepsilon,\delta)=  O\Bigl(\frac{k\log(1/\varepsilon) + \gamma(F) \log(D) + \log(1/\delta)}{\varepsilon}\Bigr)
\]
and
\[
\VCdim(\mathcal H^\mathcal N)  = O_F(\log D).
\]
\end{restatedtheorem}

\begin{proof}
Let \(\Phi_{\cH}(Y,A)\) be an existential formula defining the hypothesis class
\(\cH=\{h_a:a\in\R^k\}\), so that
\[
  h_a(Y)=1 \quad\Longleftrightarrow\quad \Phi_{\cH}(Y,a),
\]
and let \(\Phi_\cN(X,Y)\) be an existential formula defining the neighborhood relation \(\cN\). Then the transformed class \(\cH^\cN\) is defined by
\[
  \Phi_{\cH^\cN}(X,A)
  :=
  \exists Y\,
  \bigl(
    \Phi_\cN(X,Y)\wedge \Phi_{\cH}(Y,A)
  \bigr).
\]
Indeed, \(h_a^{\cN}(X)=1\) holds precisely when there is some \(Y\) reachable from \(X\) through \(\cN\) such that \(h_a(Y) = 1\).

Since both \(\Phi_{\cH}\) and \(\Phi_\cN\) have complexity \((F,D)\), the formula
\(\Phi_{\cH^\cN}\) is again existential, with format at most \(2F\) and degree at
most \(2D\). Here we only add one extra existential block and conjoin the two
quantifier-free parts; the total number of variables and the total polynomial
degree are bounded by the sum of the two original bounds.

Applying Theorem~\ref{thm:shatter_Rexp} to the hypothesis class \(\cH^\cN\), whose parameter space still has dimension \(k\), gives
\[
  \Pi_{\cH^\cN}(m)
  \le
  \Poly_F(D)\,m^k .
\]
Since the polynomial \(\Poly_F(D)\) has degree and coefficients depending only
on \(F\), after increasing \(\gamma(F)\) we may write
\[
  \Poly_F(D)\le D^{\gamma(F)}
\]
with the convention that \(D\ge 2\). Thus
\[
  \Pi_{\cH^\cN}(m)\le D^{\gamma(F)}m^k .
\]

The sample-complexity bound now follows directly from
Lemma~\ref{lem:erm-from-growth}:
\[
m_{\mathrm{ERM}}^{\mathrm{real}}(\varepsilon,\delta)
=
O\Bigl(
  \frac{
    k\log(1/\varepsilon)
    + \gamma(F)\log D
    + \log(1/\delta)
  }{\varepsilon}
\Bigr).
\]

Finally, the VC-dimension bound follows from
Lemma~\ref{lem:vc-from-growth}. Indeed, we have shown that
\[
  \Pi_{\cH^\cN}(m)\le D^{\gamma(F)}m^k
\]
for every \(m\ge 1\). Applying Lemma~\ref{lem:vc-from-growth} with
\[
  C=D^{\gamma(F)}
\]
gives
\[
  \VCdim(\cH^\cN)
  \le
  2\gamma(F)\log D + 4k\log(4k).
\]
Since \(k\le F\), the second term is bounded by a constant depending only on
\(F\). Hence
\[
  \VCdim(\cH^\cN)=O_F(\log D). \qedhere
\]
\end{proof}

\section{Beyond \texorpdfstring{$\R_{\exp}$}{Rexp}}
\label{app:beyond-Rexp}

The proof of Main Theorem~\ref{main:bounds_R_exp}, given in
Appendix~\ref{app:sec:proof-main:bounds_R_exp}, does not rely on any
specific properties of the exponential function. The only
property used is that, after restricting all variables to a bounded
domain and applying an affine rescaling, the functions appearing in the
defining formulas become \emph{restricted Pfaffian functions} with
controlled complexity.
In particular, it suffices that every function in the language is
\emph{Pfaffian on the whole real line}. 

Let
\[
  \R_g := (\R,+,\cdot,0,1,<,g),
\]
where \(g\colon \R\to\R\) is a Pfaffian function on the whole real line.
Assume that \(g\) has Pfaffian format \(F_\circ\) and Pfaffian degree
\(D_\circ\), in the sense of Definition~\ref{def:pfaff}.

We assign format and degree to existential formulas in the language of
\(\R_g\) as follows. This definition is very similar to the corresponding definition for
\(\R_{\exp}\), given in Definition~\ref{def:format_degree_Rexp_restricted}; the only difference is that
we must also account for the format and degree of the function \(g\) itself.
Let
\[
  \Phi(X) \equiv \exists Y\,\theta(X,Y)
\]
be an existential formula, where \(\theta\) is quantifier-free and is written as
a Boolean combination of atomic predicates of the forms
\(P(X,Y)\,\sigma\,0\), with \(\sigma\in\{=,>,<,\ge,\le\}\), and \(z=g(w)\).
Here \(P\) is a polynomial in the variables \(X,Y\), and in the atomic formulas of the form \(z=g(w)\), both \(z\) and \(w\) are variables drawn from \(X \cup Y\).

The \emph{format} of \(\Phi\) is defined as
\[
  |X|+|Y|+rF_\circ,
\]
where \(r\) is the number of atomic predicates of the form \(z=g(w)\).
The \emph{degree} of \(\Phi\) is defined as
\[
  \sum_i \deg(P_i)+rD_\circ,
\]
where the sum is over all polynomials \(P_i\) appearing in polynomial atomic predicates of \(\theta\).

\begin{theorem}
\label{thm:beyond-Rexp}
Let \(\mathcal H=\{h_a:a\in\R^k\}\) be a hypothesis class on \(\R^\ell\), and let
\(\mathcal N\) be a neighborhood system on \(\R^\ell\). Suppose that both
\(\mathcal H\) and \(\mathcal N\) are definable in \(\R_g\) by existential
formulas of format at most \(F\) and degree at most \(D\). Then
\[
  m_{\mathrm{ERM}}^{\mathrm{real}}(\varepsilon,\delta)
  =
  O\Bigl(
    \frac{
      k\log(1/\varepsilon)
      + \gamma(F)\log D
      + \log(1/\delta)
    }{\varepsilon}
  \Bigr),
\]
and
\[
  \VCdim(\mathcal H^{\mathcal N})=O_F(\log D).
\]
Here \(\gamma\colon\N\to\N\) is the same explicitly computable function as in
Main Theorem~\ref{main:bounds_R_exp}.
\end{theorem}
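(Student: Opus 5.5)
I will follow the proof of Main Theorem~\ref{main:bounds_R_exp} step by step, replacing $\exp$ by $g$ wherever it occurs. The first step is to form the defining formula of the strategic class,
\[
\Phi_{\mathcal H^{\mathcal N}}(X,A):=\exists Y\,\bigl(\Phi_{\mathcal N}(X,Y)\wedge\Phi_{\mathcal H}(Y,A)\bigr).
\]
After merging the existential blocks, this is again an existential $\R_g$-formula. Its format is at most $2F$ and its degree at most $2D$: the variable counts add, the number $r$ of graph constraints $z=g(w)$ adds, and so do the polynomial degrees. This is exactly the bookkeeping in the definition of format and degree for $\R_g$. The parameter block still has length $k$.

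The second step, which I expect to be the main obstacle, is an analogue of Theorem~\ref{thm:shatter_Rexp} for $\R_g$, namely $\Pi_{\mathcal H^{\mathcal N}}(m)\le \Poly_F(D)\,m^k$. Fix $M>0$ and let $\Psi_M$ be obtained by replacing each $g$ with $g|_{[-M,M]}$ and adding box constraints $|X_j|,|A_j|,|W_j|\le M$. Under the affine change $w=2Mt-M$, the function $t\mapsto g(2Mt-M)$ on a neighborhood of $[0,1]$ is Pfaffian. Its chain is $f_i(2Mt-M)$, where $f_1,\dots,f_{F_\circ}$ is the chain of $g$, and by the chain rule the derivatives are $2M\,P_i(2Mt-M,f_1,\dots,f_i)$. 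So the Pfaffian format stays $F_\circ$ and the degree stays bounded by $D_\circ$ up to an additive constant, uniformly in $M$; only the coefficients depend on $M$, and format and degree ignore coefficients. The one subtlety is that the values of $g$ need not lie in $[0,1]$. This does not matter, because a restricted Pfaffian function only has to be defined on a neighborhood of the cube, and its output is an unrestricted real. If a normalization is wanted anyway, I will note that values on a compact set are bounded and divide by a constant, which affects neither format nor degree.

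Merging the $r$ rescaled chains gives a single chain of length at most $rF_\circ$ with degree at most $rD_\circ$. Together with the polynomial atoms, $\Psi_M$ therefore has $*$-format $O_F(1)$ and $*$-degree $\Poly_F(D)$ by Remark~\ref{rem:usual-to-star} and Theorem~\ref{thm:formula-to-star}. Theorem~\ref{thm:sharp-shatter} then gives $\sup_M\Pi_{\Psi_M}(m)\le\Poly_F(D)\,m^k$. To pass from $\Psi_M$ back to the unrestricted formula, I will reuse the witness-bounding argument verbatim. Fix $x_1,\dots,x_m$, choose one parameter per realized labeling, choose a witness for each positive pair $(x_i,a)$, and take $M$ larger than every coordinate involved. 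On these bounded witnesses $g|_{[-M,M]}$ agrees with $g$, so $\Psi_M$ and the original formula induce the same labelings, and hence $\Pi_{\mathcal H^{\mathcal N}}(m)\le \Poly_F(D)\,m^k$.

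Finally, I will write $\Poly_F(D)\le D^{\gamma(F)}$ for $D\ge2$. The function $\gamma$ is the same one as in Main Theorem~\ref{main:bounds_R_exp}, since $F_\circ$ and $D_\circ$ are already counted inside $F$ and $D$ and the cell-decomposition bounds depend only on the $*$-format and $*$-degree. The ERM bound then follows from Lemma~\ref{lem:erm-from-growth} with $C=D^{\gamma(F)}$. For the VC bound, Lemma~\ref{lem:vc-from-growth} gives $\VCdim\le 2\gamma(F)\log D+4k\log(4k)$, and since $k\le F$ this is $O_F(\log D)$.
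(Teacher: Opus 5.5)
Your proposal is correct and follows essentially the same route as the paper. That route is: affinely rescale the Pfaffian chain of $g$ so that $g|_{[-M,M]}$ becomes a restricted Pfaffian function with format and degree independent of $M$; bound the growth function of the box-restricted formulas via Theorem~\ref{thm:sharp-shatter}; pass to the unrestricted formula by choosing $M$ large enough for the finitely many points, parameters and witnesses; and conclude with Lemmas~\ref{lem:erm-from-growth} and~\ref{lem:vc-from-growth} (your remark that the values of $g$ need not be normalized into $[0,1]$ is a small point the paper leaves implicit).
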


\begin{proof}
The proof is the same as the proof of Main Theorem~\ref{main:bounds_R_exp}.
The only additional point is the comparison between formulas in \(\R_g\) and
restricted Pfaffian formulas.

Fix \(M>0\). On the interval \([-M,M]\), write
\[
  x=2Mt-M,\qquad t\in[0,1].
\]
The restriction of \(g\) to \([-M,M]\) is represented on the unit interval by
the function
\[
  \widetilde g_M(t):=g(2Mt-M).
\]
We claim that \(\widetilde g_M\) has Pfaffian format and degree bounded in
terms of \(F_\circ\) and \(D_\circ\), uniformly in \(M\).

Indeed, suppose that \(g\) is defined with respect to a Pfaffian chain
\[
  f_1,\ldots,f_r
\]
on \(\R\), with
\[
  \frac{d f_i}{dx}(x)
  =
  P_i\bigl(x,f_1(x),\ldots,f_i(x)\bigr).
\]
Define
\[
  \widetilde f_i(t):=f_i(2Mt-M).
\]
Then
\[
  \frac{d\widetilde f_i}{dt}(t)
  =
  2M\,
  P_i\bigl(2Mt-M,\widetilde f_1(t),\ldots,\widetilde f_i(t)\bigr).
\]
Thus \(\widetilde f_1,\ldots,\widetilde f_r\) is again a Pfaffian chain.
The affine change of variables changes only the coefficients of the defining
polynomials, not their degrees, and therefore does not change the relevant
Pfaffian format or degree bounds. The same applies to \(\widetilde g_M\).

Consequently, every formula in \(\R_g\), restricted to a box \([-M,M]^n\), can
be interpreted as a formula in \(\R_{\rPfaff}\) with \(*\)-format
\(O_F(1)\) and \(*\)-degree \(\Poly_F(D)\), uniformly in \(M\). The argument of Appendix~\ref{app:proof-main-three} then applies
with only minor modifications: use the
growth-function bound for the restricted formulas via Theorem~\ref{thm:sharp-shatter}, and then pass to the
unrestricted formula by choosing \(M\) large enough for any fixed finite sample
and the finitely many witnesses needed on that sample.

Thus
\[
  \Pi_{\mathcal H^\cN}(m)\le D^{\gamma(F)}m^k.
\]
The stated ERM sample-complexity and VC-dimension bounds follow from
Lemma~\ref{lem:erm-from-growth} and Lemma~\ref{lem:vc-from-growth}, exactly as
in the proof of Main Theorem~\ref{main:bounds_R_exp}.
\end{proof}

\begin{remark}
The same argument applies if the language is expanded by finitely many globally
defined Pfaffian functions, possibly of several variables. One only has to add
the corresponding Pfaffian formats and degrees to the definitions of format and
degree above. After restricting to a compact box and applying an affine
rescaling to the unit cube, the resulting functions are again restricted
Pfaffian functions with the same degree bounds, up to changes in the
coefficients. We omit this straightforward generalization.
\end{remark}

\section{Technical lemmas}
\label{app:technical-lemmas}

Throughout this appendix, logarithms are base \(2\).

\begin{restatedlemma}{lem:erm-from-growth}
Let $\mathcal H\subseteq\{0,1\}^{\mathcal X}$, and suppose that
\(\Pi_{\mathcal H}(m)\le C m^k\) for every \(m\ge 1\), where
\(C\ge 1\) and \(k\ge 1\). Then
\[
m_{\mathrm{ERM}}^{\mathrm{real}}(\varepsilon,\delta)
=
O\left(
\frac{
k\log(k/\varepsilon)+\log C+\log(1/\delta)
}{\varepsilon}
\right).
\]
\end{restatedlemma}

\begin{proof}
We use the standard realizable ERM bound
\[
  \Pr\bigl[L_D(h_S)>\varepsilon\bigr]
  \le
  \Pi_{\mathcal H}(2m)e^{-\varepsilon m/2}.
\]
By the growth assumption, \(\Pi_{\mathcal H}(2m)\le C(2m)^k\).
Thus it is enough to choose \(m\) so that
\[
  C(2m)^k e^{-\varepsilon m/2}\le \delta,
  \qquad\text{equivalently}\qquad
  \frac{\varepsilon m}{2}
  \ge
  \log C+k\log(2m)+\log(1/\delta).
\]

Set
\[
  x:=\frac{\varepsilon m}{2},
  \qquad
  L:=\log C+\log(1/\delta).
\]
Since \(2m=4x/\varepsilon\), the sufficient condition becomes
\[
  x
  \ge
  L+k\log\left(\frac{4x}{\varepsilon}\right)
  =
  L+k\log(4/\varepsilon)+k\log x.
\]
Equivalently, it is enough to solve an inequality of the form
\(x\ge a+k\log x\), where
\[
  a:=L+k\log(4/\varepsilon).
\]
By Lemma~\ref{lem:log-self-bound}, it is enough to take
\[
  x
  =
  O\bigl(a+k\log k\bigr)
  =
  O\bigl(
  \log C+\log(1/\delta)+k\log(1/\varepsilon)+k\log k
  \bigr).
\]
Since \(x=\varepsilon m/2\), this gives
\[
m
=
O\left(
\frac{
k\log(k/\varepsilon)+\log C+\log(1/\delta)
}{\varepsilon}
\right).
\]
\end{proof}

\begin{lemma}[VC dimension from polynomial growth]
\label{lem:vc-from-growth}
Let $\mathcal H\subseteq\{0,1\}^{\mathcal X}$, and suppose that
\(\Pi_{\mathcal H}(m)\le C m^k\) for every \(m\ge 1\), where
\(C\ge 1\) and \(k\ge 1\). Then
\[
  \VCdim(\mathcal H)
  \le
  2\log C + 4k\log(4k).
\]
\end{lemma}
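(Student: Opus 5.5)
The plan is the standard shattering argument. If $\mathcal H$ shatters a set of size $d$, then $\Pi_{\mathcal H}(d)=2^d$, and the hypothesis then gives
\[
2^d\le C d^k .
\]
Since $\mathcal H$ is nonempty, $d\ge 1$ whenever the VC dimension is positive. Taking base-$2$ logarithms, every shattered size $d$ satisfies
\[
d\le \log C+k\log d .
\]
If the VC dimension were infinite, this inequality would have to hold for all $d$, which is impossible. So it suffices to show that every $d$ satisfying it obeys $d\le 2\log C+4k\log(4k)$.

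The core step is a self-bounding inequality of the form $x\le a+k\log x$. This is the same shape already handled by Lemma~\ref{lem:log-self-bound} in the proof of Lemma~\ref{lem:erm-from-growth}. I would either invoke that lemma or redo it directly, by cases:

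\emph{Case $k\log d\le d/2$.} Then $d\le \log C+d/2$, hence $d\le 2\log C$.

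\emph{Case $k\log d> d/2$.} Then $d<2k\log d$, and I claim this forces $d< 4k\log(4k)$. The function $x\mapsto x-2k\log x$ is increasing for $x\ge 2k/\ln 2$. At $x=4k\log(4k)$ we have
\[
2k\log\bigl(4k\log(4k)\bigr)=2k\log(4k)+2k\log\log(4k)\le 4k\log(4k),
\]
because $\log\log(4k)\le\log(4k)$. So for $d\ge 4k\log(4k)$, which is at least $8$ and at least $2k/\ln 2$ since $k\ge1$, we get $d\ge 2k\log d$, contradicting the case assumption. Hence $d<4k\log(4k)$.

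Combining the two cases gives $d\le \max\{2\log C,\,4k\log(4k)\}\le 2\log C+4k\log(4k)$, as claimed.

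The only delicate point is the second case: I must check that the monotonicity threshold $2k/\ln 2\approx 2.885k$ lies below $4k\log(4k)\ge 8k$. It does, so no small-$k$ corner case arises.
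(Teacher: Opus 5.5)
Your proof is correct and matches the paper's: both use $2^d\le \Pi_{\mathcal H}(d)\le Cd^k$ to get $d\le \log C+k\log d$, and then apply the self-bounding inequality of Lemma~\ref{lem:log-self-bound} with $a=\log C$, $b=k$. Your inline case analysis uses monotonicity of $x\mapsto x-2k\log x$ where the paper uses monotonicity of $\log t/t$, and it yields the same bound. Your explicit handling of the infinite-VC case is a small piece of care that the paper omits.
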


\begin{proof}
Let \(d=\VCdim(\mathcal H)\). If \(d=0\), there is nothing to prove.
Otherwise, by the definition of VC dimension and by the growth assumption,
\[
  2^d\le \Pi_{\mathcal H}(d)\le C d^k.
\]
Taking logarithms gives
\[
  d\le \log C+k\log d.
\]
Set \(a:=\log C\) and \(b:=k\). Then \(d\le a+b\log d\), so
Lemma~\ref{lem:log-self-bound} gives
\[
  d\le 2a+4b\log(4b)
  =
  2\log C+4k\log(4k).
\]
Thus \(\VCdim(\mathcal H)\le 2\log C+4k\log(4k)\).
\end{proof}

\begin{lemma}[Logarithmic self-bound]
\label{lem:log-self-bound}
Let \(x\ge 1\), \(a\ge 0\), and \(b\ge 1\). If
\[
  x\le a+b\log x,
\]
then
\[
  x\le 2a+4b\log(4b).
\]
\end{lemma}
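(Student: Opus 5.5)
We need to prove: if $x\ge 1$, $a\ge 0$, $b\ge 1$, and $x\le a+b\log x$ (logarithms base $2$), then $x\le 2a+4b\log(4b)$.

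The plan is a case split on whether $a$ or $b\log x$ dominates the right-hand side. Since $x\le a+b\log x$, at least one of $x\le 2a$ or $x\le 2b\log x$ holds. In the first case we are done at once, because $2a\le 2a+4b\log(4b)$; here we use $b\ge 1$, so that $\log(4b)\ge 2>0$. It remains to show that $x\le 2b\log x$ forces $x\le 4b\log(4b)$.

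For the second case we use the monotonicity of $g(x)=x-2b\log x$. Its derivative is $1-2b/(x\ln 2)$, which is positive once $x>2b/\ln 2\approx 2.885\,b$. Set $x_0:=4b\log(4b)$. Because $b\ge1$ we have $\log(4b)\ge 2$, hence $x_0\ge 8b$, which lies in the region where $g$ is increasing. So it suffices to check $g(x_0)>0$: then $g(x)>0$ for every $x\ge x_0$, and the assumption $g(x)\le 0$ forces $x<x_0$.

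To check $g(x_0)>0$, expand
\[
2b\log x_0 = 2b\log(4b)+2b\log\log(4b).
\]
The inequality $x_0>2b\log x_0$ is then equivalent to $2b\log(4b)>2b\log\log(4b)$, that is, $\log(4b)>\log\log(4b)$. This holds since $y>\log y$ for all $y>0$ (base $2$).

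One loose end is that $x$ might be smaller than $x_0$ yet lie outside the increasing region. That causes no trouble: such an $x$ is already below $x_0$, which is all we need.

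The only delicate point is the second case, namely verifying that $x_0$ lies in the increasing region and that $g(x_0)>0$; both reduce to the elementary facts $\log(4b)\ge 2$ and $y>\log y$. Combining the two cases gives $x\le\max(2a,\,4b\log(4b))\le 2a+4b\log(4b)$.
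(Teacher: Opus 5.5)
Your proof is correct and takes essentially the same route as the paper: the same case split into $x\le 2a$ versus $x\le 2b\log x$, followed by a comparison with the threshold $4b\log(4b)$. The only difference is cosmetic. The paper argues by contradiction, using that $\log t/t$ is decreasing for $t\ge e$ together with $\log(2B\log(2B))\le 2\log(2B)$ (where $B=2b$). You instead use that $t-2b\log t$ is increasing beyond $2b/\ln 2$ and check its sign at $x_0$ via $\log(4b)>\log\log(4b)$, which is the same inequality.
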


\begin{proof}
If \(x\le 2a\), then immediately
\[
  x\le 2a\le 2a+4b\log(4b).
\]
It remains to consider the case \(x>2a\). Then \(a<x/2\), and therefore
\[
  x\le a+b\log x
  <
  \frac{x}{2}+b\log x.
\]
Hence \(x<2b\log x\). Set \(B:=2b\). Since \(b\ge 1\), we have
\(B\ge 2\), and the last inequality becomes
\[
  x<B\log x.
\]

We now claim that \(x\le 2B\log(2B)\). Suppose, toward a contradiction,
that \(x>2B\log(2B)\). Since \(B\ge 2\), we have
\(2B\log(2B)\ge 4\log 4>e\), and hence \(x>e\). The function
\(t\mapsto \log t/t\) is decreasing for all \(t\ge e\), so
\[
  \frac{\log x}{x}
  <
  \frac{\log(2B\log(2B))}{2B\log(2B)}.
\]
Also,
\[
  \log(2B\log(2B))\le 2\log(2B),
\]
because this is equivalent to
\[
  2B\log(2B)\le (2B)^2,
\]
or equivalently \(\log(2B)\le 2B\), which follows from the elementary
inequality \(\log u\le u\) for \(u>0\). Therefore
\[
  \frac{\log x}{x}
  <
  \frac{2\log(2B)}{2B\log(2B)}
  =
  \frac1B.
\]
Multiplying by \(B x\), we get \(B\log x<x\), contradicting
\(x<B\log x\). Hence \(x\le 2B\log(2B)\).

Substituting \(B=2b\), we obtain
\[
  x\le 4b\log(4b).
\]
Together with the first case \(x\le 2a\), this gives
\[
  x\le 2a+4b\log(4b). \qedhere
\]
\end{proof}

\begin{lemma}
\label{lemma:vc-consistency}
Let \(k\ge 1\), \(d\ge 1\), and \(A\ge 2\). Suppose that
\[
  d \le k\log(Ad/k),
\]
where logarithms are to base \(2\). Then
\[
d \le 4k\log A.
\]
\end{lemma}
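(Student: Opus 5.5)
The plan is to normalize by $k$ and reduce to a one-variable inequality. Set $x := d/k > 0$ and $L := \log A$. Since $A \ge 2$ and logarithms are base $2$, we have $L \ge 1$. Dividing the hypothesis by $k$ gives
\[
x \le \log(Ax) = L + \log x .
\]
The goal becomes $x \le 4L$. I will argue this directly rather than through Lemma~\ref{lem:log-self-bound}. That lemma, applied with $a = L$ and $b = 1$, only yields $x \le 2L + 8$, which is weaker than $4L$ when $L < 4$.

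The first step is the easy case $x < 1$, i.e.\ $d < k$. Here we immediately get $d < k \le 4kL$, because $L \ge 1$.

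The main step argues by contradiction. Suppose $x > 4L$; in particular $x > 4$. Consider $g(t) := t - \log t$. Its derivative is $1 - 1/(t\ln 2)$, which is positive for $t > 1/\ln 2 \approx 1.44$, so $g$ is strictly increasing on $[4,\infty)$. Since $4L \ge 4$, this gives
\[
g(x) > g(4L) = 4L - 2 - \log L .
\]
It then suffices to show $4L - 2 - \log L \ge L$, i.e.\ $3L - 2 - \log L \ge 0$ for all $L \ge 1$. At $L = 1$ the left side equals $1 > 0$. Its derivative $3 - 1/(L\ln 2)$ is positive for $L \ge 1$, so the inequality holds throughout.

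Combining, $x - \log x > L$, which contradicts $x \le L + \log x$. Hence $x \le 4L$, and multiplying back by $k$ gives $d \le 4k\log A$. I expect no real obstacle. The only point needing care is the monotonicity check for the two auxiliary functions, together with not leaning on Lemma~\ref{lem:log-self-bound}, whose constant $2L+8$ is too weak for small $A$.
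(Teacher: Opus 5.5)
Your proof is correct, but it takes a different route from the paper's. The paper splits on whether $A \ge d/k$. If so, $\log(Ad/k) \le \log(A^2)$ gives $d \le 2k\log A$ directly. If not, $\log(Ad/k) < 2\log(d/k)$, so $t = d/k$ satisfies $t < 2\log t$. This forces $t < 4$, hence $d < 4k \le 4k\log A$.

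You instead assume $x > 4L$ and derive a contradiction. You use that $t - \log t$ is increasing on $[4,\infty)$, together with the auxiliary inequality $3L - 2 - \log L \ge 0$ for $L \ge 1$.

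The two approaches trade off as follows:
\begin{itemize}
\item The paper's split is essentially calculus-free, since it only uses $\log(Ax) \le 2\max\{\log A, \log x\}$. It also yields a slightly more informative dichotomy: either $d \le 2k\log A$ or $d < 4k$.
\item However, the paper asserts without justification that $t < 2\log t$ implies $t < 4$. Your argument checks each monotonicity claim explicitly by differentiation, so it is fully self-contained.
\end{itemize}

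Two small remarks. First, your separate case $x < 1$ is redundant: the contradiction argument only assumes $x > 4L$, so it already covers every $x > 0$. Second, you are right that Lemma~\ref{lem:log-self-bound} with $a = L$, $b = 1$ only gives $x \le 2L + 8$. That bound is too weak when $L < 4$, which is a good reason to argue directly.
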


\begin{proof}
We consider two cases.

First, suppose that
\[
  A \ge d/k .
\]
Then
\[
  d \le k\log(Ad/k)
  \le k\log(A^2)
  =
  2k\log A .
\]

Otherwise,
\[
  A < d/k .
\]
Then
\[
  d \le k\log(Ad/k)
  <
  k\log\bigl((d/k)^2\bigr)
  =
  2k\log(d/k).
\]
Equivalently, with \(t=d/k\),
\[
  t < 2\log t .
\]
This implies \(t<4\), and hence \(d<4k\). Since \(A\ge 2\), we have
\[
  4k \le 4k\log A .
\]

\end{proof}

\section*{Acknowledgements}

We would like to thank Steve Hanneke for insightful discussions.

\bibliographystyle{alpha}
\bibliography{refs}

\newpage

\appendix

\end{document}